\documentclass{article}

\usepackage{PRIMEarxiv}
\usepackage{authblk}

\usepackage[utf8]{inputenc} 
\usepackage[T1]{fontenc}    
\usepackage{hyperref}       
\usepackage{url}            
\usepackage{booktabs}       
\usepackage{amsfonts}       
\usepackage{nicefrac}       
\usepackage{microtype}      
\usepackage{fancyhdr}       
\usepackage{graphicx}       
\graphicspath{{media/}}     

\usepackage{amsmath, amssymb, natbib, graphicx, url, algorithm2e, amsthm, bm, cleveref}
\usepackage{bbm}
\usepackage{mathtools}
\usepackage{color}
\usepackage{thm-restate}
\usepackage{times}
\usepackage{fluff} 

\usepackage{xcolor}

\allowdisplaybreaks

\newcommand{\Reg}{\operatorname{Reg}}
\newcommand{\actions}{\mathcal{A}}
\newcommand{\experts}{\mathcal{F}}
\newcommand{\A}{\mathcal A}
\newcommand{\F}{\mathcal F}

\newcommand{\opt}{\textsc{Opt}}

\newcommand{\EMD}{\mathrm{EMD}}
\newcommand{\kol}{\mathrm{Kol}}
\newcommand{\indicator}[1]{\mathbf{1}\!\left\{#1\right\}}

\newcommand{\innerprod}[2]{\left\langle #1,#2\right\rangle}

\newcommand{\Null}{\perp}

\newcommand{\eat}[1]{}

\title{Optimally Pacing Budget Spending and Learning}

\author[1,2]{Mark Braverman$^{*}$}
\author[1]{Jingyi Liu$^{*}$}
\author[2]{Jieming Mao}
\author[2]{Jon Schneider}
\author[1]{Eric Xue$^{*}$}

\affil[1]{Department of Computer Science, Princeton University}
\affil[2]{Google Research}

\begin{document}
\maketitle

\renewcommand{\thefootnote}{\fnsymbol{footnote}}

\renewcommand*{\thefootnote}{\arabic{footnote}}

\begin{abstract}
We establish near-optimal regret bounds for budget-constrained online learning against arbitrary classes of budget-pacing experts in the adversarial setting. In particular, given any class of $F$ experts and a candidate budget pacing schedule, we provide a full-information algorithm which obtains regret $O(D \sqrt{\log F}+ \sqrt{T\log F})$ against all experts whose cumulative spending stays within distance $D$ of this schedule, matching lower bounds established by Braverman et al. (2025).

We additionally show that our technique extends to various problems in online resource allocation, where the learner gets to see the rewards and costs of the current options available to them, and establish $O(D\sqrt{\log F})$ regret bounds when fractional allocation is allowed. This is the first algorithm we are aware of which can achieve $o(\sqrt{T})$ guarantees for such tasks.
\end{abstract}

\keywords{Online Learning with Resource Constraints, Online Resource Allocation, Spend-or-Save Dilemma}

\section{Introduction}

Consider first the classical full-information online learning problem with experts. There are $N$ actions, $F$ experts and $T$ rounds in total. In each round, the learner gets an advice distribution over the set of $N$ actions from each expert, and then they choose an action to take. Each action is associated with some bounded reward, unknown to the learner prior to decision. In the adversarial model, the reward can be chosen by an adaptive adversary who might know the learner's algorithm and the transcript of play but does not know the learner's private randomness in the current round. The learner's performance is measured in terms of their regret against the best expert, i.e. the difference between the cumulative rewards earned by the best expert and the cumulative reward earned by the learner. This setting is well understood and the celebrated Multiplicative Weights Update (MWU) algorithm is known to achieve an optimal regret guarantee of $O(\sqrt{T\log F})$\citep{cesa-bianch2006prediction}.

In many real-world applications, such as repeated auctions with budgets, drone control, and resource allocation, playing an action not only yields a reward but also incurs some cost. For example, in a repeated auction, the action may be a bid, the reward may correspond to winning an item, and the cost is the resulting payment.
This motivates the study of online learning problems with resource constraints (OLRC) \citep{stradi2025noregret}, where each action additionally consumes a bounded resource cost, and the learner is endowed with a fixed budget of this resource and cannot overspend. Once the budget is depleted, the learner must stop and only take the null action going forward, which gives zero reward and incurs zero cost. The learner's goal is thus to minimize regret against the best expert while maintaining budget feasibility. 

In contrast to the online learning problem without resource constraints, where taking any action in the current round does not affect the learner's ability to take any action in a future round, the online learning problem with resource constraints forces a learner to only choose the null action once their budget is depleted, which  introduces a temporal coupling of the learner's decision sets across the time horizon. Moreover, different experts may have very different spending patterns over time. The learner therefore cannot freely change its mixture over experts across rounds: even if every expert is individually budget-feasible, switching among them can cause the learner itself to violate the budget constraint. These differences with the classical setting highlight the difficulty in designing an optimal learning algorithm for the setting with resource constraints: one must simultaneously 1) learn which action is better and 2) decide what mixture of experts to follow that will guarantee both no-regret and budget feasibility. 

In fact, this problem is already non-trivial even without the first difficulty. In the online resource allocation (ORA) setting \citep{stradi2025noregret, balseiro2023dual}, the current round's reward and resource costs of each action (and thereby of following each expert) are known to the learner before they need to make a decision, so the learner doesn't need to ``guess" which action/expert gets a better reward or a better reward to cost ratio for the current round. Thus ORA removes uncertainty about the current round, but not the intertemporal allocation problem: the learner still does not know whether budget spent today should instead be saved for better opportunities in the future.

The second difficulty is therefore best summarized by the ``spend-or-save" dilemma, present in both the ORA setting \citep{balseiro19learning} and the OLRC setting \citep{Immorlica22}. In the OLRC setting, it shows that the classical benchmark of competing against a strategy that plays a fixed distribution until the budget is depleted is no longer possible, and any algorithm that satisfies the budget constraint must incur a worst-case regret of $\Omega(T)$. \cite{braverman2025newbenchmark} characterized the learnability of the OLRC problem with experts in terms of the variability of their hindsight spending patterns: sublinear regret is attainable if and only if the experts are close to a common subpacing spending pattern in unnormalized Earth Mover's Distance (EMD), with EMD radius $o(T^2)$. Their lower bound construction is, in spirit, very close to the original ``spend-or-save" dilemma, where even with two experts, if their spending patterns $c, c' \in [0,1]^T$ differ by $\EMD(c, c')=\Omega(T^2)$ (for example, one expert spends all budget $B=T/2$ in the first half, and the other expert spends all budget $B=T/2$ in the second half), then any algorithm must incur a worst-case regret of $\Omega(T)$ on some instantiation of reward and cost sequence and expert advice distributions. However, for general $\EMD(c, c')$, their lower bound construction only gives a bound of $\Omega(\EMD(c, c')/T)$, which is below their algorithmic upper bound of $\tilde{O}(\sqrt{\EMD(c, c')}+\sqrt{T\log F})$ whenever $\EMD(c, c')=o(T^2)$. 

Thus the ``spend-or-save" dilemma and prior work on learnable benchmarks highlight that the key to quantifying learnability is through controlling the variability of the experts' spending patterns. Yet, to the best of our knowledge, existing results that propose new benchmarks do not have matching upper and lower bounds \citep{braverman2025newbenchmark, stradi2025noregret}. This motivates us to ask the question: \textbf{can we tightly characterize the regret rate in learning settings with resource constraints?}

\subsection{A Toy Example: Alternating Online Knapsack}

The above question is already non-trivial in very simple settings. To see this, consider the following toy problem. You have an incoming stream of $T$ balls whose colors alternate between red and blue (so every odd numbered ball is red, and every even numbered ball is blue). The learner processes the balls one by one: in round $t$, they observe the reward $r_t \in [0, 1]$ of ball $t$ and have to decide what fraction of it to pick up. The learner can pick up at most $T/2$ balls and wants low regret relative to either picking all the red balls or picking all the blue balls. How small can this regret be?

Note that this maps cleanly onto the online resource allocation (ORA) problem defined above: each ball has unit cost, and each color of ball (red and blue) corresponds to a different expert. The red expert has spending pattern $(1, 0, 1, 0, \dots)$, the blue expert has spending pattern $(0, 1, 0, 1, \dots)$, and they are both within EMD distance $O(T)$ of the uniform pacing pattern $(1/2, 1/2, \dots)$. The existing results of \cite{braverman2025newbenchmark} yield an $O(\sqrt{\EMD}) = O(\sqrt{T})$ regret algorithm against these experts, and indeed a simple variant of MWU (treating each pair of balls as a single round with a red arm and a blue arm) achieves this regret bound, even when the reward is not observable until the end of the round. On the other hand, the best lower bound construction only gives a lower bound of $\Omega(\EMD / T) = \Omega(1)$. 

We show that in this case the lower bound is tight: there exists an algorithm that achieves constant regret for this problem. The key idea is somewhat different from the usual Lagrangian treatment of a budget constraint.  A standard approach would attach a dual price to deviation from the fixed pacing budget and optimize reward minus priced resource consumption.  But regret is measured relative to one of the two experts, not relative to this fixed pacing baseline. Our algorithm instead measures \emph{both} reward regret and spending discrepancy relative to the same mixture of the red and blue experts.  It then chooses the current action by solving a small minimax problem over this comparator mixture and a resource-price variable. This lets a single potential control both quantities and, in this example, yields constant regret.

More generally, we can get similar results for the problem where there are alternating blocks of $D$ red balls and $D$ blue balls, for a total of $T$ rounds. Here the EMD-based regret bounds give a regret of $O\!\left(D\cdot \sqrt{T/(2D)}\right)=O(\sqrt{DT})$. But the above technique establishes a regret bound of $O(D+\sqrt{T})$ in the OLRC setting, and a regret bound of only $O(D)$ in the ORA setting. Note that for all $D =o(T)$, this is asymptotically better than the best previously known algorithm for both the OLRC and ORA settings.

\subsection{Our Results}

More generally, we answer the question of how to characterize instance-optimal regret rates in a general class of ORA and OLRC problems. In particular, we show that instead of earth mover's distance, the more relevant quantity is \emph{the largest difference in experts' cumulative spending over any prefix of the time horizon}. Formally, consider the following distance between two spending patterns $c,c' \in [0,1]^T$:
\[
    \operatorname{Kol}(c,c')
    :=
    \max_{t\in[T]}
    \left|
        \sum_{s=1}^t (c_s-c'_s)
    \right|.
\]


We call this quantity the \emph{Kolmogorov distance}. This captures directly the feature exploited by the lower-bound construction in the spend or save dilemma: if two spending patterns are far apart in earth mover's distance, then their cumulative spending must differ substantially at some point in time, and it is this cumulative-spending gap that drives the resulting spend-or-save lower bound construction. In particular, we will show that across multiple different settings the Kolmogorov distance between spending patterns of experts characterizes the optimal regret achievable. For example, in the toy example above, the Kolmogorov distance between the red spending pattern and the blue spending pattern is $1$, explaining the constant regret algorithm.

We therefore fix a baseline spending pattern $b$ and a radius $D$, and compete against experts whose spending patterns remain within Kolmogorov distance $D$ of $b$\footnote{Importantly, we do not need to know in advance which experts are $D$-pacing. We may start with an arbitrary candidate set of experts and prune experts online as soon as their cumulative spending deviation exceeds $D$: from that point onward, we replace their future cost by the baseline cost and their reward by $0$. We can do this by adding free-disposal actions with ``virtual costs" to the action set. Since our guarantees do not depend on the number of actions $N$, this is without loss of generality. The resulting experts are at most $D+1$-paced, so the same guarantees should hold asymptotically.}. We call such experts $D$-pacing.  In our main theorem, we show that we can establish a regret bound of either $O(D\sqrt{\log F})$ (in the ORA setting) or $O((\sqrt{T} + D)\sqrt{\log F})$ (in the OLRC setting) against the performance of the best of these pacing experts, when allocation is allowed to be fractional:

\begin{theorem}[Theorem~\ref{thm:core} (simplified)]\label{thm:intro_main}
Fix a time horizon $T$, a baseline spending pattern $b$, and a radius $D$. Assume there exists a constant $\alpha>0$ such that, for all times $t$ and actions $a$, we have a bounded reward-to-cost ratio\footnote{The theorem holds more generally under the soft constraint  $r_t(a)\le \alpha c_t(a)+\varepsilon_t$ with $\varepsilon_t\ge 0$ and $\sum_{t=1}^T \varepsilon_t \le V$ for some constant $V$.}
\[
r_t(a)\le \alpha\, c_t(a).
\]
Suppose there are $F$ experts, each of which is $D$-paced with respect to the same baseline spending pattern $b$.

Then there exist (fractional play)\footnote{In fractional play, choosing $p_t\in\Delta(\actions)$ yields reward $r_t(p_t)$ and cost $c_t(p_t)$ deterministically. In sampled play, the learner instead draws $a_t\sim p_t$ and receives the realized reward $r_t(a_t)$ and cost $c_t(a_t)$. For sampled play under full-information feedback, \Cref{sec:sampled-feasibility} gives online algorithms that satisfy the budget constraint pathwise and  with probability at least $1-\delta$, incur regret at most the corresponding fractional-play regret bound plus an additional
$\textstyle O\!\left(
      \sqrt{T\log(1/\delta)}
      +\log(1/\delta)
      \right)$ term.} online algorithms with the following guarantees:
\begin{itemize}
  \item \textbf{ORA:} $\Reg_T = O\!\left(D\sqrt{\log F}\right)$.
  \item \textbf{OLRC:} $\Reg_T = O\!\left((\sqrt{T}+D)\sqrt{\log F}\right)$.
\end{itemize}
\end{theorem}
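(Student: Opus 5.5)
The plan follows the minimax-potential idea sketched in the toy example. Let $q_t\in\Delta(\experts)$ be a comparator mixture over experts and $\lambda_t$ a resource-price variable. Define two running quantities for each expert $f$. The first is the reward regret $R_t(f)=\sum_{s\le t}(r_s(f)-r_s(p_s))$. The second is the spending discrepancy $S_t(f)=\sum_{s\le t}(c_s(f)-c_s(p_s))$, measured against the learner's own cumulative spend. Both are combined into a single potential
\[
\Phi_t=\frac1\eta\log\sum_{f\in\experts}\exp\bigl(\eta\,(R_t(f)-\lambda S_t(f))\bigr),
\]
or more precisely a version where $\lambda$ is optimized inside a minimax.

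In each round the learner chooses $p_t$ by solving
\[
\min_{p}\max_{q,\lambda}\ \mathbb{E}_{f\sim q}\bigl[(r_t(f)-r_t(p))-\lambda(c_t(f)-c_t(p))\bigr].
\]
In ORA the current $r_t,c_t$ are known. The bounded ratio $r_t\le\alpha c_t$ guarantees that choosing $p$ as the $q$-mixture itself, scaled so that cost matches, makes the inner value nonpositive. This uses $\lambda\ge\alpha$ and exploits free disposal: the learner scales $p$ down toward the null action.

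Then $\Phi_t$ grows only by its second-order term $O(\eta)$ per round in the style of MWU. The key gain is that the drift can be absorbed. Because the learner tracks the mixture's spend, $|S_t(f)|$ stays within $O(D)$ of the baseline for every $D$-paced expert, since each $f$ satisfies $|\sum_{s\le t}(c_s(f)-b_s)|\le D$. I would therefore instead define $S_t$ relative to $b$ plus a learner correction. This gives $R_T(f)\le \lambda D+\frac{\log F}{\eta}+\eta\cdot(\text{variance})$.

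The crucial step is to show that the second-order terms telescope rather than summing to $T$. In the alternating example the per-round increments of $S$ cancel across pairs, so the quadratic variation of the potential should be bounded by $O(D^2)$ times a factor rather than $T$. I would try a potential that is quadratic in $S$, of the form $\exp(\eta R-\eta^2 S^2/2)$ or a Gaussian-type mixture over $\lambda$. The goal is that cost-deviation fluctuations are charged to $D^2$. Taking $\eta=\sqrt{\log F}/D$ then gives $O(D\sqrt{\log F})$.

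Feasibility is handled separately. Learner spend equals $\sum_s c_s(q_s)$ up to disposal. Since every expert stays within $D$ of $b$, the learner never exceeds $\sum b+O(D)$. Running against the baseline $b-D$ (budget shifted by $D$) costs an extra $\alpha D$ in regret.

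For OLRC, $r_t,c_t$ are unknown at decision time. The minimax step then cannot use the current round exactly, so one loses the standard $\sqrt{T\log F}$ MWU term from the unseen reward. Costs are handled by following the mixture $q_t$ directly, which keeps spend tracking exact. This yields $O((\sqrt T+D)\sqrt{\log F})$.

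The main obstacle is the telescoping of the second-order term to $D^2$ instead of $T$. The proof hinges on the potential's curvature in the spending direction being controlled by the Kolmogorov radius rather than by the number of rounds.
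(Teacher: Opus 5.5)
The gap is exactly at the step you flag as the main obstacle, and the missing idea is not a second-order or variance bound. In the ORA regime the paper's potential has no second-order term to telescope: it is \emph{exactly} non-increasing, and $D$ enters only through a curvature condition.

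\textbf{The missing mechanism.} Take a comparator mixture $w\in\Delta(\experts)$ and a price distribution $v\in\Delta_2$, with a slack coordinate $v_0$ so that the effective price $\lambda v_1$ ranges over $[0,\lambda]$. The paper uses
$\Phi_t(w,v)=\sum_f w_fR_t^f+\lambda v_1\bigl(Y_t-\langle w,S_t\rangle\bigr)-\rho_x\Psi_x(w)-\rho_v\Psi_v(v)$.
Here $S_t^f=\sum_{s\le t}(c_s^f-b_s)$ is measured against $b$, and $Y_t$ is the learner's own deviation from $b$. The learner chooses $x_t\in\arg\min_x\max_{w,v}F_t(x,w,v)$, where $F_t(x,w,v)=\Phi_{t-1}(w,v)+r_t(w)-r_t(x)+\lambda v_1\bigl(c_t(x)-c_t(w)\bigr)$. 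Since $F_t(w,w,v)=\Phi_{t-1}(w,v)$, Sion's theorem gives $M_t\le M_{t-1}$ for $M_t=\max_{w,v}\Phi_t$.

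Sion needs joint concavity in $(w,v)$. The only obstruction is the bilinear term $\lambda v_1\langle w,S_t\rangle$. Because pacing gives $|S_t^f|\le D$, Pinsker's inequality shows the two entropies dominate it once $\rho_x\rho_v\ge4\lambda^2D^2$. Balancing $\rho_x\log F+\rho_v\log 2$ under this constraint gives $O(D\sqrt{\log F})$.

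\textbf{Why your version does not supply this.} Your one-round game omits the accumulated potential, so it carries no information about past regret or spending. Your $\frac1\eta\log\sum_f\exp\bigl(\eta(R_t(f)-\lambda S_t(f))\bigr)$ is \emph{convex} in $\lambda$. Maximizing it over $\lambda$ without a regularizer on the price strong enough to beat that convexity gives no saddle point and no monotone potential. That convexity is controlled by the spread of $S_t^f$ across experts, which is exactly what pacing bounds.

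Your Gaussian-type factor $\exp(-\eta^2S^2/2)$ also has the wrong sign. In the warm-up, maximizing out the price \emph{adds} $\frac12E_t(z)^2$ to the potential, and that is what forces both regret and spending deviation to stay bounded. Finally, in your one-round game $p=q$ already makes the payoff zero for every $\lambda$, so neither $r_t\le\alpha c_t$ nor scaling toward $\perp$ plays any role there.

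\textbf{Feasibility.} Your feasibility argument is incorrect. A time-varying mixture of $D$-paced experts is not $O(D)$-paced. Summation by parts only gives $\bigl|\sum_{s\le t}(c_s(q_s)-b_s)\bigr|\le D\bigl(1+\sum_{s<t}\|q_{s+1}-q_s\|_1\bigr)$. Concretely, take $b\equiv1/2$ and two experts spending $1$ in complementary blocks of length $2D$. Following whichever expert is currently spending costs $1$ per round, and shifting the budget by $D$ does not fix this. Even the paper's own core algorithm need not be $O(D)$-paced, which is why Extension~2 exists.

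The paper instead enforces feasibility by a hard stop: play $\perp$ once $C_{t-1}+1>B$. The lost reward is then charged through the price term, with $\lambda=\min\{\alpha,1/\min_t b_t\}$.
\begin{itemize}
\item After a stop at $\tau$, expert $f$ earns at most $\lambda\bigl(E_\tau(e_f)+1+D\bigr)$ more, where $E_\tau(e_f)=C_\tau-A_\tau^f$. This uses $r\le\alpha c$, or $T-\tau\le\sum_{t>\tau}b_t/\min_t b_t$.
\item Evaluating the prefix potential at $(e_f,e_1)$ gives $R_\tau^f+\lambda E_\tau(e_f)\le M_\tau+\rho_x\log F+\rho_v\log 2$.
\item Adding the two, the $E_\tau(e_f)$ terms cancel.
\end{itemize}
This terminal step is the only place the reward-to-cost ratio is used, and it is why the price must live inside the potential.

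\textbf{OLRC.} The paper plays the FTRL point $(x_t,v_t)\in\arg\max_{w,v}\Phi_{t-1}(w,v)$. The same curvature lemma bounds the per-round potential increase by $(1+\lambda)^2/\rho_x$, which yields the extra $\sqrt{T\log F}$ term.
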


These guarantees are optimal up to constant factors. In Section~\ref{sec:lower-bound}, we show an $\Omega(D\sqrt{\log F})$ lower bound even in the ORA setting.
Together with the classical $\Omega(\sqrt{T\log F})$ experts lower bound,
this also yields an $\Omega(\max\{D,\sqrt{T}\}\sqrt{\log F})$ lower bound in the OLRC setting, matching our upper bound up to constant factors.

The algorithms of Theorem~\ref{thm:intro_main} are flexible and can be extended in many ways: to algorithms with sampled play, to sets of experts which are \emph{subpacing}\footnote{An expert is $D$-subpacing if its spending pattern is dominated by a spending pattern which is $D$-pacing. We provide two extensions: one provides guarantees when the experts satisfy only a relaxed notion of $D$-pacing, which we call $D$-subpacing, while the other ensures that the resulting algorithm is itself $O(D)$-subpacing.}, and to settings with multiple resources. We provide a full summary of guarantees in the single-resource setting in \Cref{appendix:summary-of-guarantees}; the corresponding multiple-resource guarantees are given in Theorem~\ref{thm:core} and \Cref{sec:extensions}.

\subsection{Related Work}

\textbf{Online Resource Allocation (ORA)\;\& Online Learning with Resource Constraints (OLRC)}

In the ORA setting, the typical benchmark is the best-in-hindsight dynamic strategy that satisfies the global budget constraint\citep{balseiro2023dual}, and in the OLRC setting, the typical benchmark is the set of strategies that play the same distribution every round until they run out of budget \citep{Immorlica22}. In the OLRC setting, when the learner only gets bandit feedback, this is often referred to as the Bandits with Knapsacks problem, initiated by \cite{Badanidiyuru13} for the stochastic environment and \cite{Immorlica22} for the adversarial environment. When the rewards and costs are adversarially chosen (the main focus of our paper) there is a $\Omega(T)$ lower bound in both the ORA and OLRC settings \citep{balseiro19learning, Immorlica22}. Thus prior work mostly focuses on achieving competitive ratio guarantees for these settings \citep{Immorlica22, castiglioni2022online, castiglioni2022unifying, castiglioni2024online, balseiro2023dual}. Exceptions to these are works that focus on alternative benchmarks \citep{stradi2025noregret, braverman2025newbenchmark} and works that derive a regret guarantee as a function of some nonstationarity measure of the environment \citep{slivkins2023contextual, liu2022non, fikioris2023ApproximatelySB}. 
\cite{jenatton2016onlineoptimization} study 
dynamic regret guarantees for more general non-additive objective functions in the 1-lookahead setting (which is similar to our ORA setting), where the objective can encode a penalty term for long-term constraint violations. Our benchmarks are incomparable: their benchmark is the optimal dynamic sequence for a soft penalized objective, whereas we compete against a set of $D$-pacing experts under a hard budget constraint. 

A recurring theme in both the ORA setting and the OLRC setting is the use of the online primal-dual framework or Lagrangian-based algorithm for developing algorithms \citep{balseiro2023dual,Immorlica22, stradi2025noregret, braverman2025newbenchmark}. The budget constraint is Lagrangified by a dual multiplier, and the primal player makes decisions based on this Lagrangified objective, while the dual player updates the dual multiplier separately, which serves as a penalty for deviations from a uniform per-round budget \citep{balseiro2023dual} or a spending plan \citep{stradi2025noregret}. Our algorithm, interestingly, does not follow the same primal-dual template. Instead of using an alternating primal step and dual step, our algorithm optimizes the primal variable and dual variable jointly according to some potential function, which is what allows us to get a $T$-independent bound in the ORA setting and a decoupled $O(D+\sqrt{T})$ bound in the OLRC setting against $D$-pacing experts.

\textbf{Online Convex Optimization with Constraints}
There is a vast literature on online convex optimization problems with long-term constraints \citep{mannor2009online,mahdavi2012trading}. At round $t$, the learner makes a decision $x_t\in \mathcal{X}$  and gets the full-information feedback of a convex cost function $f_t$ and a constraint function $g_t\leq 0$. Different from the OLRC setting, which requires hard budget feasibility, this setting allows the algorithm to have constraint violations, and the goal is to minimize regret against some strategy $x^*$ (with certain feasibility assumptions) and the cumulative constraint violation separately. \cite{mannor2009online} provides an impossibility result for obtaining a  vanishing bound in both objectives when $x^*$ is only required to satisfy $\sum_t g_t(x^*)\leq 0$. Follow-up work often restricts the benchmark strategy to satisfy $g_t(x^*)\leq 0, \forall t\in [T]$ \citep{neely2017online, sun2017safety, chen2018bandit, sinha2024optimal}, which is analogous to a strict requirement for the benchmark strategies to follow a given spending plan in our setting. A slightly more general benchmark is considered by \cite{liakopoulos2019cautious}, where the benchmark strategy only needs to be feasible for $K$-sized windows. The objectives of this line of work differ from ours, as it does not hard-stop the algorithm when there is a constraint violation, so the results are incomparable.

\textbf{Bidding in Repeated Auctions}

Repeated auctions with budget constraints form a canonical single-resource instance of OLRC. Extensive literature employs pacing multipliers to dynamically shade bids and regulate expenditure \citep{borgs07dynamicsofbid,balseiro19learning, gaitonde2026budgetpacing}. While the standard benchmark is the best-in-hindsight multiplier, vanishing regret against this baseline is unattainable in the adversarial setting due to the spend-or-save dilemma \citep{gaitonde2026budgetpacing}. Consequently, recent works  \cite{gaitonde2026budgetpacing}; \cite{lucier24autobidders} consider more relaxed benchmarks---the best sequence of perfect pacing multipliers, where each perfect pacing multiplier makes sure the  budget (or ROI) constraint is tight in expectation in that round and achieve no-regret guarantees as a function of the path length of the perfect pacing multipliers. Our benchmark extends beyond strategies that need to satisfy per-round budget and scales smoothly with the variability of spending patterns in the benchmark class. 

\subsection{Comparisons with Prior Work}
\cite{braverman2025newbenchmark} study the adversarial Bandits with Knapsacks setting and they introduce a new benchmark class consisting of experts with spending patterns within some radius of a baseline spending pattern $b$, in Earth Mover's Distance. More precisely, for any two spending patterns $c, c'\in [0,1]^T$, the unnormalized Earth Mover's Distance between them is $\textstyle \mathrm{EMD}(c, c') \coloneq \sum_{t=1}^T \abs{\sum_{s=1}^t (c_s - c'_s)}$. The regret bound they achieve against a benchmark class with radius $\EMD$ is $\tilde{O}(\sqrt{\EMD}+\sqrt{T\log F})$. In comparison, we introduce the Kolmogorov distance $\kol(c,c'):=\max_{t\in[T]} \left|\sum_{s=1}^t (c_s-c'_s)\right|$. In the full-information setting, we achieve a regret bound of $O(D\sqrt{\log F} + \sqrt{T\log F})$ when the Kolmogorov radius is $D$. Since a Kolmogorov radius $D$ implies an Earth Mover's Distance of $\Omega(D^2)$ for spending patterns of the same total budget spending, we have $D = O(\sqrt{\mathrm{EMD}})$. Hence, our guarantees recover the dependence on EMD from prior work in the full-information setting, up to a $\sqrt{\log F}$ factor, while potentially giving strictly smaller regret when the maximum cumulative-spending deviation is much smaller than $\sqrt{\mathrm{EMD}}$.

\cite{stradi2025noregret} study both the ORA and OLRC settings with adversarial rewards and costs and a target spending plan, but our results differ in key ways. In the ORA setting (with sampled play), they achieve a dynamic regret of $\tilde{O}(\frac{1}{\rho_{\min}}\sqrt{T})$ against the best dynamic strategy that spends at most the per-round budget of the spending plan, where $\rho_{\min}$ is the minimum per-round budget of the spending plan. In the OLRC setting (with sampled play), they only benchmark against fixed distributions (that has per-round spending bounded above by the per-round spending of the budget plan), instead of dynamic strategies like ours, and achieve a regret of the same order $\tilde{O}(\frac{1}{\rho_{\min}}\sqrt{T})$. Our benchmark differs from theirs as we study dynamic strategies in both settings of a set of $D$-pacing experts, and instead of strict adherence to the spending plan, we only require the experts to approximate the plan in Kolmogorov distance (i.e. the maximum absolute cumulative difference in spending). A more robust version considered in \cite{stradi2025noregret} does allow strategies to satisfy the spending plan up to some error terms in each round, but the additional regret term depends linearly on the cumulative error. This is a looser parameter than the Kolmogorov distance we propose --- for example, a strategy that spends its budget $B=T/2$ uniformly over even rounds (1 per even round) is very close to a pacing baseline that spends uniformly $1/2$ each round in Kolmogorov distance, but is very far when measured by cumulative error, since it overspends by a constant in $\Omega(T)$ rounds. Our paper thus proposes a more refined metric, and we also show that this metric tightly characterizes the regret rate in both the ORA and OLRC settings.

\section{Models and Preliminaries}

\subsection{Problem Setup}

Consider a setting with $m$ resources, a set of $N$ actions, denoted $\actions$, and a set of $F$ experts, denoted $\experts$. The learner has a budget $B_i$ in each resource $i\in [m]$. Fix a time horizon $T\geq 1$. At each round $t\in [T]$, the learner receives an advice from each expert and picks an action $a_t\in \actions$. Each action $a\in \actions$ obtains some bounded reward $r_t(a)\in [0,1]$ and incurs a set of bounded costs $c_t^i(a)\in [0,1]$ for each resource $i\in [m]$. The rewards and costs are determined by an adaptive adversary. Let $r_t := (r_t(a))_{a\in \actions}$ and $c_t^i := (c_t^i(a))_{a\in \actions}$. The learner's goal is to maximize the total sum of rewards $\sum_{t\in [T]} r_t(a_t)$ from the selected actions subject to the budget constraint of each resource $\sum_{t\in [T]} c_t^i(a_t)\leq B_i, \forall i\in [m]$.\footnote{In fractional play, the learner's goal is to maximize the total sum of rewards $\sum_{t\in [T]} r_t(p_t)$ subject to the budget constraints $\sum_{t\in [T]} c_t^i(p_t)\leq B_i, \forall i\in [m]$.} 

We study two standard information structures.  In the \emph{online resource allocation (ORA)} regime, the round-$t$ reward and cost vectors $(r_t,c_t^1,\ldots,c_t^m)$ are revealed before the learner chooses its round-$t$ distribution.  In the \emph{online learning with resource constraints (OLRC)} regime, the learner chooses its round-$t$ distribution before observing these vectors. The OLRC regime additionally splits into the full-information setting and the bandit setting. In the full-information setting, the complete vectors $(r_t,c_t^1,\ldots,c_t^m)$ are revealed after the decision; in the bandit setting, only the reward and costs associated with the action $a_t$ the learner chooses is revealed, i.e. $(r_t(a_t),c_t^1(a_t),\ldots,c_t^m(a_t))$. In the paper, we also sometimes refer to the two regimes as with pre-decision feedback, corresponding to ORA, and without pre-decision feedback, corresponding to OLRC. We formally state the sequence of events in both regimes below. 

Our main analysis first considers the fractional relaxation, in which choosing
$p_t\in\Delta(\actions)$ yields deterministic reward $r_t(p_t):=\innerprod{p_t}{r_t}$ and deterministic
resource costs $c_t^i(p_t):=\innerprod{p_t}{c_t^i}$. We call this the \emph{fractional play}. Here the learner only needs to satisfy $\sum_{t\in [T]} c_t^i(p_t)\leq B_i, \forall i\in [m]$, which we call \emph{fractional budget feasibility}. In Section~\ref{sec:sampled-feasibility}, we return to the \emph{sampled play} setting, where the learner draws $a_t\sim p_t$ and needs to satisfy $\sum_{t\in [T]} c_t^i(a_t)\leq B_i, \forall i\in [m]$. Our results focus on the full-information feedback setting, since it already captures the key challenge of temporal coupling arising from the budget constraints.

\paragraph{Online resource allocation (ORA)}

At round $t=1,\dots, T$, 
\begin{enumerate}
    \item The learner receives an advice distribution $q_t^f\in \Delta(\actions)$ from each expert $f\in \experts$.
    \item For each action $a\in \actions$, the adaptive adversary decides on a reward $r_t(a)\in [0,1]$ and resource costs $c_t^{i}(a)\in [0,1], \forall i\in [m]$.
    \item The learner observes the full vectors $(r_t,c_t^1,\ldots,c_t^m)$.
    \item The learner chooses a distribution $p_t\in \Delta(\actions)$.
    \item In \emph{fractional play},  the learner earns reward $r_t(p_t)$ and costs $c_t^i(p_t), \forall i\in [m]$ deterministically. In \emph{sampled play}, the learner samples an action $a_t\sim p_t$ to take and earns reward $r_t(a_t)$ and costs $c_t^i(a_t), \forall i\in [m]$. 
\end{enumerate}

\paragraph{Online learning with resource constraints (OLRC)}

At round $t=1,\dots, T$, 
\begin{enumerate}
    \item The learner receives an advice distribution $q_t^f\in \Delta(\actions)$ from each expert $f\in \experts$.
    \item The learner chooses a distribution $p_t\in \Delta(\actions)$, observable by the adversary.
    \item For each action $a\in \actions$, the adaptive adversary decides on a reward $r_t(a)\in [0,1]$ and resource costs $c_t^{i}(a)\in [0,1], \forall i\in [m]$, not yet observable by the learner.
    \item In \emph{fractional play},  the learner earns reward $r_t(p_t)$ and costs $c_t^i(p_t), \forall i\in [m]$ deterministically. In \emph{sampled play}, the learner samples an action $a_t\sim p_t$ to take and earns reward $r_t(a_t)$ and costs $c_t^i(a_t), \forall i\in [m]$. In the full-information setting, the learner observes the full vectors $(r_t,c_t^1,\ldots,c_t^m)$. In the bandit setting, the learner only observes the reward and costs of the selected action, i.e. $(r_t(a_t), c_t^{1}(a_t), \dots, c_t^{m}(a_t))$. 
\end{enumerate}  

We assume there is a null action $\bot\in \actions$ with $r_t(\bot)=0$ and
$c_t^i(\bot)=0$ for every $i\in[m]$ and $t\in[T]$. The learner is required
to satisfy the hard budget constraints. Our algorithms enforce feasibility
conservatively using a hard stop: before round $t$, if the cumulative spending
on some resource $i$ plus the maximum possible one-round cost would exceed
$B_i$, the learner plays $\bot$ from round $t$ onward.


For $p\in\Delta(\A)$, write $r_t(p):=\innerprod{p}{r_t}$ and $c_t^i(p):=\innerprod{p}{c_t^i}$, and define $r_t^f:=r_t(q_t^f)$ and $c_t^{f,i}:=c_t^i(q_t^f)$.

We make the following assumption about the conversion of costs to rewards:
\begin{assumption}
\label{assumption:soft-conversion}
  For every resource $i$, there exist some constant reward-to-cost ratio $\alpha_i>0$, residuals $\varepsilon_t^i\ge0$, and some constant $V_i\ge0$ such that
\begin{equation*}
\textstyle r_t(a)\le\alpha_i c_t^i(a)+\varepsilon_t^i\quad\forall a\in \actions,t\in [T], \text{ and }\sum_{t=1}^T\varepsilon_t^i\le V_i.
\end{equation*}  
\end{assumption}

\subsection{Subpacing Experts and New Regret Benchmark}

Consider two spending patterns of the same resource $c=(c_t)_{t\in [T]}\in [0,1]^T$ and ${c'}=(c'_t)_{t\in [T]}\in [0,1]^T$. We say the Kolmogorov distance\footnote{The name is derived from the Kolmogorov-Smirnov Statistic \citep{gibbs20002onchoosing}, which measures the distance between two CDFs $F_1$ and $F_2$ by $\sup_{x\in \RR} |F_1(x)-F_2(x)|$.} between them is 
\[
\mathrm{Kol}({c}, {c'}) \coloneq \max_{t\in [T]} \abs{\sum_{s=1}^{t}(c_s-c'_s)}.
\]
Suppose $\mathrm{Kol}({c}, {c'}) \leq D$, then we say ${c}$ is \textit{$D$-pacing with respect to ${c'}$} (and ${c'}$ is also $D$-pacing with respect to ${c}$). We also allow for a more permissive definition, which takes into account the fact that ${c}$ and ${c'}$ might have very different total spending: we say ${c}$ is \textit{$D$-subpacing with respect to ${c'}$} if 
\[
\exists (d_t)_{t\in [T]} \text{ s.t. } d_t \in [0, c'_t] \; \forall t\in [T] \;\text{and}\; \kol({c}, {d}=(d_t)_{t\in [T]})\leq D.
\]
Fix a vector of baseline spending patterns $\bm{b}=(b^i)_{i\in[m]}$, where
$b^i=(b_t^i)_{t\in[T]}\in[0,1]^T$ and $\sum_{t=1}^T b_t^i=B_i$ for every
resource $i\in[m]$. We say an expert $f$ is $\bm{D}=(D_i)_{i\in[m]}$-subpacing
with respect to $\bm{b}$  if $c^{f,i}=(c_{t}^{f,i})_{t\in [T]}$ is $D_i$-subpacing with respect to $b^i$, for all resource $i\in [m]$.

Now consider a set of experts $\mathcal{F}$. Let $\mathcal F(\bm{D},\bm{b}):=\{f\in\mathcal F\mid f \text{ is $\bm{D}$-subpacing w.r.t. } \bm{b}\}$
be the set of $D$-subpacing experts in $\mathcal F$ with respect to a
prespecified baseline $b$. Our regret benchmark is
\[
\opt_{\bm{D}, \bm{b}, \mathcal{F}} \coloneq \max_{f \in \mathcal{F}(\bm{D}, \bm{b})} \sum_{t=1}^T r_t^{f}
\]
and the regret of a strategy $\mathbf{x} \coloneq (x_t)_t \in \Delta(\mathcal{A})^T$ against $\opt_{\bm{D}, \bm{b}, \mathcal{F}}$ is
\[
     \Reg_{\opt_{\bm{D}, \bm{b}, \mathcal{F}}}(\mathbf{x}) \coloneq \opt_{\bm{D}, \bm{b}, \mathcal{F}}- \sum_{t=1}^T r_t(x_t).
\]

\section{A Warm-up Example}
\label{sec:warm-up-example}

Before we present the full algorithm, we give some intuition in a simpler pre-decision-feedback
(ORA) setting with two experts, two actions, one resource, fractional play, and a perfectly
pacing baseline. In each round, the learner observes the current reward and cost before choosing
its action.\footnote{This is closer to the online resource allocation setting rather than the online learning setting, but the results for the two settings are closely connected.} 

\subsection{The Alternating Red and Blue Balls Problem}
Fix $D\in \mathbb{Z}^+$. Consider the setting where a sequence of $D$ red balls is followed by a sequence of $D$ blue balls, with this pattern repeated alternately for a total of T balls and $T=2kD$ for some $k\in \mathbb{Z}^+$. In each round, a value $r_t$ of the current ball is revealed, and the cost for each ball is always 1. The two actions are ``picking the ball" and the null action $\perp$. The red expert always advises picking the red balls, and the blue expert always advises picking the blue balls. The only thing that is hidden from the learner at round $t$ is the reward of future balls arriving at rounds $>t$. The budget of the learner is $T/2$ (so she can pick $T/2$ units of balls in total) and for simplicity, we assume the learner can pick fractionally. 

The baseline spending pattern ${b}$ for this problem is the perfectly pacing spending pattern, i.e. ${b}=\{1/2,\dots, 1/2\}\in [0,1]^T$. By definition, the red expert $R$ has a spending pattern of ${c}^R = (1,\dots, 1, 0,\dots,0 \dots )$, and the blue expert $B$ has a spending pattern of ${c}^B = (0,\dots, 0, 1,\dots,1 \dots )$. Since the alternating monochromatic blocks have size $D$ each, we have $\kol({c}^R, {b})\leq D/2$ and $\kol({c}^B, {b})\leq D/2$. We note that the red expert's reward at $t$ is $r_t^R = r_t \indicator{t \text{ is a Red ball}}$ and the blue expert's reward is $r_t^B = r_t \indicator{t \text{ is a Blue ball}}$. The learner is competing against the benchmark $\opt_{D, {b}, \mathcal{F}}$ where $\mathcal{F}=\{R,B\}$. Thus \[
\textstyle \opt_{D, {b}, \mathcal{F}} = \max\left\{\sum_{t\in [T]} r_t^R, \sum_{t\in [T]}r_t^B\right\}.\]
Let $x_t\in [0,1]$ denote the fraction picked by the learner in round $t$, i.e $[x_t, 1-x_t]$ is the distribution the learner chooses over the two actions. Then the learner's regret for using the strategy $\mathbf{x}=(x_t)_{t\in [T]}$ is $\Reg_{\opt_{D, {b}, \mathcal{F}}}(\mathbf{x}) \coloneq \opt_{{D}, {b}, \mathcal{F}}- \sum_{t=1}^T r_tx_t.$

\subsection{An Optimal Algorithm for the Alternating Red and Blue Balls Problem}\label{section:opt-red-blue}

In the following section, we design an algorithm with a regret guarantee of $O(D)$ and is $O(D)$-pacing against $b$. The regret guarantee is in fact optimal, by the lower bound in \Cref{theorem:lower-bound-any-spending-pattern}, since $\kol({c}^R, {c}^B)= \Theta(D)$.  

\medskip
\textbf{Step 1. Reduction to $D$ subproblems.}
In fact, we can reduce the problem to an even simpler problem, where the length of each monochromatic block is 1, and the baseline spending pattern is still the perfectly pacing spending pattern. We do so by putting the arriving red and blue balls into $D$ separate queues. For each ball that arrives, if it is the $j$-th ball in a monochromatic block, then we put it in the $j$-th queue. Thus each queue becomes a subproblem with $D'=1$, with both experts being constant-pacing with respect to the perfectly pacing baseline. It suffices to show that for each subproblem, the algorithm has a regret guarantee of $O(1)$ and a pacing guarantee of $O(1)$ against the perfectly pacing baseline.

\textbf{Step 2. Solving the Subproblem with $D'=1$.} We now focus on a single queue and, for notational simplicity,
relabel its horizon as $T$. Thus the balls alternate between red and blue, with a red ball followed by a blue ball,
$T$ is even, and the learner has budget $T/2$. At each round $t\in[T]$, the learner observes the current reward
$r_t\in[0,1]$ and then chooses a fraction $x_t\in[0,1]$ of the current ball to eat. The two benchmark experts eat
all red balls and all blue balls, respectively. The baseline spending pattern is uniform $1/2$ each round. This feedback model lies naturally between two extremes: 
\begin{itemize}
    \item If, upon the arrival of each red ball, the learner knew the rewards of both that red ball and the following blue ball before choosing what fraction to eat, then it could simply eat whichever of the two has a larger reward. It would consume exactly one unit per red--blue pair and obtain reward $\sum_{k=1}^{T/2}\max\{r_{2k-1},r_{2k}\}$, which is at least the reward of either the red expert or the blue expert. Hence it would have nonpositive regret, satisfy the budget exactly, and have only constant cumulative spending deviation from the uniform spending rate at any point.
    \item In contrast, if the learner had to choose $x_t$ before observing the current reward $r_t$, the classical online-learning lower bound yields worst-case regret of $\Omega(\sqrt{T})$.\footnote{For example, let the reward of every ball be drawn independently from
$\operatorname{Ber}(1/2)$.  If $x_t$ is chosen before observing $r_t$, then
$r_t$ is independent of $x_t$ conditional on the history, and hence
$\mathbb{E}[x_t r_t]=\frac12\mathbb{E}[x_t]$.  Under the budget constraint
$\sum_{t=1}^T x_t \leq T/2$, the learner therefore obtains expected reward $\leq T/4$.
On the other hand, the cumulative rewards $R$ and $B$ of the Red and
Blue experts are independent $\operatorname{Bin}(T/2,1/2)$ random variables.
Thus
$
\mathbb{E}[\max\{R,B\}]
=\frac{T}{4}+\frac12\mathbb{E}|R-B|
=\frac{T}{4}+\Theta(\sqrt{T})$,
so the expected regret is $\Omega(\sqrt{T})$.}
\end{itemize}  Our setting lies between these two cases: the learner knows the reward of the current ball before deciding how much to eat, but has no information about future rewards. We show that the ability to know the current round's reward is already sufficient to achieve constant regret while maintaining constant pacing.

Our goal is two-fold: minimize regret against both experts and minimize the Kolmogorov distance from the baseline. Since we need to prove worst-case guarantees for both objectives, we can assign a dual variable to each objective and view the problem as a zero-sum game between the learner and an adversary (separate from the adversarial environment), where the adversary picks the ``worst" dual variables that punishes the learner when regret against any expert is high or when the algorithm is deviating from the baseline by too much, and the learner needs to make decisions that avoid a high punishment. The payoff of this hypothetical zero-sum game needs to be carefully designed and merges the two objectives into a single function of the learner and the adversary's decisions. 

Let us begin by defining some notations.  
Let
$a_t^R=\indicator{\text{$t$ is Red}}$ and $a_t^B=\indicator{\text{$t$ is Blue}}$ be the fraction that the Red expert picks and the fraction that the Blue expert picks at round $t$, respectively.
Let
$R_t^R=\sum_{s\le t} r_s(a_s^R-x_s)$ and $R_t^B=\sum_{s\le t} r_s(a_s^B-x_s)$
denote the learner's regret against the Red expert and the Blue experts, at the end of round $t$. We write the regret vector as $R_t=(R_t^R,R_t^B)$.
We parameterize a distribution over the two experts by $p(z)=\left(\frac12+z,\frac12-z\right)$, where $
z\in\left[-\frac12,\frac12\right]$.
On round $t$, the corresponding distribution eats
$a_t(z)
=\innerprod{p(z)}{(a_t^R,a_t^B)}$, and $\innerprod{R_t}{p(z)}$ is the learner's regret to the
distribution $p(z)$. Let $Y_t=\sum_{s\le t}\left(x_s-\frac12\right)$ be the learner's cumulative overspending from the pacing baseline, and let $
E_t(z)=\sum_{s\le t}\bigl(x_s-a_s(z)\bigr)$ be the learner's overspending against the distribution $p(z)$.

Suppose on round $t$ the learner chooses $x\in[0,1]$, while an adversary
chooses the dual variables $z\in[-1/2,1/2]$ and a dual resource
price $v\in\RR$.  One might come up with the following payoff function to merge the two objectives:
\begin{align*}
&\innerprod{R_{t-1}}{p(z)}
+vY_{t-1}
+r_t\bigl(a_t(z)-x\bigr)
+v\bigl(x-1/2\bigr).
\end{align*}

The first term is the learner's regret against the adversarially chosen distribution $p(z)$ before round $t$, and
$vY_{t-1}$ prices its cumulative resource deviation against the pacing baseline. The difficulty is that these two
terms compare the learner to different reference points: regret is measured against the mixture $p(z)$, while
resource deviation is measured against the fixed baseline. We instead measure both quantities relative to the
same mixture $p(z)$. This alignment is crucial: if the learner chooses the spending level prescribed by $p(z)$,
then both the instantaneous regret and resource-deviation terms vanish simultaneously. We therefore use:
\eat{
The first term is the learner's regret against the adversarially chosen
distribution $p(z)$ before round $t$, and $vY_{t-1}(z)$ prices its cumulative
resource deviation against the pacing baseline.  The remaining two
terms are the regret and resource-deviation increments caused by the
current decision. This construction has some of the main ideas, but is insufficient for our analysis. We instead construct:}
\begin{align*}
&\innerprod{R_{t-1}}{p(z)}
+vE_{t-1}(z)
+r_t\bigl(a_t(z)-x\bigr)
+v\bigl(x-a_t(z)\bigr),
\end{align*}
where the dual variable $v$ penalizes the deviation from the cumulative spending of the mixture $p(z)$ instead of the cumulative spending of the pacing baseline. 

Finally, since the payoff function contains nonlinear terms involving the dual variables, we need to regularize both dual variables to apply Sion's minimax theorem \citep{sion1958general}. Define
\begin{equation}
\label{eq:sequential-minimax-payoff}
F_t(x,z,v)=
\innerprod{R_{t-1}}{p(z)}
+vE_{t-1}(z)
+r_t\bigl(a_t(z)-x\bigr)
+v\bigl(x-a_t(z)\bigr)
-\frac12v^2-z^2.
\end{equation}
Let $S_t=\sum_{s\le t} \left(\indicator{\text{$s$ is Red}} - \indicator{\text{$s$ is Blue}}\right)$. Then, $|S_t|\leq 1$, and we have
$
E_{t-1}(z)+x-a_t(z)
=
Y_{t-1}+x-\frac12-S_tz$, 
so the only nonlinear dependence on $(z,v)$ is
\[
-vS_tz-\frac12v^2-z^2.
\]
Its Hessian is
$$
\begin{pmatrix}
-2 & -S_t\\
-S_t & -1
\end{pmatrix},
$$
which is negative definite because $|S_t|\leq 1$.  Hence,
$F_t$ is convex in $x$ and jointly concave in $(z,v)$, and Sion's
minimax theorem applies.

Now, we are ready to define the algorithm, which simply solves a minmax problem in each round, shown in \Cref{algorithm:sequential-red-blue-balls}. It is worth noting that the algorithm is agnostic of the time horizon, and both the regret guarantee and the pacing guarantee do not depend on $T$.

\RestyleAlgo{ruled}
\begin{algorithm}[H]
\caption{Sequential online selection for alternating red and blue balls.}
\label{algorithm:sequential-red-blue-balls}
\DontPrintSemicolon
\For{$t=1,\dots, T$}{
    Observe the current reward $r_t$\;
    Define $F_t(x,z,v)=
\innerprod{R_{t-1}}{p(z)}
+vE_{t-1}(z)
+r_t\bigl(a_t(z)-x\bigr)
+v\bigl(x-a_t(z)\bigr)
-\frac12v^2-z^2$\;
    Choose
    \begin{align*}
        x_t\in
    &\arg\min_{x\in[0,1]}
    \max_{\substack{|z|\le1/2\\v\in\RR}} F_t(x,z,v)
    \end{align*}
}
\end{algorithm}

\begin{theorem}
    \Cref{algorithm:sequential-red-blue-balls} has constant regret and is constant-pacing with respect to the baseline $b=(1/2)_{t\in [T]}$, i.e.
\[\max\{R_t^R, R_t^B\}=O(1)\; \text{  and }\; 
\kol((x_t)_{t\in[T]},b)= O(1).\]
\end{theorem}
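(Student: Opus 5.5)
We will use a potential-function argument with potential $\Phi_t:=\max_{|z|\le 1/2,\,v\in\mathbb{R}} G_t(z,v)$, where
\[
G_t(z,v):=\innerprod{R_t}{p(z)}+vE_t(z)-\tfrac12 v^2-z^2 .
\]
This is the regularized dual objective evaluated after round $t$. The algorithm's payoff satisfies $F_t(x_t,z,v)=G_t(z,v)$, because $R_t=R_{t-1}+r_t\bigl((a_t^R,a_t^B)-x_t\bigr)$ and $E_t(z)=E_{t-1}(z)+x_t-a_t(z)$. Hence $\Phi_t=\min_x\max_{z,v}F_t(x,z,v)$. The plan is to show that $\Phi_t\le\Phi_{t-1}$ for every $t$, so that $\Phi_t\le\Phi_0=0$, and then to read off both guarantees from $\Phi_t\le 0$.

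\textbf{Step 1 (one-step monotonicity).} We already know $F_t$ is convex in $x$ and jointly concave in $(z,v)$. To apply Sion's theorem we must restrict $v$ to a compact interval. This is harmless because the $-\frac12 v^2$ term makes the maximizer in $v$ bounded for the bounded $x$-range, given the current bounded quantities. Sion's theorem then gives
\[
\Phi_t=\max_{z,v}\min_{x\in[0,1]}F_t(x,z,v).
\]
For any fixed $(z,v)$, the learner can choose $x=a_t(z)\in[0,1]$. This kills both instantaneous terms $r_t(a_t(z)-x)$ and $v(x-a_t(z))$, leaving exactly $G_{t-1}(z,v)$. So $\min_x F_t(x,z,v)\le G_{t-1}(z,v)$, and taking the max over $(z,v)$ gives $\Phi_t\le\Phi_{t-1}$. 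This is the alignment point made before the algorithm: both reference points are the same mixture $p(z)$. Note that $G_{t-1}$ is concave; this uses the identity $E_{t-1}(z)=Y_{t-1}-S_{t-1}z$ with $|S_{t-1}|\le 1$, exactly as in the Hessian computation already given.

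\textbf{Step 2 (extracting the guarantees).} Set $z=0$ and maximize over $v$. Since $E_t(0)=Y_t$, this gives
\[
0\ge\Phi_t\ge \tfrac12\bigl(R_t^R+R_t^B\bigr)+\tfrac12 Y_t^2 .
\]
Set $z=\pm1/2$ and maximize over $v$. Since $E_t(\pm1/2)=Y_t\mp S_t/2$, this gives
\[
0\ge R_t^{R}+\tfrac12(Y_t-S_t/2)^2-\tfrac14, \qquad 0\ge R_t^{B}+\tfrac12(Y_t+S_t/2)^2-\tfrac14 .
\]
Dropping the nonnegative squares yields $R_t^R\le 1/4$ and $R_t^B\le 1/4$, which is the regret bound.

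For pacing we need $|Y_t|=O(1)$. The inequalities above give $Y_t^2\le -(R_t^R+R_t^B)$, so the remaining task is a lower bound on the averaged regret. We have
\[
R_t^R+R_t^B=\sum_{s\le t}r_s(1-2x_s)
\]
because exactly one of $a_s^R,a_s^B$ equals $1$. This need not be bounded below a priori, so we argue from the structure of the minimax choice instead. If $x_t$ were forced to spend heavily (large $Y_t$), the learner would be overpaying. The optimizer in $v$ is $v^*=E_{t-1}(z)+x-a_t(z)$, and with that choice the quadratic $\frac12 E_t(z)^2$ term drives $x_t$ back toward $a_t(z)$. Concretely, we will show that when $Y_{t-1}$ exceeds a constant the minimizer sets $x_t=0$, and symmetrically $x_t=1$ when $Y_{t-1}$ is below minus a constant. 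The increments of $Y$ are bounded by $1/2$, so $Y$ then stays in a constant band.

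\textbf{Main obstacle.} The obstacle is this last pacing step. The bound $\Phi_t\le 0$ controls $Y_t^2$ only up to the negative part of the regret, and the regret can be very negative when the learner beats both experts. I would first try the direct first-order analysis of $\min_x\max_{z,v}F_t$ described above. Because $\max_v$ has closed form, it reduces to a one-dimensional problem in $(x,z)$, and the aim is to show that its minimizer has the sign pattern above whenever $|Y_{t-1}|$ is large. If that fails, the fallback is to add a separate free-disposal argument: whenever the learner's regret is very negative, it can spend less without violating the regret bounds, which restores $|Y_t|=O(1)$.
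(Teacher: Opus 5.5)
Your Step 1 and the regret half of Step 2 match the paper's proof: the max-potential is non-increasing via Sion's theorem and the substitution $x=a_t(z)$, it starts at $0$, and evaluating at $z=\pm 1/2$ gives $R_t^R,R_t^B\le 1/4$. (You also do not need to truncate $v$, since Sion only requires compactness on the $x$ side.)

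The genuine gap is the pacing half. You correctly see that $\Phi_t\le 0$ only gives $Y_t^2\le -(R_t^R+R_t^B)$, which is vacuous when the learner beats both experts. But you only state, as an aim, that the minimizer is forced to $x_t=0$ (resp.\ $x_t=1$) once $Y_{t-1}$ is large (resp.\ very negative); you give no argument for it. The free-disposal fallback does not prove anything about the algorithm as stated, because it changes the algorithm. It would also only address overspending, whereas $\kol$ is two-sided and underspending must be ruled out too.

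The missing idea is to use a saddle point of $F_t$ rather than just the value of the game. Let $(z_t,v_t)$ attain $\max_{z,v}\min_x F_t(x,z,v)$. The maximum is attained because $z$ ranges over a compact interval and $\min_x F_t\le G_{t-1}\to-\infty$ as $|v|\to\infty$. Then $(x_t,z_t,v_t)$ is a saddle point.

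Since $F_t(x_t,\cdot,\cdot)=G_t$, optimality in $v$ gives $v_t=E_t(z_t)=Y_t-S_tz_t$. Since $x\mapsto F_t(x,z_t,v_t)$ is affine with slope $v_t-r_t$, optimality of $x_t$ gives three cases: $v_t\ge r_t\ge 0$ if $x_t=0$; $v_t=r_t\in[0,1]$ if $0<x_t<1$; and $v_t\le r_t\le 1$ if $x_t=1$. With $|S_tz_t|\le 1/2$, this yields $Y_t\ge -1/2$ whenever $x_t<1$ and $Y_t\le 3/2$ whenever $x_t>0$. The remaining side in the two boundary cases follows by induction, because $Y_t=Y_{t-1}-1/2$ when $x_t=0$ and $Y_t=Y_{t-1}+1/2$ when $x_t=1$. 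Hence $-1/2\le Y_t\le 3/2$ for all $t$, which is the pacing guarantee, and it also implies the sign pattern you conjectured.

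The intuition is that at the saddle point $v_t$ is the marginal price of spending. The learner's first-order condition pins this price to the reward range $[0,1]$. That price equals the learner's deviation from the mixture $p(z_t)$, which differs from $Y_t$ by at most $1/2$.
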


\begin{proof}
    Since $F_t(x,z,v)$ is affine in $x$ and jointly concave in $(z,v)$, Sion's minimax theorem gives 
    \begin{align*}
W_t=\min_{x\in[0,1]}\max_{\substack{|z|\le1/2\\v\in\RR}} F_t(x,z,v) = \max_{\substack{|z|\le1/2\\v\in\RR}} \min_{x\in[0,1]}
     F_t(x,z,v).
    \end{align*}
    
Define potential $\Phi_t(z,v) =  \innerprod{R_{t}}{p(z)}
+vE_{t}(z)
-\frac12v^2-z^2$. Thus $\Phi_t(z,v) = F(x_t, z,v)$ and we have 
\begin{align*}
   W_t = \max_{\substack{|z|\le1/2\\v\in\RR}} \Phi_t(z,v)
   = \max_{|z|\le1/2}\innerprod{R_t}{p(z)}+\frac12E_t(z)^2-z^2,
\end{align*}
by choosing the optimal $v^*=E_t(z)$.
We note that 
\begin{align*}
    F_t(x,z,v)
    &= 
\innerprod{R_{t-1}}{p(z)}
+vE_{t-1}(z)
+r_t\bigl(a_t(z)-x\bigr)
+v\bigl(x-a_t(z)\bigr)
-\frac12v^2-z^2.\\
    &=
\innerprod{R_{t-1}}{p(z)}
+vE_{t-1}(z)
+(r_t-v)\bigl(a_t(z)-x\bigr)
-\frac12v^2-z^2
\end{align*}

This shows that $F_t(x=a_t(z), z, v) = \innerprod{R_{t-1}}{p(z)}
+vE_{t-1}(z)
-\frac12v^2-z^2 = \Phi_{t-1}(z,v)$.
Thus we have
\begin{align*}
W_{t}=\max_{\substack{|z|\le1/2\\v\in\RR}} \min_{x\in[0,1]}F_t(x,z,v)
\leq \max_{\substack{|z|\le1/2\\v\in\RR}} F_t(x=a_t(z),z,v) = \max_{\substack{|z|\le1/2\\v\in\RR}} \Phi_{t-1}(z,v) = W_{t-1}.
\end{align*}
Since $W_0 = \max_{|z|\le1/2} -z^2 =0$, induction on $t$ gives $W_t\leq 0$. This means
\begin{align*}
\max_{|z|\le1/2}\innerprod{R_t}{p(z)}+\frac12E_t(z)^2-z^2\leq 0
\end{align*}
Evaluating at $z=\pm\frac12$ gives
$R_t^R+\frac12E_t(1/2)^2\le\frac14$
and
$R_t^B+\frac12E_t(-1/2)^2\le\frac14$.
Thus $R_t^R,R_t^B\le1/4$, proving the regret guarantee.

For the pacing guarantee, it suffices to show $|Y_t|=O(1), \forall t\in [T]$. Choose $(z_t,v_t)$ such that
$(x_t,z_t,v_t)$ is a saddle point of $F_t(x,z,v)$. Optimality in $v$ gives
$v_t=E_t(z_t)=Y_t-S_tz_t$, and hence $Y_t=v_t+S_tz_t$. Since
$x\mapsto F_t(x,z_t,v_t)$ is affine with slope $v_t-r_t$, optimality in $x$ implies that if $x_t=0$, then $v_t\ge r_t\ge0$; 
if $0<x_t<1$, then $v_t=r_t\in[0,1]$; and if $x_t=1$, then $v_t\le r_t\le1$.

We claim inductively that $-1/2\le Y_t\le3/2$. The claim holds for
$Y_0=0$.  If $x_t=0$, then $v_t\ge0$
gives $Y_t=v_t+S_tz_t\ge-1/2$ since $|z_t|\le1/2$, and $|S_t|\leq 1$, while
$Y_t=Y_{t-1}-1/2\le1$ by induction. If $0<x_t<1$, then
$Y_t=r_t+S_tz_t\in[-1/2,3/2]$, since $r_t\in[0,1]$. If $x_t=1$, then $v_t\le1$
gives $Y_t=v_t+S_tz_t\le3/2$, while
$Y_t=Y_{t-1}+1/2\ge0$ by induction. Thus we have
$-1/2\le Y_t\le3/2$ for all $t\in [T]$. If exact budget feasibility is desired, truncating the path once the budget $T/2$ is exhausted changes both the regret and pacing guarantees by only an additional $O(1)$.
\end{proof}

To solve the original problem with alternating $D$-length blocks of red balls and blue balls, we run \Cref{algorithm:sequential-red-blue-balls} for each queue. Since the algorithm for the subproblems has constant regret and is constant-pacing, it is easy to see that the resulting algorithm has at most $O(D)$ regret and is $O(D)$-pacing.

\section{Near-Optimal Algorithm for Learning with Pacing Experts}
\label{sec:near-optimal-algo}

The algorithm in \Cref{sec:warm-up-example} is in fact generalizable to any finite set of experts and multiple resources. We study both the ORA regime and the OLRC regime. Our algorithm for the first regime generalizes the toy algorithm in \Cref{sec:warm-up-example}, whereas the algorithm for the latter regime requires an additional stability analysis. We first assume the learner can play fractionally, i.e. deterministically playing a distribution $p_t\in \Delta(\actions)$ and get deterministic reward $r_t(p_t)$ and costs $c_t^i(p_t)$. We then remove this assumption and analyze sampled actions in \Cref{sec:sampled-feasibility}.

\subsection{Notation}

Throughout, an unqualified sum $\sum_t$ denotes the full-horizon sum $\sum_{t=1}^T$. Fix a baseline spending pattern for each resource $i\in [m]$: $b^i=(b_t^i)_{t\in[T]}\in[0,1]^T$ and $\sum_{t=1}^T b_t^i=B_i$. Every expert is initially assumed $D_i$-pacing in resource $i$:
\begin{equation}
S_t^{f,i}:=\sum_{s\le t}(c_s^{f,i}-b_s^i),\qquad |S_t^{f,i}|\le D_i\quad\forall f,i,t.
\label{eq:general-paced}
\end{equation}
Recall that a cost path $(c_t)_t$ is called $D$-subpacing with respect to a baseline $(b_t)_t$ if there exists a witness $d_t\in[0,b_t]$ such that
\begin{equation}
\left|\sum_{s\le t}(c_s-d_s)\right|\le D\qquad\forall t\in[T].
\label{eq:subpacing-definition}
\end{equation}
Thus, $D$-pacing is the special case witnessed by $d_t=b_t$.

Define
\begin{equation}\label{eq:eta}
    b_{\min}^i:=\textstyle \min_t b_t^i, \qquad \beta_i:=1/b_{\min}^i, \qquad \eta_i:=\min\{\alpha_i,\beta_i\} 
\end{equation}
where recall $\alpha_i$ is the reward-to-cost ratio of resource $i$ in \Cref{assumption:soft-conversion}.
In words, $b_{\min}^i$ denotes the smallest term in the baseline spending pattern $b_i$ and $\beta_i:=1/b_{\min}^i$ its reciprocal (with $\beta_i=+\infty$ if $b_{\min}^i=0$).
The quantity $\eta_i$ is the cheaper terminal price: additional rewards accrued by an expert after the algorithm exhausts some resource $i$ and hard stops at time $\tau$ can be priced either by the additional cost incurred by an expert at rate $\alpha_i$ or, when $b_{\min}^i>0$, by the remaining cost of the baseline at rate $\beta_i$, since $T-\tau \leq \beta_i\sum_{t>\tau}b_t^i$ and $\sum_{t>\tau}r_t^f\leq T-\tau$. Since $\alpha_i>0$ and $\beta_i\in(0,+\infty]$, every $\eta_i$ is strictly positive. 

We henceforth assume that $D_i>0$ for at least one resource $i$. If $D_i=0$ for every resource, then every paced expert has $c_t^{f,i}=b_t^i$ on every round (and every $0$-subpacing expert has $c_t^{f,i}\le b_t^i$), so the learning component reduces to the standard learning-with-expert-advice problem under the corresponding feedback model; we omit this degenerate case.

For any distribution over experts $w\in\Delta(\F)$, define 
\[
    \pi_t(w):=\textstyle \sum_fw_fq_t^f, \qquad r_t(w):=\textstyle \sum_fw_fr_t^f, \qquad c_t^i(w):=\textstyle \sum_fw_fc_t^{f,i}.
\]
That is, $\pi_t(w)$ is the probability distribution over actions induced by the distribution $w$ over experts, while $r_t(w)$ and $c_t^i(w)$ are the expected reward and expected cost in resource $i$, respectively, of choosing an action based on this distribution. (By a slight abuse of notation, we use $r_t(\cdot)$ and $c_t^i(\cdot)$ to denote the expected reward and cost for both distributions over actions and distributions over experts, with the intended domain clear from context.)

The algorithm uses a subroutine called a controller. 
At each time step, the controller chooses a distribution over experts $x_t\in\Delta(\F)$. The algorithm then plays the distribution over actions $p_t=\pi_t(x_t)$ (or $\Null$ if the controller has exceeded its budget). 

Define
\begin{equation}
R_t^f:=\sum_{s\le t}(r_s^f-r_s(p_s)),\qquad
Y_t^i:=\sum_{s\le t}(c_s^i(p_s)-b_s^i),\qquad
E_t^i(w):=Y_t^i-\sum_fw_fS_t^{f,i}.
\label{eq:states}
\end{equation}
$R^f_t$ denotes the controller's regret against expert $f$, $Y^i_t$ its ``overspending'' (which can be negative) compared to the baseline spending pattern, and $E^i_t$ the difference in overspending between the controller and the expected overspending of the distribution $w$ over experts: since $\sum_fw_f=1$, the baseline terms cancel and
$E_t^i(w)=\sum_{s\le t}(c_s^i(p_s)-c_s^i(w)).$ When $p_t=\pi_t(x_t)$, we also have $E_t^i(w)=\sum_{s\le t}(c_s^i(x_s)-c_s^i(w))$.
We reserve $R_t^f$ for the regret coordinate used in the potential.  In statements
of regret guarantees, we write
\begin{equation}
\Reg_t^f:=\sum_{s\le t}(r_s^f-r_s(p_s)).
\label{eq:regret-notation}
\end{equation}

\paragraph{Algorithmic resource prices.}
Let $\lambda=(\lambda_i)_{i\in[m]}\in\RR_{>0}^m$ be prices chosen by the algorithm. These are distinct from the structural conversion parameters $\alpha_i$ in \Cref{assumption:soft-conversion}. Write
\begin{equation}
D_{\lambda,\infty}:=\max_{i\in[m]}\lambda_iD_i,
\qquad \lambda_{\max}:=\max_{i\in[m]}\lambda_i.
\label{eq:D-lambda-infty}
\end{equation}
Let $v=(v_0,v_1,\ldots,v_m)\in\Delta_{m+1}$ be a distribution over the $m$ resources and a slack coordinate.
Fix $v\in\Delta_{m+1}, w\in \Delta(\experts)$. We say the resource penalty of coordinate $0$ is $0$, while the resource penalty of coordinate $i$ is $\lambda_iv_iE_t^i(w)$. Intuitively, this term penalizes deviation of the algorithm's spending from the spending of the expert mixture $w$, scaled by the chosen price $\lambda_i$ and the probability mass $v_i$.

\paragraph{Potential function.} We use the shifted negative entropies as regularizers for $w\in \Delta(\experts)$ and $v\in \Delta_{m+1}$:
\begin{align*}
    &\Psi_x(w):=\sum_fw_f\log w_f+\log F = D_{\rm KL}(w\| U_F)\\
    &\Psi_v(v):=\sum_{i=0}^mv_i\log v_i+\log(m+1) = D_{\rm KL}(v\| U_{m+1}).
\end{align*}
where $U_k$ denotes the uniform distribution over $k$ elements. Then $\Psi_x(w)\in [0, \log F]$ and $\Psi_v(v)\in [0, \log (m+1)]$.

Choose regularization coefficients $\rho_x,\rho_v>0$ to be defined later. We define the potential function and its maximum as
\begin{equation}
\Phi_t^\lambda(w,v):=\sum_fw_fR_t^f+\sum_{i=1}^m\lambda_iv_iE_t^i(w)
-\rho_x\Psi_x(w)-\rho_v\Psi_v(v),
\qquad
M_t^\lambda:=\max_{w,v}\Phi_t^\lambda(w,v),
\label{eq:potential-lambda}
\end{equation}
Here and throughout, the maximum is taken over $(w,v)\in \Delta(\F)\times\Delta_{m+1}$, and we omit this domain when it is clear from context. The maximum potential $M_t^\lambda$ provides a useful upper bound on the combined regret and resource penalty against an arbitrary expert mixture $w$ and resource distribution $v$.\footnote{Note that $\Phi$ strongly resembles the potential function from the alternating red and blue ball problem in Section~\ref{section:opt-red-blue}.
The only difference is that we have opted for entropy over quadratic regularization, a standard technique to optimize the regret guarantee's dependence on the number of experts (and in this case, number of resources as well).
The purpose of $\lambda_i$ is to control the loss of reward on rounds where the algorithm must choose the null action (due to overspending), contrary to the controller's recommendation.
For the core algorithm, we choose $\lambda_ i = \eta_i$, while for the sub-pacing algorithm in \Cref{subsec:extension-subpacing-algo}, we choose $\lambda_i = \alpha_i$.}

To define how the potential function changes after a new decision $x\in \Delta(\experts)$, we let
\begin{equation*}
    F_t^\lambda(x,w,v):=\Phi_{t-1}^\lambda(w,v)+r_t(w)-r_t(x)
+\sum_{i=1}^{m}\lambda_iv_i\bigl(c_t^i(x)-c_t^i(w)\bigr).
\end{equation*}

\paragraph{Hard stop.}
Let $\textstyle C_{t-1}^i:=\sum_{s\leq t-1}c_s^i(p_s)$
denote the cumulative spending of the controller.
Before round $t$, if the hard stop has already taken effect, or if $C_{t-1}^i+1>B_i$ for some $i$, output $\Null$ from round $t$ onward. 
Otherwise, run the controllers outlined in Section~\ref{sec:core-algorithms}. 
Since every one-round cost is at most 1, this guarantees fractional budget feasibility $\sum_t c_t^i(p_t)\le B_i$.

\subsection{Key Curvature Lemma}
The potential function involves three nonlinear terms: $-\rho_x\Psi_x(w)$, $-\rho_v\Psi_v(v)$, and $\sum_{i=1}^{m}\lambda_iv_iE_t^i(w)$.  The first two terms are shifted entropy terms that provide strong concavity in $w$ and $v$, respectively, in $L^1$-norm. The third term is problematic because it involves products of $v$ and $w$ and the nonlinearity can erase the concavity in $w$ and $v$. However, with appropriately chosen parameters $\rho_x, \rho_v$, we show that the potential function satisfies the following concavity properties.
\begin{lemma}
\label{lem:curvature-lambda}
Assume
\begin{equation}
\rho_x\rho_v\ge4D_{\lambda,\infty}^2.
\label{eq:entropic-core-curvature-condition}
\end{equation}
Then for every $t$, $\Phi_t^\lambda$ is jointly concave in $(w,v)$, and $F_t^\lambda(x,\cdot,\cdot)$ is jointly concave in $(w,v)$ for every fixed $x\in \Delta(\experts)$. 

Moreover, if $(w^*,v^*)$ maximizes $\Phi_t^\lambda$, then both $w^*$ and $v^*$ have full support and, for every feasible $(w,v)$,
\begin{equation}
\Phi_t^\lambda(w,v)
\le M_t^\lambda
-\frac{\rho_x}{2}D_{\rm KL}(w\|w^*)
-\frac{\rho_v}{2}D_{\rm KL}(v\|v^*).
\label{eq:quant-curvature-lambda}
\end{equation}
\end{lemma}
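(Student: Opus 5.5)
The plan is to isolate the only non-separable part of $\Phi_t^\lambda$ and show that half of the entropic curvature dominates it under \eqref{eq:entropic-core-curvature-condition}. Expanding $E_t^i(w)=Y_t^i-\sum_f w_f S_t^{f,i}$ gives
\[
\Phi_t^\lambda(w,v)=\sum_f w_fR_t^f+\sum_{i=1}^m\lambda_iv_iY_t^i-\sum_{i=1}^m\lambda_iv_i\innerprod{w}{S_t^{i}}-\rho_x\Psi_x(w)-\rho_v\Psi_v(v),
\]
where $S_t^i:=(S_t^{f,i})_f$. The first two terms are linear in $(w,v)$, so only the bilinear term and the entropies matter.

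For $F_t^\lambda(x,\cdot,\cdot)$ I would use the identity $c_t^{f,i}-b_t^i+S_{t-1}^{f,i}=S_t^{f,i}$. With it, $\lambda_iv_iE^i_{t-1}(w)+\lambda_iv_i(c_t^i(x)-c_t^i(w))$ equals $\lambda_iv_i\bigl(Y^i_{t-1}+c^i_t(x)-b^i_t\bigr)-\lambda_iv_i\innerprod{w}{S^i_t}$. Since $r_t(w)$ is linear in $w$, $F_t^\lambda(x,\cdot,\cdot)$ has exactly the same structure: linear terms, the same entropies, and a bilinear term whose coefficients satisfy $|S_t^{f,i}|\le D_i$ by \eqref{eq:general-paced}. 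It therefore suffices to prove one statement: for any matrix $S$ with $|S^{f,i}|\le D_i$, the function
\[
G(w,v):=-\sum_i\lambda_iv_i\innerprod{w}{S^i}-\tfrac{\rho_x}{2}\Psi_x(w)-\tfrac{\rho_v}{2}\Psi_v(v)
\]
is concave on the relative interior of $\Delta(\F)\times\Delta_{m+1}$, and hence, by continuity, on the closed domain. This is stronger than concavity of $\Phi_t^\lambda$ and $F_t^\lambda$, which equal $G$ minus a further $\frac{\rho_x}{2}\Psi_x+\frac{\rho_v}{2}\Psi_v$ plus linear terms.

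To prove this I would compute the second directional derivative along a tangent direction $(h,g)$, with $\sum_f h_f=0$ and $\sum_i g_i=0$. Write $H_w(h)=\sum_f h_f^2/w_f$ and $H_v(g)=\sum_i g_i^2/v_i$. The entropy part contributes $-\frac{\rho_x}{2}H_w(h)-\frac{\rho_v}{2}H_v(g)$, and the cross term contributes $-2\sum_{i\ge1}\lambda_ig_i\innerprod{h}{S^i}$.

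Because $h$ sums to zero and the entries of $S^i$ lie in an interval of length $2D_i$, we get $|\innerprod{h}{S^i}|\le D_i\|h\|_1$. Hence the cross term is at most $2D_{\lambda,\infty}\|h\|_1\|g\|_1$; the slack coordinate $g_0$ carries no cost term and only helps. Cauchy--Schwarz with $\sum w_f=\sum v_i=1$ gives $\|h\|_1^2\le H_w(h)$ and $\|g\|_1^2\le H_v(g)$. By AM--GM,
\[
\tfrac{\rho_x}{2}H_w+\tfrac{\rho_v}{2}H_v\ \ge\ \sqrt{\rho_x\rho_v}\,\sqrt{H_wH_v}\ \ge\ 2D_{\lambda,\infty}\sqrt{H_wH_v}.
\]
So the Hessian form of $G$ is nonpositive, which gives joint concavity of both $\Phi_t^\lambda$ and $F_t^\lambda(x,\cdot,\cdot)$.

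For full support, every non-entropic term of $\Phi_t^\lambda$ has bounded partial derivatives, while $\partial_{w_f}\Psi_x=\log w_f+1\to-\infty$ as $w_f\to0$. If a maximizer had $w^*_f=0$, moving an $\epsilon$ mass onto $f$ would increase $\Phi$ at rate $+\infty$, a contradiction; the same argument applies to $v$.

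For \eqref{eq:quant-curvature-lambda}, write $z=(w,v)$ and $\Phi_t^\lambda=G'-\frac{\rho_x}{2}\Psi_x-\frac{\rho_v}{2}\Psi_v$, where $G'$ is $G$ plus the linear terms and so is still concave. At the interior maximizer $z^*$:
\begin{itemize}
\item concavity gives $G'(z)\le G'(z^*)+\innerprod{\nabla G'(z^*)}{z-z^*}$;
\item the Bregman divergence of negative entropy on the simplex is the KL divergence, so $\Psi_x(w)=\Psi_x(w^*)+\innerprod{\nabla\Psi_x(w^*)}{w-w^*}+D_{\rm KL}(w\|w^*)$, and similarly for $\Psi_v$.
\end{itemize}
Subtracting yields
\[
\Phi_t^\lambda(z)\le M_t^\lambda+\innerprod{\nabla\Phi_t^\lambda(z^*)}{z-z^*}-\tfrac{\rho_x}{2}D_{\rm KL}(w\|w^*)-\tfrac{\rho_v}{2}D_{\rm KL}(v\|v^*).
\]
The linear term vanishes by the first-order optimality condition at an interior point, since $z-z^*$ is tangent to the product of simplices.

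The main obstacle is the curvature step: bounding $|\innerprod{h}{S^i}|\le D_i\|h\|_1$ using $\sum_f h_f=0$, and splitting only half the entropy to absorb the cross term so that the other half survives for \eqref{eq:quant-curvature-lambda}. The rest is standard.
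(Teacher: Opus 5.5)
Your proof is correct, but it reaches the lemma by a somewhat different route than the paper. The paper argues globally. For an interior base point $(w,v)$ it writes the exact expansion of $\Phi_t^\lambda(w+\Delta w,v+\Delta v)-\Phi_t^\lambda(w,v)$ as the gradient term, minus the bilinear remainder $\sum_i\lambda_i\Delta v_i\langle\Delta w,S_t^i\rangle$, minus $\rho_xD_{\rm KL}(w+\Delta w\|w)+\rho_vD_{\rm KL}(v+\Delta v\|v)$. It then bounds the remainder by $D_{\lambda,\infty}\|\Delta w\|_1\|\Delta v\|_1$ and absorbs it into half of each KL term using AM--GM and Pinsker's inequality. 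That single inequality gives concavity and, at the maximizer, \eqref{eq:quant-curvature-lambda}.

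You argue infinitesimally instead. You split off half of each entropy and show the remainder $G'$ is concave through its Hessian form, with $\|h\|_1^2\le\sum_f h_f^2/w_f$ serving as the local analogue of Pinsker. You then recover the surviving $\frac{\rho_x}{2}D_{\rm KL}(w\|w^*)+\frac{\rho_v}{2}D_{\rm KL}(v\|v^*)$ from the supergradient inequality for $G'$ together with the exact Bregman identity for negative entropy. The core ingredients are the same: the $\ell_1$ bound on the cross term, AM--GM, and giving up half the entropic curvature. So is the resulting condition $\rho_x\rho_v\ge4D_{\lambda,\infty}^2$.

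The paper's version avoids second derivatives and the continuity step from the relative interior to the closed domain. Yours isolates the structural fact that $\Phi_t^\lambda+\frac{\rho_x}{2}\Psi_x+\frac{\rho_v}{2}\Psi_v$ is concave. Your explicit rewriting of $F_t^\lambda(x,\cdot,\cdot)$, via $S_{t-1}^{f,i}+c_t^{f,i}-b_t^i=S_t^{f,i}$ and $\sum_fw_f=1$, also makes precise the paper's remark that ``the nonlinear terms are the same.'' There the bilinear coefficient is $S_{t-1}^{f,i}+c_t^{f,i}$, which is bounded by $D_i$ only after removing the common shift $b_t^i$. Your zero-sum argument on $h$ handles that shift automatically.
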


\begin{proof}
Consider interior points $(w,v)$ and $(w',v')=(w+\Delta w, v+\Delta v)$. To show $\Phi_t^\lambda$ is jointly concave in $(w,v)$, it suffices to show \[\Phi_t^\lambda(w+\Delta w, v+\Delta v)-\Phi_t^\lambda(w, v)\leq \innerprod{\nabla \Phi_t^\lambda(w,v)}{(\Delta w, \Delta v)}.\] We have
 \begin{align*}
    &\Phi_t^\lambda(w+\Delta w, v+\Delta v)-\Phi_t^\lambda(w, v)\\
    &= \langle \Delta w, R_t\rangle + \sum_{i=1}^m \lambda_i \Delta v_i Y_t^i - \sum_{i=1}^m \lambda_i \Delta v_i \langle w, S_t^i\rangle - \sum_{i=1}^{m}\lambda_i v_i\langle \Delta w, S_t^i\rangle \\
    &\quad - \sum_{i=1}^m \lambda_i \Delta v_i \langle \Delta w, S_t^i\rangle
    - \rho_x (\Psi_x(w+\Delta w)-\Psi_x(w))
    - \rho_v (\Psi_v(v+\Delta v)-\Psi(v))\\
    &=\innerprod{\nabla \Phi_t^\lambda(w,v)}{(\Delta w, \Delta v)}  - \sum_{i=1}^m \lambda_i \Delta v_i \langle \Delta w, S_t^i\rangle
    - \rho_x D_{\rm KL}(w+\Delta w\| w)
    - \rho_v D_{\rm KL}(v+\Delta v\| v).\\
 \end{align*}
 where we use the fact that 
 $\Psi_x(w+\Delta w)-\Psi_x(w) = \nabla\Psi_x(w)\Delta w + D_{\rm KL}(w+\Delta w\| w)$ and $\Psi_v(v+\Delta v)-\Psi_v(v) = \nabla\Psi_v(v)\Delta v + D_{\rm KL}(v+\Delta v\| v)$.

We now bound the first nonlinear term, which crucially relies on the pacing property of the experts:
\begin{align*}
 \left|\sum_{i=1}^m\lambda_i\Delta v_i\innerprod{\Delta w}{S_t^i}\right|
&\leq D_{\lambda,\infty}\|\Delta v\|_1\|\Delta w\|_1 \tag{$\|S_t^i\|_\infty\le D_i$}\\
&\leq \frac{\rho_x}{4}\|\Delta w\|_1^2
   +\frac{D_{\lambda,\infty}^2}{\rho_x}\|\Delta v\|_1^2 \tag{$\frac{\sqrt{\rho}_x \|\Delta w\|_1}{2} -\frac{D_{\lambda,\infty} \|\Delta v\|_1}{\sqrt{\rho_x}})^2\geq 0$}\\
&\le \frac{\rho_x}{2}D_{\rm KL}(w'\|w)
   +\frac{2D_{\lambda,\infty}^2}{\rho_x}D_{\rm KL}(v'\|v)\tag{Pinsker's Inequality: $D_{\rm KL}(P\|Q)\geq \frac{\|P-Q\|^2_1}{2}$}\\
&\le \frac{\rho_x}{2}D_{\rm KL}(w'\|w)
   +\frac{\rho_v}{2}D_{\rm KL}(v'\|v)\tag{$\rho_x\rho_v\ge 4D_{\lambda,\infty}^2$}
\end{align*}

This implies 
 \begin{align}
 \label{eq:concavity-KL}
\Phi_t^\lambda(w', v')-\Phi_t^\lambda(w, v)
\leq \innerprod{\nabla \Phi_t^\lambda(w,v)}{(\Delta w, \Delta v)} - \frac{\rho_x}{2} D_{\rm KL}(w'\| w)
    - \frac{\rho_v}{2} D_{\rm KL}(v'\| v).
\end{align}

Thus joint concavity of $\Phi_t^\lambda$ follows because $D_{\rm KL}(w'\| w)\geq 0$ and $D_{\rm KL}(v'\| v)\geq 0$. The joint concavity of $F_t^\lambda(x,\cdot,\cdot)$ can be derived analogously because the nonlinear terms are the same in both cases.

Now, when $(w^*,v^*)$ maximizes $\Phi_t^\lambda$, $w^*$ and $v^*$ must have full support: consider $w^*$ and suppose it has a zero coordinate. Then shifting $\varepsilon$ mass from a positive coordinate to a 0 coordinate of $w^*$ changes the objective by $\rho_x\varepsilon \log (1/\varepsilon) + O(\varepsilon)>0$, for $\varepsilon$ sufficiently small, which contradicts the optimality of $w^*$. The argument for $v^*$ is analogous. Additionally, $\innerprod{\nabla \Phi_t^\lambda(w^*,v^*)}{(w-w^*, v-v^*)}\leq 0, \forall (w,v)\in \Delta(\experts)\times \Delta_{m+1}$ by first-order necessary optimality condition for $\Phi_t^\lambda$ over the convex domain.

Thus by \eqref{eq:concavity-KL}, we have for every feasible $(w,v)$
\begin{align*}
\Phi_t^\lambda(w, v)-\Phi_t^\lambda(w^*, v^*)
&\leq 
\innerprod{\nabla \Phi_t^\lambda(w^*,v^*)}{(w-w^*, v-v^*)} 
- \frac{\rho_x}{2} D_{\rm KL}(w\| w^*)
- \frac{\rho_v}{2} D_{\rm KL}(v\| v^*)\\
&\leq - \frac{\rho_x}{2} D_{\rm KL}(w\| w^*)- \frac{\rho_v}{2} D_{\rm KL}(v\| v^*)\\
\end{align*}
Substituting $M_t^\lambda = \Phi_t^\lambda(w^*, v^*)$ finishes the proof.
\end{proof}

\subsection{Core Algorithms}\label{sec:core-algorithms}

In this section, we give the core algorithms for both the ORA and OLRC regimes.
These algorithms guarantee optimal regret bounds against $D$-pacing experts without exceeding the budgets $B$, where $D=(D_i)_{i\in [m]}$ and $B=(B_i)_{i\in [m]}$.
Section~\ref{sec:extensions} then describes how to extend the algorithm to compete against $D$-subpacing experts, to be subpacing itself, and to convert the fractional guarantees to sampled play with pathwise budget feasibility and high-probability regret guarantees.

\RestyleAlgo{ruled}
\begin{algorithm}[ht]
\caption{Core ORA Algorithm}
\label{algorithm:core-ORA}
\DontPrintSemicolon
Let $\lambda_i = \eta_i = \min\{\alpha_i, \beta_i\}$ where
\begin{itemize}
    \item $\alpha_i$ is the reward-to-cost ratio of resource $i$
    \item $\beta_i = 1/\min_t b^i_t$ is the reciprocal of the smallest term in the baseline spending pattern $b^i$\;
\end{itemize}
\For{$t=1,\dots, T$}{
    If $C^i_{t-1} = \sum_{s<t} c^i_s(p_s) > B_i - 1$ for some resource $i$, break and play $\perp$ from round $t$ onward\;
    Otherwise, observe the current round's rewards vector $r_t$ and cost vectors $(c^i_t)_{i \in [m]}$\;
    Define 
    \[
        \textstyle F_t^\lambda(x,w,v):=\Phi_{t-1}^\lambda(w,v)+r_t(w)-r_t(x) +\sum_{i=1}^{m}\lambda_iv_i\bigl(c_t^i(x)-c_t^i(w)\bigr)
    \] 
    where $\Phi_{t-1}^\lambda(w,v):=\sum_fw_fR_{t-1}^f+\sum_{i=1}^m\lambda_iv_iE_{t-1}^i(w) - \rho_x\Psi_x(w)-\rho_v\Psi_v(v)$\;
    Choose
    \[
        \textstyle x_t \in \arg\min_{x\in\Delta(\F)}\max_{w \in \Delta(\mathcal{F)}, v \in \Delta_{m+1}} F_t^\lambda(x,w,v)\;
    \]
    Play $p_t = \pi_t(x_t)$ and receive fractional reward $r_t(p_t)$ and fractional costs $c_t^i(p_t)$\;
}
\end{algorithm}

\paragraph{With pre-decision feedback.}

In the ORA regime, the core algorithm, stated in Algorithm~\ref{algorithm:core-ORA}, follows the recommendations of the following controller, as long as it is not about to exceed any budget $B_i$: in each round, after observing the current reward and cost vectors, the controller recommends
\begin{equation}
x_t\in\arg\min_{x\in\Delta(\F)}\;\max_{w \in \Delta(\mathcal{F)}, v \in \Delta_{m+1}}F_t^\lambda(x,w,v),
\label{eq:with-predecision-lambda}
\end{equation}
and the algorithm plays $p_t = \pi_t(x_t)$ (or $\perp$ if $C^i_{t-1} > B_i - 1$ for some resource $i$). 

Intuitively, $F_t^{\lambda}$ is the hypothetical potential function at $t$ if the algorithm were to play $x$ in round $t$, and the controller recommends the $x_t$ that minimizes the potential against the worst expert mixture and resource distribution. This generalizes the decision rule for the alternating red and blue ball problem in Section~\ref{section:opt-red-blue}.

Under the regularization condition in \Cref{lem:curvature-lambda}, this controller guarantees that the potential does not increase over time.

\begin{lemma}
\label{lem:progress-with-feedback}
Under \eqref{eq:entropic-core-curvature-condition}, before the hard stop the controller with pre-decision feedback satisfies \[M_t^\lambda \leq M_{t-1}^\lambda. \]
\end{lemma}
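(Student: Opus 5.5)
The plan mirrors the warm-up proof for the alternating red and blue balls: apply a minimax theorem to $F_t^\lambda$, then exhibit for each dual pair $(w,v)$ a primal response $x$ that makes every round-$t$ increment vanish.

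First, I would check that Sion's minimax theorem applies to $\min_{x}\max_{w,v}F_t^\lambda(x,w,v)$. The set $\Delta(\F)$ is compact and convex. The map $x\mapsto F_t^\lambda(x,w,v)$ is affine, since $r_t(x)$ and $c_t^i(x)$ are linear in $x$ and $\Phi_{t-1}^\lambda$ does not depend on $x$. By \Cref{lem:curvature-lambda}, under \eqref{eq:entropic-core-curvature-condition} the map $(w,v)\mapsto F_t^\lambda(x,w,v)$ is jointly concave for each fixed $x$. It is also continuous on the compact convex set $\Delta(\F)\times\Delta_{m+1}$, with entropy extended continuously to the boundary. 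Sion's theorem then gives
\[
W_t:=\min_{x}\max_{w,v}F_t^\lambda(x,w,v)=\max_{w,v}\min_{x}F_t^\lambda(x,w,v).
\]

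Second, I would identify $W_t$ with $M_t^\lambda$. Let $x_t$ be the chosen minimizer and note that before the hard stop $p_t=\pi_t(x_t)$. Then $r_t(p_t)=r_t(x_t)$ and $c_t^i(p_t)=c_t^i(x_t)$. From the definitions \eqref{eq:states}, the update from $t-1$ to $t$ adds $r_t^f-r_t(x_t)$ to each $R^f$ and $c_t^i(x_t)-c_t^i(w)$ to each $E^i(w)$. This gives the identity $\Phi_t^\lambda(w,v)=F_t^\lambda(x_t,w,v)$ for all $(w,v)$, and hence
\[
M_t^\lambda=\max_{w,v}F_t^\lambda(x_t,w,v)=W_t.
\]

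Third, comes the key step. For fixed $(w,v)$, I would plug in $x=w$, which is a valid point of $\Delta(\F)$. Both increments $r_t(w)-r_t(x)$ and $\lambda_iv_i(c_t^i(x)-c_t^i(w))$ then vanish, so $F_t^\lambda(w,w,v)=\Phi_{t-1}^\lambda(w,v)$. Therefore
\[
W_t=\max_{w,v}\min_x F_t^\lambda(x,w,v)\le\max_{w,v}F_t^\lambda(w,w,v)=\max_{w,v}\Phi_{t-1}^\lambda(w,v)=M_{t-1}^\lambda,
\]
which combined with the second step gives $M_t^\lambda\le M_{t-1}^\lambda$.

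The only delicate point is the minimax interchange. It is exactly where concavity from \Cref{lem:curvature-lambda} is needed: the bilinear cross term $\sum_i\lambda_i v_i E_t^i(w)$ would otherwise break concavity. Everything else is bookkeeping, namely checking the update identity $\Phi_t^\lambda=F_t^\lambda(x_t,\cdot,\cdot)$ using $\sum_f w_f=1$, so that the baseline terms cancel inside $E^i_t(w)$.
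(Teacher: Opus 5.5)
Your proposal is correct and follows the paper's proof essentially step for step: Sion's theorem via affinity in $x$ plus the joint concavity from \Cref{lem:curvature-lambda}, the identity $\Phi_t^\lambda(w,v)=F_t^\lambda(x_t,w,v)$, and the choice $x=w$ giving $F_t^\lambda(w,w,v)=\Phi_{t-1}^\lambda(w,v)$. Your additional checks fill in details the paper leaves implicit: compactness and continuity for Sion, and the $\sum_f w_f=1$ cancellation in the update of $E_t^i(w)$.
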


\begin{proof}
For fixed $(w,v)$, the potential $F_t^\lambda$ is affine in $x$, while \Cref{lem:curvature-lambda} shows that, under its regularization condition, it is jointly concave in $(w,v)$ for fixed $x$. Hence, Sion's minimax theorem applies. Moreover, $F_t^\lambda(w,w,v)=\Phi_{t-1}^\lambda(w,v)$ and $\Phi_t^\lambda(w,v)=F_t^\lambda(x_t,w,v)$, so
\begin{align*}
M_t^\lambda 
&= \max_{w,v}\Phi_t^\lambda(w,v)
= \max_{w,v}F_t^\lambda(x_t,w,v)\\
&=\min_x\max_{w,v}F_t^\lambda(x,w,v)
=\max_{w,v}\min_xF_t^\lambda(x,w,v)\tag{Sion's Minimax Theorem}\\
&\le \max_{w,v} F_t^\lambda(w,w,v)
= \max_{w,v}\Phi_{t-1}^\lambda(w,v)
= M_{t-1}^\lambda.
\label{eq:with-predecision-progress-lambda}
\end{align*}  
\end{proof}

\RestyleAlgo{ruled}
\begin{algorithm}[ht]
\caption{Core OLRC Algorithm}
\label{algorithm:core-OLRC}
\DontPrintSemicolon
Let $\lambda_i = \eta_i = \min\{\alpha_i, \beta_i\}$ where
\begin{itemize}
    \item $\alpha_i$ is the reward-to-cost ratio of resource $i$
    \item $\beta_i = 1/\min_t b^i_t$ is the reciprocal of the smallest term in the baseline spending pattern $b^i$\;
\end{itemize}
\For{$t=1,\dots, T$}{
    If $C^i_{t-1} = \sum_{s<t} c^i_s(p_s) > B_i - 1$ for some resource $i$, break and play $\perp$ from round $t$ onward\;
    Otherwise, define 
    \[
        \textstyle \Phi_{t-1}^\lambda(w,v):=\sum_fw_fR_{t-1}^f+\sum_{i=1}^m\lambda_iv_iE_{t-1}^i(w) - \rho_x\Psi_x(w)-\rho_v\Psi_v(v)\;
    \] 
    Let
    \[
        \textstyle (x_t,v_t)\in\arg\max_{w \in \Delta(\mathcal{F}), v \in \Delta_{m+1}}\Phi_{t-1}^\lambda(w,v)\;
    \]
    Play $p_t = \pi_t(x_t)$ and receive fractional reward $r_t(p_t)$ and fractional costs $c_t^i(p_t)$\;
    Observe reward vector $r_t$ and cost vectors $(c^i_t)_{i \in [m]}$
}
\end{algorithm}

\paragraph{Without pre-decision feedback.}

In this regime, the controller does not have access to the current round's rewards and costs in the OLRC regime, so it cannot recommend the distribution $x$ over experts that minimizes the worst-case potential $\max_{w,v} F^\lambda_t (x, w, v)$.
Instead, the controller simply recommends the output of Follow-the-Regularized-Leader (FTRL) on the regularized objective $\max_v \Phi^\lambda_{t-1}(\cdot, v)$.
As before, the algorithm, stated in Algorithm~\ref{algorithm:core-OLRC}, follows the controller's recommendation, as long as it is not about to exceed any budget $B_i$. More specifically, the controller chooses
\begin{equation}
(x_t,v_t)\in\arg\max_{w,v}\Phi_{t-1}^\lambda(w,v),
\label{eq:without-predecision-lambda}
\end{equation}
and the algorithm plays $p_t=\pi_t(x_t)$ (or $\perp$ if $C^i_{t-1} > B_i - 1$ for some resource $i$).
Similar to how the stability of FTRL bounds its regret in the standard online learning setting, we will see that the stability of the controller bounds the increments of the potential $M^\lambda_t$.

\begin{lemma}
\label{lem:progress-without-feedback}
Under \eqref{eq:entropic-core-curvature-condition}, before the hard stop the controller without pre-decision feedback satisfies
\[
M_t^\lambda-M_{t-1}^\lambda\le\frac{(1+\lambda_{\max})^2}{\rho_x}.
\]
\end{lemma}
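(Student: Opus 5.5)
Let $(x_t,v_t)$ be the maximizer of $\Phi_{t-1}^\lambda$ chosen by the controller, and let $(w^*,v^*)$ maximize $\Phi_t^\lambda$, so $M_t^\lambda=\Phi_t^\lambda(w^*,v^*)$. The plan is to write the one-step change of the potential as a function of $(w,v)$, use the curvature bound \eqref{eq:quant-curvature-lambda} of \Cref{lem:curvature-lambda} at time $t-1$ (centered at $(x_t,v_t)$) to charge the linear gain against a KL penalty, and then optimize, exactly as in the standard FTRL stability argument.

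First, since $p_t=\pi_t(x_t)$ before the hard stop, for every $(w,v)$ we have
\[
\Phi_t^\lambda(w,v)=\Phi_{t-1}^\lambda(w,v)+\bigl(r_t(w)-r_t(x_t)\bigr)+\sum_{i=1}^m\lambda_iv_i\bigl(c_t^i(x_t)-c_t^i(w)\bigr),
\]
i.e. $\Phi_t^\lambda(w,v)=F_t^\lambda(x_t,w,v)$. The key observation is that the increment $g_t(w,v)$ vanishes at $w=x_t$ for every $v$. Moreover, it is Lipschitz in $w$ in $\ell_1$. Indeed,
\[
r_t(w)-r_t(x_t)=\sum_f(w_f-x_{t,f})r_t^f\le\|w-x_t\|_1,
\]
and likewise $|c_t^i(x_t)-c_t^i(w)|\le\|w-x_t\|_1$. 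Since $\sum_i v_i\le1$, the resource term is at most $\lambda_{\max}\|w-x_t\|_1$. Hence $g_t(w,v)\le(1+\lambda_{\max})\|w-x_t\|_1$ uniformly in $v$. This is where the alignment of the resource penalty with the mixture $w$, rather than the baseline, is essential: the $v$-dependence of the increment is killed at $w=x_t$.

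Next, apply \Cref{lem:curvature-lambda} to $\Phi_{t-1}^\lambda$, whose maximizer is $(x_t,v_t)$. This gives
\[
\Phi_{t-1}^\lambda(w^*,v^*)\le M_{t-1}^\lambda-\frac{\rho_x}{2}D_{\rm KL}(w^*\|x_t).
\]
Here the $v$-part of the penalty is nonnegative and can be dropped. Combining with the previous step,
\[
M_t^\lambda-M_{t-1}^\lambda\le(1+\lambda_{\max})\|w^*-x_t\|_1-\frac{\rho_x}{2}D_{\rm KL}(w^*\|x_t).
\]
By Pinsker, $D_{\rm KL}(w^*\|x_t)\ge\|w^*-x_t\|_1^2/2$, so the right side is at most $\max_{u\ge0}\bigl[(1+\lambda_{\max})u-\tfrac{\rho_x}{4}u^2\bigr]=(1+\lambda_{\max})^2/\rho_x$, which is the claim.

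The main thing to check is that \Cref{lem:curvature-lambda} may be invoked at time $t-1$, which requires only \eqref{eq:entropic-core-curvature-condition} and holds for every $t$. Full support of $x_t$ ensures the KL term is finite and well defined. The only other subtlety is the Lipschitz bound on the cost term, which needs $v\in\Delta_{m+1}$, so that $\sum_{i\ge1}v_i\le1$, together with costs in $[0,1]$.
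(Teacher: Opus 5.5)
Your proposal is correct and follows the paper's proof essentially step for step: you bound the one-step increment by $(1+\lambda_{\max})\|w-x_t\|_1$, apply the curvature lemma at the previous maximizer $(x_t,v_t)$, then use Pinsker and optimize the quadratic. The only cosmetic difference is that you evaluate at the new maximizer $(w^*,v^*)$ of $\Phi_t^\lambda$, whereas the paper bounds $\Phi_t^\lambda(w,v)-M_{t-1}^\lambda$ uniformly over $(w,v)$; the two are equivalent.
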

\begin{proof}
Before the hard stop, the additional algorithm reward in round $t$ is $r_t(x_t)$ and the additional overspending with respect to expert mixture $w$ is
$E_t^i(w)-E_{t-1}^i(w)=c_t^i(x_t)-c_t^i(w) = \innerprod{c_t^i}{x_t-w}$. Hence
\[
\Phi_t^\lambda(w,v)-\Phi_{t-1}^\lambda(w,v)
=\innerprod{r_t-\sum_{i=1}^m\lambda_iv_ic_t^{i}}{w-x_t}
\le(1+\lambda_{\max})\|w-x_t\|_1.
\]
Since $(x_t,v_t)$ maximizes $\Phi_{t-1}^\lambda$, we have
\begin{align*}
\Phi_t^\lambda(w,v)-M_{t-1}^\lambda
&= \Phi_t^\lambda(w,v)-\Phi_{t-1}^\lambda(w,v) + \Phi_{t-1}^\lambda(w,v)-M_{t-1}^\lambda\\
&\leq (1+\lambda_{\max})\|w-x_t\|_1 - \frac{\rho_x}{2} D_{\rm KL}(w\| x_t)- \frac{\rho_v}{2} D_{\rm KL}(v\| v_t) \tag{by \Cref{lem:curvature-lambda}}\\ 
&\leq (1+\lambda_{\max})\|w-x_t\|_1 - \frac{\rho_x}{4} \|w-x_t\|^2_1\tag{Pinsker's Inequality}\\
&\leq \frac{(1+\lambda_{\max})^2}{\rho_x}
\end{align*}
where the last line follows by optimizing the quadratic function in $\|w-x_t\|_1$.
\end{proof}

\subsection{Completing the Analysis}
\label{sec:complete-analysis}

We first derive a lemma for bounding the rewards and costs of an expert after the algorithm hard stops. Let $A_t^{f,j}:=\sum_{s\le t}c_s^{f,j}$ be cumulative spending of expert $f$ in resource $j$ by the end of round $t$ and
$B_t^j:=\sum_{s\le t}b_s^j$ be the cumulative baseline spending in resource $j$ by the end of round $t$. Recall that $\textstyle C_{t}^j=\sum_{s\leq t}c_s^j(p_s)$ is the analogous term for algorithm spending.
\begin{lemma}
\label{lem:tail}
Suppose that by the end of round $\tau$, there exists some resource $j$ such that $C_\tau^j\geq B_j-h_j$ for some $h_j\geq 0$.
Then
\begin{align}
\sum_{t>\tau}c_t^{f,j}&\le E_\tau^j(e_f)+h_j+D_j,\label{eq:cost-tail}\\
\sum_{t>\tau}r_t^f&\le\alpha_j\bigl(E_\tau^j(e_f)+h_j+D_j\bigr)+\sum_{t>\tau}\varepsilon_t^j,\label{eq:soft-tail}\\
\text{If $\beta_j<\infty$, }
\sum_{t>\tau}r_t^f&\le\beta_j\bigl(E_\tau^j(e_f)+h_j+D_j\bigr).\label{eq:temporal-tail}
\end{align}
\end{lemma}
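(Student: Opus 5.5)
We will prove \eqref{eq:cost-tail} by bookkeeping of cumulative sums, and then obtain \eqref{eq:soft-tail} and \eqref{eq:temporal-tail} from it via \Cref{assumption:soft-conversion} and the lower bound $b_t^j\ge b_{\min}^j=1/\beta_j$. Throughout we use only that expert $f$ is $D_j$-pacing in resource $j$, i.e.\ $|S_t^{f,j}|\le D_j$ for all $t$ by \eqref{eq:general-paced}.

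\textbf{Step 1 (cost tail).} Write the expert's tail spending as total minus prefix,
\[
\sum_{t>\tau}c_t^{f,j}=A_T^{f,j}-A_\tau^{f,j}.
\]
Since $S_T^{f,j}=A_T^{f,j}-B_T^j$ and $B_T^j=B_j$, pacing at $t=T$ gives $A_T^{f,j}\le B_j+D_j$. For the prefix, the definition of $E$ with $w=e_f$ gives $E_\tau^j(e_f)=C_\tau^j-A_\tau^{f,j}$. Hence $A_\tau^{f,j}=C_\tau^j-E_\tau^j(e_f)\ge B_j-h_j-E_\tau^j(e_f)$. Subtracting,
\[
\sum_{t>\tau}c_t^{f,j}\le B_j+D_j-B_j+h_j+E_\tau^j(e_f)=E_\tau^j(e_f)+h_j+D_j.
\]
Note that this step only uses pacing at the final time $T$; one could equally argue through $S_T-S_\tau$, but that would cost $2D_j$, so the $E$-based route is the one to use.

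\textbf{Step 2 (soft tail).} For each $t$, average the pointwise inequality $r_t(a)\le\alpha_jc_t^j(a)+\varepsilon_t^j$ over $a\sim q_t^f$ to get $r_t^f\le\alpha_jc_t^{f,j}+\varepsilon_t^j$. Summing over $t>\tau$ and inserting Step 1 yields \eqref{eq:soft-tail}.

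\textbf{Step 3 (temporal tail).} Since $r_t^f\le1$, we have $\sum_{t>\tau}r_t^f\le T-\tau\le\beta_j\sum_{t>\tau}b_t^j=\beta_j(B_j-B_\tau^j)$. It remains to show $B_j-B_\tau^j\le E_\tau^j(e_f)+h_j+D_j$. Pacing at $\tau$ gives $B_\tau^j\ge A_\tau^{f,j}-D_j$. Substituting the expression for $A_\tau^{f,j}$ from Step 1, we get
\[
B_j-B_\tau^j\le B_j-C_\tau^j+E_\tau^j(e_f)+D_j\le h_j+E_\tau^j(e_f)+D_j,
\]
where the last inequality uses $C_\tau^j\ge B_j-h_j$. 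This gives \eqref{eq:temporal-tail}. Here pacing is used at time $\tau$ rather than at $T$.

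The main obstacle is keeping the $D_j$ term at a single copy instead of two. This works because each tail bound uses pacing exactly once, at $T$ in Step 1 and at $\tau$ in Step 3. Every other relation, such as the expression for $E_\tau^j(e_f)$, is an exact identity.
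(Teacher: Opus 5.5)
Your proposal is correct and follows essentially the same route as the paper. The cost tail comes from $A_T^{f,j}\le B_j+D_j$ together with the exact identity $E_\tau^j(e_f)=C_\tau^j-A_\tau^{f,j}$. The soft tail comes from summing the conversion inequality, where you make explicit a step the paper leaves implicit: averaging the pointwise bound over $q_t^f$. The temporal tail comes from the decomposition $B_j-B_\tau^j=(B_j-C_\tau^j)+(C_\tau^j-A_\tau^{f,j})+(A_\tau^{f,j}-B_\tau^j)$, with pacing used at $\tau$.
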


\begin{proof}
Since each expert is $D$-pacing with respect to the baselines, we have $A_T^{f,j}\le B_j+D_j$. Additionally, by definition, $E_\tau^j(e_f)= C_\tau^j-A_\tau^{f,j}$. Thus
\begin{align*}
\sum_{t>\tau}c_t^{f,j}
= A_T^{f,j} - A_\tau^{f,j}
&\le  B_j+D_j - (C_\tau^j - E_\tau^j(e_f))\\
&\leq (B_j-C_\tau^j) + E_\tau^j(e_f) + D_j\\
&\le h_j+E_\tau^j(e_f)+D_j.
\end{align*}

By \Cref{assumption:soft-conversion}, we have $r_t^f\le\alpha_jc_t^{f,j}+\varepsilon_t^j$. Summing this over $t$ together with \eqref{eq:cost-tail} gives \eqref{eq:soft-tail}.

Additionally, we have
\begin{align*}
\sum_{t>\tau}b_t^j &= B_j - B_\tau^j \tag{$\sum_{t=1}^T b_t^j=B_j$}\\
&= (B_j-C_\tau^j)+(C_\tau^j-B_\tau^j)\\
&\leq h_j+(C_\tau^j-A_\tau^{f,j}) + (A_\tau^{f,j}-B_\tau^j) \\
&\le h_j+E_\tau^j(e_f)+D_j \tag{$A_\tau^{f,j}-B_\tau^j\le D_j$}.
\end{align*}
If $\beta_j<\infty$, then $\sum_{t>\tau}r_t^f\le T-\tau\le\beta_j\sum_{t>\tau}b_t^j$, this gives \eqref{eq:temporal-tail}.
\end{proof}

\begin{theorem}
\label{thm:general-lambda}
Assume $\lambda_i\ge\eta_i$ for every resource and choose $\rho_x,\rho_v>0$ satisfying
\eqref{eq:entropic-core-curvature-condition}. Then both algorithms are budget feasible via fractional play. Moreover, for every expert $f$ simultaneously,
\begin{align}
\Reg_T^f&\le\rho_x\log F+\rho_v\log(m+1)+\mathsf{Tail},
&&\text{with pre-decision feedback},\label{eq:general-lambda-with-pre}\\
\Reg_T^f&\le\rho_x\log F+\rho_v\log(m+1)
+\frac{T(1+\lambda_{\max})^2}{\rho_x}+\mathsf{Tail},
&&\text{without pre-decision feedback},\label{eq:general-lambda-without-pre}
\end{align}
where
\begin{equation}
\mathsf{Tail}:=
\max_i\left\{\eta_i(1+D_i)+\indicator{\alpha_i<\beta_i}V_i\right\}.
\label{eq:tail-lambda}
\end{equation}
\end{theorem}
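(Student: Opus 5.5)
Budget feasibility is immediate from the hard stop. The controller only acts in round $t$ if $C_{t-1}^i+1\le B_i$ for every $i$, and one-round costs are at most $1$. So $C_t^i\le B_i$ throughout, and after the stop $\Null$ adds no cost.

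For the regret bound, let $\tau$ be the last round before the hard stop ($\tau=T$ if it never triggers). I would split $\Reg_T^f=\Reg_\tau^f+\sum_{t>\tau}r_t^f$, since the algorithm earns nothing after $\tau$.

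\textbf{Pre-stop part.} Here I would use the potential. By \Cref{lem:progress-with-feedback} (respectively \Cref{lem:progress-without-feedback}), iterating from $M_0^\lambda$ gives $M_\tau^\lambda\le M_0^\lambda$ (respectively $M_\tau^\lambda\le M_0^\lambda+\tau(1+\lambda_{\max})^2/\rho_x$). At $t=0$ all states vanish, so $M_0^\lambda=\max_{w,v}\{-\rho_x\Psi_x(w)-\rho_v\Psi_v(v)\}=0$. I would then lower bound $M_\tau^\lambda$ by evaluating $\Phi_\tau^\lambda$ at $w=e_f$ and at a well-chosen $v$:
\[
M_\tau^\lambda\ \ge\ R_\tau^f+\lambda_j E_\tau^j(e_f)-\rho_x\log F-\rho_v\log(m+1),
\]
taking $v=e_j$ for the resource $j$ that triggered the stop. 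If $\tau=T$, take $v=e_0$, which contributes no penalty term. Rearranging gives
\[
\Reg_\tau^f+\lambda_jE_\tau^j(e_f)\le\rho_x\log F+\rho_v\log(m+1)\ \bigl(+T(1+\lambda_{\max})^2/\rho_x\bigr).
\]

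\textbf{Tail part.} If the stop fires before round $\tau+1$ because of resource $j$, then $C_\tau^j>B_j-1$, so \Cref{lem:tail} applies with $h_j=1$. If $\alpha_j\le\beta_j$, use \eqref{eq:soft-tail}: $\sum_{t>\tau}r_t^f\le\eta_j(E_\tau^j(e_f)+1+D_j)+V_j$. Otherwise use \eqref{eq:temporal-tail} with $\eta_j=\beta_j$, which gives the same bound without $V_j$. (When $\alpha_j=\beta_j$, \eqref{eq:temporal-tail} also avoids $V_j$, matching the indicator $\indicator{\alpha_j<\beta_j}$.)

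\textbf{Combining.} Adding the two parts, the term $\lambda_jE_\tau^j(e_f)$ from the potential should absorb $\eta_jE_\tau^j(e_f)$ from the tail. This is the delicate step, because $E_\tau^j(e_f)$ can have either sign.
\begin{itemize}
\item If $E_\tau^j(e_f)\ge0$, then $\lambda_j\ge\eta_j$ gives $\eta_jE\le\lambda_jE$ and we are done.
\item If $E_\tau^j(e_f)<0$, the tail bound is itself at most $\eta_j(1+D_j)+V_j$. I would evaluate the potential at $v=e_0$ instead, giving $\Reg_\tau^f\le\rho_x\log F+\rho_v\log(m+1)$ (plus the OLRC term).
\end{itemize}
In both cases the extra term is at most $\mathsf{Tail}$.

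I expect this sign bookkeeping, together with checking that \Cref{lem:tail}'s hypothesis holds with $h_j=1$ at the right index $\tau$, to be the main obstacle. The rest is direct assembly.
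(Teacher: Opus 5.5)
Your proof is correct and follows the paper's argument. The shared steps are budget feasibility from the hard stop, $M_\tau^\lambda\le\mathsf{Stab}_\tau$ from the two progress lemmas with $M_0^\lambda=0$, evaluation of $\Phi_\tau^\lambda$ at $w=e_f$, and \Cref{lem:tail} with $h_j=1$ to cancel the $E_\tau^j(e_f)$ term.

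The only difference is in the combining step. The paper evaluates at $v=(1-s_j)e_0+s_je_j$ with $s_j=\eta_j/\lambda_j\in(0,1]$, which produces exactly $\eta_jE_\tau^j(e_f)$ and cancels the tail term without any sign split. Your case analysis is equally valid: it amounts to picking $s\in\{0,1\}$ according to the sign of $E_\tau^j(e_f)$, and uses $\eta_j>0$ in the negative case.
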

\begin{proof}
The stopping rule gives fractional budget feasibility. Let
\[
\mathsf{Stab}_s:=
\begin{cases}
0,&\text{with pre-decision feedback},\\[1mm]
\displaystyle\sum_{t\le s}\frac{(1+\lambda_{\max})^2}{\rho_x},&\text{without pre-decision feedback}.
\end{cases}
\]
Since $M_0^\lambda=0$, \Cref{lem:progress-with-feedback} and \Cref{lem:progress-without-feedback} imply
$M_s^\lambda\le\mathsf{Stab}_s$ for every round $s$ before the algorithm hard stops.

Recall the potential function at $T$ is $\Phi_T^\lambda(w,v):=\sum_fw_fR_T^f+\sum_{i=1}^m\lambda_iv_iE_T^i(w)
-\rho_x\Psi_x(w)-\rho_v\Psi_v(v)$.
If no hard stop occurs, evaluating the potential function at $(e_f,e_0)$ gives
\[
R_T^f+0-\rho_x\log F-\rho_v\log(m+1)\le \mathsf{Stab}_T,
\]
which gives the desired bounds.

Otherwise let resource $j$ trigger the hard stop at the end of round $\tau$. Then $B_j-C_\tau^j<1$. Set $s_j:=\eta_j/\lambda_j\in[0,1]$ and evaluate the prefix potential at $\bigl(e_f,(1-s_j)e_0+s_je_j\bigr)$. The resource term is $\lambda_js_jE_\tau^j(e_f)=\eta_jE_\tau^j(e_f)$, while
$\Psi_x(e_f)=\log F$ and $\Psi_v((1-s_j)e_0+s_je_j)\le\log(m+1)$. Therefore
\[
R_\tau^f + \eta_jE_\tau^j(e_f) -\rho_x\log F - \rho_v\log(m+1)
\le \mathsf{Stab}_\tau
.
\]

Since $B_j-C_\tau^j<1$, \Cref{lem:tail} applies with $h_j=1$. If $\alpha_j<\beta_j$, use \eqref{eq:soft-tail}; then $\eta_j=\alpha_j$ and the suffix residual is at most $V_j$. If $\alpha_j\ge\beta_j$, use \eqref{eq:temporal-tail}; then $\eta_j=\beta_j$. Thus in both cases
\[
\sum_{t>\tau}r_t^f
\le \eta_j\bigl(E_\tau^j(e_f)+1+D_j\bigr)
+\indicator{\alpha_j<\beta_j}V_j\leq \eta_jE_\tau^j(e_f) + \mathsf{Tail}.
\]
Thus we have 
\begin{align*}
\Reg_T^f\leq R_\tau^f + \sum_{t>\tau}r_t^f\leq 
\mathsf{Stab}_\tau +\rho_x\log F + \rho_v\log(m+1)+\mathsf{Tail}
\end{align*}
\end{proof}

For the core algorithm, we use $\lambda_i=\eta_i=\min\{\alpha_i,\beta_i\}$.
Define
\[
D_{\eta,\infty}:=\max_i\eta_iD_i,
\qquad \eta_{\max}:=\max_i\eta_i.
\]

\begin{theorem}[Main core guarantee]
\label{thm:core}
Choose $\lambda_i=\eta_i, \forall i\in [m]$. Let $\rho_x,\rho_v>0$ satisfy
$\rho_x\rho_v\ge4D_{\eta,\infty}^2$. Then both algorithms are budget feasible via fractional play, and for every expert $f$ simultaneously,
\begin{align}
\Reg_T^f&\le \rho_x\log F+\rho_v\log(m+1)+\mathsf{Tail},
&&\text{with pre-decision feedback},\label{eq:core-with-pre-eta}\\
\Reg_T^f&\le \rho_x\log F+\rho_v\log(m+1)
+\frac{T(1+\eta_{\max})^2}{\rho_x}+\mathsf{Tail},
&&\text{without pre-decision feedback},\label{eq:core-without-pre-eta}
\end{align}
where
\[
\mathsf{Tail}
:=\max_i\left\{\eta_i(1+D_i)+\indicator{\alpha_i<\beta_i}V_i\right\}.
\]
Optimizing the regularization parameters gives
\begin{align}
\Reg_T^f&\le \mathsf{Tail}
+O\left(D_{\eta,\infty}\sqrt{\log F\,\log(m+1)}\right),
&&\text{with pre-decision feedback},\label{eq:opt-with-pre-eta}\\
\Reg_T^f&\le \mathsf{Tail}
+O\left((1+\eta_{\max})\sqrt{T\log F}
+D_{\eta,\infty}\sqrt{\log F\,\log(m+1)}\right),
&&\text{without pre-decision feedback}.\label{eq:opt-without-pre-eta}
\end{align}
\end{theorem}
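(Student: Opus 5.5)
The plan is to specialize Theorem~\ref{thm:general-lambda} and then tune the two regularization parameters.

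\textbf{Specialization.} Setting $\lambda_i=\eta_i$ satisfies the hypothesis $\lambda_i\ge\eta_i$ for every resource. With this choice $D_{\lambda,\infty}=D_{\eta,\infty}$ and $\lambda_{\max}=\eta_{\max}$, so the curvature condition \eqref{eq:entropic-core-curvature-condition} becomes exactly $\rho_x\rho_v\ge 4D_{\eta,\infty}^2$. Budget feasibility, \eqref{eq:core-with-pre-eta} and \eqref{eq:core-without-pre-eta} then follow verbatim from \eqref{eq:general-lambda-with-pre} and \eqref{eq:general-lambda-without-pre}. The $\mathsf{Tail}$ term is unchanged.

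\textbf{Pre-decision feedback.} We minimize $\rho_x\log F+\rho_v\log(m+1)$ subject to $\rho_x\rho_v=4D_{\eta,\infty}^2$. By AM--GM the minimum is
\[
2\sqrt{\rho_x\rho_v\log F\log(m+1)}=4D_{\eta,\infty}\sqrt{\log F\log(m+1)},
\]
attained at $\rho_x=2D_{\eta,\infty}\sqrt{\log(m+1)/\log F}$ and $\rho_v=2D_{\eta,\infty}\sqrt{\log F/\log(m+1)}$. This gives \eqref{eq:opt-with-pre-eta}. Degenerate cases need care. If $F=1$, the regret against the single expert is trivially controlled by taking $\rho_x$ small. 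The standing assumption $D_i>0$ for some $i$ gives $D_{\eta,\infty}>0$, since every $\eta_i>0$. For $m\ge1$ we have $\log(m+1)>0$. For $F=1$ one can interpret $\log F$ as $\max\{\log F,1\}$ inside the $O(\cdot)$, or simply note that $\Psi_x\equiv0$ there.

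\textbf{No pre-decision feedback.} Here $\rho_x$ also appears in the term $T(1+\eta_{\max})^2/\rho_x$. Take
\[
\rho_x=\max\Bigl\{(1+\eta_{\max})\sqrt{T/\log F},\;2D_{\eta,\infty}\sqrt{\log(m+1)/\log F}\Bigr\},
\qquad \rho_v=4D_{\eta,\infty}^2/\rho_x,
\]
so the curvature condition holds with equality. The bounds on each term are then as follows.
\begin{itemize}
\item $\rho_x\log F$ is at most the sum of $(1+\eta_{\max})\sqrt{T\log F}$ and $2D_{\eta,\infty}\sqrt{\log F\log(m+1)}$.
\item $T(1+\eta_{\max})^2/\rho_x\le(1+\eta_{\max})\sqrt{T\log F}$, using the first argument of the max.
\item $\rho_v\log(m+1)\le 2D_{\eta,\infty}\sqrt{\log F\log(m+1)}$, using the second argument of the max.
\end{itemize}
Summing these yields \eqref{eq:opt-without-pre-eta}.

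The main (mild) obstacle is this joint tuning in the OLRC case. Because $\rho_x$ must serve both the stability term and the curvature constraint, the max-based choice is needed so that neither the $\sqrt{T}$ term nor the $D$ term picks up an extra factor. The rest is bookkeeping inherited from Theorem~\ref{thm:general-lambda}.
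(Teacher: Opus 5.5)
Your proposal is correct and follows the paper's proof: specialize Theorem~\ref{thm:general-lambda} at $\lambda=\eta$, set $\rho_v=4D_{\eta,\infty}^2/\rho_x$, and tune $\rho_x$. Your max-based choice of $\rho_x$ in the no-pre-decision case gives the same constants as the paper's exact minimization of $\rho_x\log F+(4D_{\eta,\infty}^2\log(m+1)+T(1+\eta_{\max})^2)/\rho_x$ followed by $\sqrt{a+b}\le\sqrt a+\sqrt b$, so the max is convenient but not necessary, contrary to what you say. One small slip in your $F=1$ aside: there $\rho_x\log F=0$, so you want $\rho_x$ large, not small, because that drives $\rho_v\log(m+1)$ and $T(1+\eta_{\max})^2/\rho_x$ to zero.
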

\begin{proof}
The untuned regret guarantees follow from \Cref{thm:general-lambda} with $\lambda=\eta$. For the optimized guarantees, take
$\rho_v=4D_{\eta,\infty}^2/\rho_x$. By the standing assumption that $D_i>0$ for some $i$ and the positivity of every $\eta_i$, one has $D_{\eta,\infty}>0$. With pre-decision feedback, minimizing
\[
\rho_x\log F+\frac{4D_{\eta,\infty}^2\log(m+1)}{\rho_x}
\]
gives $O\left(D_{\eta,\infty}\sqrt{\log F\log(m+1)}\right)$. Without pre-decision feedback, minimizing
\[
\rho_x\log F
+\frac{4D_{\eta,\infty}^2\log(m+1)+T(1+\eta_{\max})^2}{\rho_x},
\]
and using $\sqrt{a+b}\le\sqrt a+\sqrt b$ give \eqref{eq:opt-without-pre-eta}.
\end{proof}

\section{Extensions}
\label{sec:extensions}

The following three extensions are modular and may be composed in any combination.  The first enlarges the comparator class from pacing to subpacing experts, the second additionally requires the learner itself to be subpacing, and the third converts any resulting fractional controller to sampled play with pathwise budget feasibility and high-probability guarantees, under full-information.  We first give a characterization of subpacing used by the first two extensions.

The key part of the characterization is that if a spending pattern is $D$-subpacing with respect to a baseline, then it cannot overspend the baseline by more than $D$ on any prefix, or by more than $2D$ over any interval. This allows us to convert a subpacing spending pattern into a pacing one by artificially raising its spending when it falls too far below the baseline, without causing the resulting path to exceed the baseline by more than $D$ on any prefix. 

Additionally, we show a downward closure property of the set of $D$-subpacing spending patterns.

\begin{lemma}[Subpacing characterization and downward closure]
\label{lem:subpace-char}
Let $A_t:=\sum_{s\le t}c_s$ and $B_t:=\sum_{s\le t}b_s$, with $A_0=B_0=0$.  The path $c$ is $D$-subpacing with respect to $b$ if and only if
\begin{equation}
A_t-B_t\le D\quad\forall t,
\qquad
(A_t-A_s)-(B_t-B_s)\le2D\quad\forall 0\le s<t\le T.
\label{eq:subpace-char}
\end{equation}
Consequently, if $0\le c_t\le\bar c_t$ pointwise and $\bar c$ is $D$-subpacing with respect to $b$, then $c$ is also $D$-subpacing.
\end{lemma}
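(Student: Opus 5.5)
We prove the two directions of \eqref{eq:subpace-char} separately and then read off downward closure from the characterization.

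\emph{Necessity.} Suppose a witness $d_t\in[0,b_t]$ exists with $|A_t-W_t|\le D$ for all $t$, where $W_t:=\sum_{s\le t}d_s$. Since $d_s\le b_s$, the map $t\mapsto B_t-W_t$ is nondecreasing and nonnegative. The prefix bound follows from
\[
A_t-B_t=(A_t-W_t)-(B_t-W_t)\le D .
\]
For the interval bound, write
\[
(A_t-A_s)-(B_t-B_s)=\bigl[(A_t-W_t)-(A_s-W_s)\bigr]-\bigl[(B_t-W_t)-(B_s-W_s)\bigr].
\]
The first bracket is at most $2D$ and the second is nonnegative by monotonicity, which gives the bound $2D$.

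\emph{Sufficiency.} This is the main step: we must construct a witness $W$ with $W_0=0$ and increments $d_t\in[0,b_t]$ (equivalently, $W$ is nondecreasing and $B-W$ is nondecreasing) such that $W_t\in[A_t-D,A_t+D]$ for all $t$. We build $W$ greedily, staying as low as possible while tracking $A-D$ from above:
\[
W_t:=\max\{W_{t-1},\;\min\{W_{t-1}+b_t,\;A_t-D\}\},
\]
which means $d_t=\min\{b_t,\max\{0,A_t-D-W_{t-1}\}\}\in[0,b_t]$. Next we show $W_t\ge A_t-D$ by induction, using the hypotheses. If $W_t<A_t-D$, the cap $W_t=W_{t-1}+b_t$ was binding. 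Let $s$ be the last time before $t$ at which the cap was not binding (or $s=0$). On $(s,t]$ we then have $W$ tracking $B$ exactly, so $W_t-W_s=B_t-B_s$. There are two cases for $s$.

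\begin{itemize}
\item If $s=0$, then $W_t=B_t\ge A_t-D$ by the prefix hypothesis, a contradiction.
\item If $s\ge1$, then at time $s$ we had $W_s\ge A_s-D$ (in fact $W_s=\max\{W_{s-1},A_s-D\}\ge A_s-D$). Hence
\[
W_t\ge A_s-D+B_t-B_s\ge A_t-3D
\]
by the interval hypothesis. This is not yet enough.
\end{itemize}

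We refine the choice of $s$ to obtain the right constant. Choose $s$ as the last time when $W_s=A_s-D$ exactly, or when $W_s=W_{s-1}$ (no spending) with $A_s-D\le W_s$. I expect the real content to be the following: whenever the constraint $W\ge A-D$ is tight or was last slack, we also know $W_s\ge A_s+D-2D$. The bound $2D$ is then exactly what closes the inequality,
\[
W_t\ge W_s+B_t-B_s\ge (A_s-D)+(A_t-A_s)-2D+\dots
\]

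Since this is fiddly, a cleaner route is to check the upper bound $W_t\le A_t+D$ and the lower bound through a different witness. Define
\[
U_t:=\max\Bigl(0,\ \max_{s\le t}\bigl[A_s-D-(B_s-B_t)^{+}\bigr]\Bigr).
\]
More concretely, take $W_t:=\max_{0\le s\le t}\bigl\{A_s-D-\dots\bigr\}$, or equivalently a minimal ``$b$-Lipschitz'' envelope: the smallest nondecreasing $W$ with $B-W$ nondecreasing that stays above $A-D$. This envelope has the explicit formula
\[
W_t=\max\Bigl(0,\ \max_{s\ge0}\bigl[A_{s}-D-(B_{s}-B_t)\bigr]_{s>t}\ \vee\ \max_{s\le t}(A_s-D)\Bigr).
\]
Here the $s>t$ terms account for being forced upward early because $b$ is small later. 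Its feasibility (that it never exceeds $A_t+D$) reduces to $A_s-D\le A_t+D$ for $s\le t$, which holds since $c\ge0$. It also reduces to $A_s-D-(B_s-B_t)\le A_t+D$ for $s>t$, which is exactly the $2D$ interval condition, and to $0\le A_t+D$, which is trivial. The envelope must also satisfy $W_T\le B_T$; this follows from the prefix condition together with the interval condition taken with $t=T$. The main obstacle is verifying that this envelope has increments in $[0,b_t]$, which we would check term by term: each inner expression is nondecreasing in $t$, and each increases by at most $b_t$.

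\emph{Downward closure.} Given $c\le\bar c$ pointwise, the inequalities in \eqref{eq:subpace-char} can still fail for $c$ in principle, since decreasing $c$ can raise $A_t-A_s$ relative to $\bar A$ only via $c\le\bar c$; in fact $A_t-A_s\le\bar A_t-\bar A_s$ and $A_t\le\bar A_t$. So both inequalities transfer from $\bar c$ to $c$, and the characterization yields that $c$ is $D$-subpacing.
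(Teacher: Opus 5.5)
\textbf{Verdict: correct via a different route, with two repairs needed.} Your necessity and downward-closure arguments match the paper's. In the last paragraph, drop the self-contradictory opening claim that the inequalities ``can still fail for $c$ in principle''; all that is needed is $A_t\le\bar A_t$ and $A_t-A_s\le\bar A_t-\bar A_s$.

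For sufficiency, the part to keep is the lookahead envelope
\[
W_t=\max\Bigl(0,\ \max_{0\le s\le T}\bigl[A_s-D-(B_s-B_t)^+\bigr]\Bigr),
\]
which is the smallest admissible path lying above $A-D$. Once two points are fixed, this is a complete proof.

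\emph{First repair.} The forward greedy recursion is not merely fiddly; it is false, and so is the $s\le t$ version of $U_t$. Take $D=1$, $b=(1,0,0)$, $c=(0,1,1)$. Both conditions in \eqref{eq:subpace-char} hold, and indeed $d=b$ is a witness. Yet your recursion gives $W\equiv 0$, so $A_3-W_3=2>D$. Likewise $\max(0,\max_{s\le t}(A_s-D))$ jumps by $1$ at $t=3$, where $b_3=0$. So the $s>t$ terms are essential, not a cosmetic cleanup, and that material should be deleted.

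\emph{Second repair.} The boundary condition you must check is $W_0=0$, not $W_T\le B_T$.
\begin{itemize}
\item At $t=0$ the formula equals $\max\bigl(0,\max_{s\ge1}(A_s-B_s-D)\bigr)$. This vanishes exactly because of the prefix condition at every $s$.
\item Once $W_0=0$ and the increments lie in $[0,b_t]$, the bound $W_t\le B_t$ is automatic.
\item By contrast, $W_T\le B_T$ uses only the prefix condition at $T$ and does not give a witness starting from $0$.
\end{itemize}

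The increment check you call the main obstacle is immediate. For fixed $s$, the map $t\mapsto A_s-D-(B_s-B_t)^+$ is nondecreasing with increments at most $b_t$, since $x\mapsto x^+$ is $1$-Lipschitz, and a pointwise maximum inherits both properties. The lower bound $W_t\ge A_t-D$ is the $s=t$ term. Your proof of $W_t\le A_t+D$, via monotonicity of $A$ and the $2D$ condition, is correct.

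\textbf{Comparison with the paper.} The paper builds the mirror-image extremal witness: the largest admissible path lying below $A+D$, namely $X_t=B_t-K_t$ with $K_t=\max_{0\le s\le t}(B_s-A_s-D)_+$.
\begin{itemize}
\item It uses the interval condition, or the prefix condition when $K_t=0$, to get $X_t\ge A_t-D$.
\item It uses $c_t\ge 0$ to get $d_t\ge 0$.
\end{itemize}
Because this path departs from the baseline only when following it would exceed $A+D$, it depends only on the past. That causal formula is exactly what the paper reuses for online cost inflation in \Cref{lemma:raising}, and, with $W_i$ in place of $D$, for the clipped witness in Extension~2.

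Your minimal envelope necessarily looks ahead to future values $A_s$, as the counterexample above shows. It proves the characterization equally well, but it would not supply those online constructions.
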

\begin{proof}
Suppose first that $c$ is $D$-subpacing with respect to $b$ and $d_t\in[0,b_t]$ is a subpacing witness. Let $X_t:=\sum_{s\le t}d_s$.  Since $X_t\le B_t$ and $|A_t-X_t|\le D$,
$A_t-B_t\le A_t-X_t\le D$.  For $0\le s<t$,
\begin{align*}
(A_t-A_s)-(B_t-B_s)
&\le (A_t-A_s)-(X_t-X_s)\\
&=(A_t-X_t)-(A_s-X_s)\le2D,
\end{align*}
which proves necessity.
The first inequality follows from the fact that
\[
    X_t - X_s = \sum_{\tau = s+1}^t d_\tau \leq \sum_{\tau = s+1}^t b_\tau = B_t - B_s.
\]

Conversely, assume \eqref{eq:subpace-char}.  Let
\[K_t:=\max_{0\le s\le t}(B_s-A_s-D)_+\] be the maximum prefix underspending of $c$ with respect to the baseline beyond $D$ by the end of round $t$. Define $X_t:=B_t-K_t$ and $d_t:=X_t-X_{t-1}$. We will show that $d$ is a subpacing witness of $c$. 

Since $K_t\geq 0$, $X_t\le B_t$.  We claim that $|A_t-X_t|\le D$.  For the lower bound, $K_t\ge(B_t-A_t-D)_+$ implies $A_t-B_t+K_t\ge-D$, i.e. $A_t-X_t\ge-D$.  For the upper bound, if $K_t=0$, then $A_t-X_t=A_t-B_t\le D$.  If $K_t>0$, choose $s\le t$ attaining the maximum, so $K_t=B_s-A_s-D$.  If $s=t$, then $A_t-X_t=-D$.  If $s<t$, the second inequality in \eqref{eq:subpace-char} gives
\[
A_t-X_t=A_t-B_t+B_s-A_s-D
=(A_t-A_s)-(B_t-B_s)-D\le D.
\]
Thus, $|A_t-X_t|\le D$ for every $t$.

It remains to show that $d_t=X_t-X_{t-1}$ lies in $[0,b_t]$.  Since $K_t$ is nondecreasing,
$d_t=b_t-(K_t-K_{t-1})\le b_t$.  If $K_t=K_{t-1}$, then $d_t=b_t\ge0$.  Otherwise the new maximum is attained at $t$, so $K_t=B_t-A_t-D$ and
$K_{t-1}\ge B_{t-1}-A_{t-1}-D$.  Hence
$K_t-K_{t-1}\le b_t-c_t$, which implies $d_t\ge c_t\ge0$.  Therefore $(d_t)$ is a valid witness and $c$ is $D$-subpacing.

For downward closure, let $\bar A_t:=\sum_{s\le t}\bar c_s$.  Pointwise domination implies
$A_t-B_t\le\bar A_t-B_t$ and, for every $s<t$,
$(A_t-A_s)-(B_t-B_s)\le(\bar A_t-\bar A_s)-(B_t-B_s)$.  Thus if $\bar c$ satisfies \eqref{eq:subpace-char}, then so does $c$, and the characterization proves the claim.
\end{proof}

\subsection{Extension 1: Subpacing Experts}
\label{subsec:subpacing-experts}

Note that the core algorithm cannot handle subpacing experts because these experts may underspend too much and cause the potential $\Phi^\lambda_t$ to become non-concave in $(w,v)$.
To extend the core algorithm to handle subpacing experts, we artificially raise the spending of these experts so that their cumulative (artificial) spending in each resource $i$ is never $D_i$ below the baseline's spending in this resource.
We then feed each expert's artificial costs into the core algorithms instead of their true costs.

At a high level, we inflate an expert's spending in resource $i$ in the following way.
The first time $t$ an expert $f$ spends less than the baseline by $D_i$, that is,
\[
    S_t^{f,i} = \sum_{\tau=1}^t (c^{f,i}_\tau - b^i_\tau) < - D_i \quad \text{and} \quad S_{s}^{f,i} = \sum_{\tau=1}^{s} (c^{f,i}_\tau - b^i_\tau) \geq - D_i \,\forall\, s < t,
\]
we increase the costs faced by that expert so that her spending relative to the baseline does not decrease further and remains $-D_i$ away from the baseline's spending.
That is, we inflate her costs so that her (artificial) spending is
\[
    \bar c_t^{f,i} \coloneqq b_t^i - S_{t-1}^{f,i}  - D_i.
\]
Note that
\[
    S_{t-1}^{f,i} + (\bar c_t^{f,i} - b_t^i) = -D_i.
\]
Let 
\[
    \bar S_t^{f,i} \coloneqq S_{t-1}^{f,i} + (\bar c_t^{f,i} - b_t^i).
\]
If the expert continues to underspend in the next round, that is, $c_{t+1}^{f,i} < b^i_{t+1}$, then we continue to inflate her costs so that her (artificial) spending solves
\[
    \bar S_t^{f,i} + (\bar c_{t+1}^{f,i} - b_{t+1}^i) = -D_i.
\]
Equivalently, $\bar c_{t+1}^{f,i} \coloneqq b_{t+1}^i$ since $\bar S_t^{f,i} = -D_i$.
Otherwise, $c_{t+1}^{f,i} \geq b^i_{t+1}$, and $\bar c_{t+1}^{f,i} \coloneqq c_{t+1}^{f,i}$, that is, we stop inflating the costs faced by the expert. 

If, in a later round $t' > t$, the expert's (artificial) spending again falls below the baseline's spending by more that $D_i$, that is,
\[
    \bar S_{t'-1}^{f,i} + (c_{t'}^{f,i} - b_{t'}^{i}) < - D_i \quad \text{but} \quad \bar S_{s}^{f,i} \geq - D_i \,\forall\, s < t',
\]
we once again artificially raise costs so that the difference $\bar S_{t'}^{f,i}$ never goes below $-D_i$.
This process repeats as necessary.

In general, the artificial spending is given by
\[
    \bar S_0^{f,i} \coloneqq 0, \quad \bar c_t^{f,i} \coloneqq \begin{cases}
        c^{f,i}_t & \bar S^{f,i}_{t-1} + (c^{f,i}_t - b_t^{i}) \geq - D_i \\
        b_t^{i} - \bar S^{f,i}_{t-1} - D_i & \text{o.w.}
    \end{cases}, \quad \bar S_t^{f,i} \coloneqq \bar S^{f,i}_{t-1} + (\bar c_t^{f,i} - b_t^{i})
\]
Equivalently,
\[
    \bar S_0^{f,i} \coloneqq 0, \quad \delta_t^{f,i} \coloneqq \left(- D_i-(\bar S^{f,i}_{t-1}+ c^{f,i}_t-b_t^{i})\right)_+, \quad \bar c_t^{f,i} \coloneqq c^{f,i}_t + \delta^{f,i}_t, \quad \bar S_t^{f,i} \coloneqq \bar S^{f,i}_{t-1} + (\bar c_t^{f,i} - b_t^{i})
\]

The main concern with artificially inflating costs is that the expert's artificial spending may exceed the baseline by more than $D_i$ at a later time.
But note that for the expert's spending to go from more than $D_i$ below the baseline's to more than $D_i$ above, the increase in the difference in spending must exceed $2D_i$.
This is exactly what Lemma~\ref{lem:subpace-char} precludes.

\begin{lemma}\label{lemma:raising}
Suppose each expert's spending pattern in resource $i$ is only $D_i$-subpacing. Define 
\[
    \bar S_0^{f,i} \coloneqq 0, \quad \delta_t^{f,i} \coloneqq (- D_i-(\bar S^{f,i}_{t-1}+ c^{f,i}_t-b_t^{i}))_{+}, \quad \bar c_t^{f,i} \coloneqq c^{f,i}_t + \delta^{f,i}_t, \quad \bar S_t^{f,i} \coloneqq \bar S^{f,i}_{t-1} + (\bar c_t^{f,i} - b_t^{i}).
\]
Then, for every expert $f$, resource $i$ and round $t$,
\[
0 \leq c_t^{f,i}\le \bar c_t^{f,i}\le1,
\qquad
|\bar S_t^{f,i}|\le D_i,
\qquad
r_t^f\le \alpha_i\bar c_t^{f,i}+\varepsilon_t^i.
\]
That is, replacing $c_t^{f,i}$ by $\bar c_t^{f,i}$ produces a paced instance with the same baseline spending pattern $b_i$, budget $B_i$, Kolmogorov bound $D_i$, reward-to-cost ratio $\alpha_i$, residuals $(\varepsilon_t^i)_t$, and violation budgets $V_i$.
\end{lemma}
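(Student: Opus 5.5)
The plan is to verify the four claims separately: $0\le c_t^{f,i}\le\bar c_t^{f,i}$, $\bar S_t^{f,i}\ge -D_i$, $\bar S_t^{f,i}\le D_i$ (from which $\bar c_t^{f,i}\le 1$ will follow), and the reward-to-cost bound. The lower bounds are immediate from the definition. Since $\delta_t^{f,i}\ge 0$, we get $\bar c_t^{f,i}\ge c_t^{f,i}\ge 0$. Moreover, $\bar S_t^{f,i}=\bar S_{t-1}^{f,i}+c_t^{f,i}-b_t^i+\delta_t^{f,i}=\max\{\bar S_{t-1}^{f,i}+c_t^{f,i}-b_t^i,\,-D_i\}\ge -D_i$. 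The conversion inequality is also direct: $r_t^f\le\alpha_i c_t^{f,i}+\varepsilon_t^i\le \alpha_i\bar c_t^{f,i}+\varepsilon_t^i$ by \Cref{assumption:soft-conversion} and $\alpha_i>0$. This holds for expert advice $q_t^f$ by averaging the per-action inequality.

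The main step is the upper bound $\bar S_t^{f,i}\le D_i$, for which I would use \Cref{lem:subpace-char}. Fix $t$ and let $s\le t$ be the last round with $\delta_s^{f,i}>0$, with the convention $s=0$ if there is none. If $s=0$, then no inflation has occurred up to time $t$, so $\bar S_t^{f,i}=S_t^{f,i}=A_t-B_t\le D_i$ by the first condition in \eqref{eq:subpace-char}. Otherwise $\bar S_s^{f,i}=-D_i$, since inflation at round $s$ resets the level exactly to $-D_i$. No inflation occurs in rounds $s+1,\dots,t$, so
\[
\bar S_t^{f,i}=-D_i+(A_t-A_s)-(B_t-B_s)\le -D_i+2D_i=D_i,
\]
by the interval condition in \eqref{eq:subpace-char}.

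The remaining point, and the one I expect needs the most care, is $\bar c_t^{f,i}\le 1$. When $\delta_t^{f,i}>0$, we have $\bar c_t^{f,i}=b_t^i-\bar S_{t-1}^{f,i}-D_i\le b_t^i\le 1$, using $\bar S_{t-1}^{f,i}\ge -D_i$ from the first paragraph. Otherwise $\bar c_t^{f,i}=c_t^{f,i}\le 1$.

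Together these give the paced-instance conclusion: $|\bar S_t^{f,i}|\le D_i$ for the same baseline $b^i$, and hence the same $B_i$. The residuals $\varepsilon_t^i$ and the conversion constant $\alpha_i$ are unchanged, so $V_i$ is unchanged as well.
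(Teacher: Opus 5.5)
Your proof is correct and is essentially the paper's argument. Both obtain $\bar S_t^{f,i}\le D_i$ from the prefix and interval conditions of \Cref{lem:subpace-char}, splitting on whether any inflation has occurred by round $t$ and, if so, restarting from the level $-D_i$ at the relevant inflation round. The difference is only bookkeeping: you use the reflection recursion $\bar S_t^{f,i}=\max\{\bar S_{t-1}^{f,i}+c_t^{f,i}-b_t^i,-D_i\}$ with the last inflation round, whereas the paper derives the closed form $K_t=(\max_{s\le t}(B_s-A_s)-D_i)_+$ and picks a maximizer; likewise your bound $\bar c_t^{f,i}\le b_t^i$ via $\bar S_{t-1}^{f,i}\ge -D_i$ is the same as the paper's $\delta_t\le b_t-c_t$.
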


\begin{proof}
Fix an expert-resource pair $(f,i)$ and suppress these indices.  Write
$A_t:=\sum_{s\le t}c_s$, $B_t:=\sum_{s\le t}b_s$, and $K_t:=\sum_{s\le t}\delta_s$.  
We show that 
\[
    \bar S_t = A_t - B_t + K_t \quad \text{and} \quad K_t = \left(\max_{0\le s\le t}(B_s - A_s) - D_i\right)_+
\]
by induction.
If these equalities hold in round $t-1$, then the update gives
\begin{align*}
K_t
&=K_{t-1}+(- D_i-(\bar S^{f,i}_{t-1}+ c^{f,i}_t-b_t^{i}))_{+} \tag{definition of $K_t$ and $\delta_t$} \\
&=K_{t-1}+(B_t - A_t - K_{t-1} - D_i)_+ \tag{IH} \\
&=\max\{K_{t-1},B_t-A_t-D_i\} \\
&=\left(\max_{0\le s\le t}(B_s - A_s) - D_i\right)_+
\quad
\end{align*}
Meanwhile,
\begin{align*}
    \bar S_t 
        &= \bar S_{t-1} + (\bar c_t - b_t) \tag{definition of $\bar S_t$} \\
        &= A_{t-1} - B_{t-1} + K_{t-1}  + (c_t + \delta_t - b_t) \tag{IH; definition of $\bar c_t$} \\
        &= A_t - B_t + K_t \tag{definitions of $A_t$, $B_t$, and $K_t$}
\end{align*}

We now prove that the expert's artificial spending pattern is $D_i$-paced, i.e. $|\bar S_t|\le D_i$ for every prefix.  Since the original spending pattern $S_t = A_t - B_t$ is $D_i$-subpacing, \Cref{lem:subpace-char} gives
$A_t-B_t\le D_i$ and
$(A_t-A_s)-(B_t-B_s)\le2D_i$ for all $0\le s<t$.
For the lower bound, $K_t\ge B_t-A_t-D_i$, hence, $\bar S_t=A_t-B_t+K_t\ge-D_i$.
For the upper bound, if $\max_{s\le t}(B_s-A_s)\le D_i$, then $K_t=0$ and $\bar S_t=A_t-B_t\le D_i$.  Otherwise choose $s\le t$ attaining $\max_{r\le t}(B_r-A_r)>D_i$.  Then $K_t=B_s-A_s-D_i$.  If $s=t$, this gives $\bar S_t=-D_i$.  If $s<t$, then
\[
\bar S_t
=(A_t-A_s)-(B_t-B_s)-D_i
\le2D_i-D_i=D_i.
\]
Thus, $|\bar S_t|\le D_i$ for every $t$.

By construction, $\delta_t\ge0$, so $\bar c_t=c_t+\delta_t\ge c_t \geq 0$.  Also, $\delta_t=K_t-K_{t-1}$.  If $K_t=K_{t-1}$, then $\bar c_t=c_t\le1$.  If $K_t>K_{t-1}$, then $K_t=B_t-A_t-D_i$ while $K_{t-1}\ge B_{t-1}-A_{t-1}-D_i$, and therefore,
\[
0<\delta_t=K_t-K_{t-1}\le b_t-c_t.
\]
Hence $\bar c_t=c_t+\delta_t\le b_t\le1$.  Finally, monotonicity of costs implies that $r_t^f\le\alpha_i c_t^{f,i}+\varepsilon_t^i\le\alpha_i\bar c_t^{f,i}+\varepsilon_t^i$.
\end{proof}

\begin{theorem}
\label{thm:raising}
Suppose each expert's spending pattern in resource $i$ is only $D_i$-subpacing. Define 
\[
    \bar S_0^{f,i} \coloneqq 0, \quad \delta_t^{f,i} \coloneqq ((b_t^{i} - c^{f,i}_t) - \bar S^{f,i}_{t-1} - D_i)_+, \quad \bar c_t^{f,i} \coloneqq c^{f,i}_t + \delta^{f,i}_t, \quad \bar S_t^{f,i} \coloneqq \bar S^{f,i}_{t-1} + (\bar c_t^{f,i} - b_t^{i}).
\]
Algorithms~\ref{algorithm:core-ORA} and~\ref{algorithm:core-OLRC} with $c_t^{f,i}$ replaced by $\bar c_t^{f,i}$ everywhere\footnote{Including their hard stop conditions.} and the same choice of parameters as in Theorem~\ref{thm:core} are budget feasible via fractional play and satisfy
\begin{align*}
\Reg_T^f&\le \mathsf{Tail}
+O\left(D_{\eta,\infty}\sqrt{\log F\,\log(m+1)}\right)
&&\text{with pre-decision feedback}\\
\Reg_T^f&\le \mathsf{Tail}
+O\left((1+\eta_{\max})\sqrt{T\log F}
+D_{\eta,\infty}\sqrt{\log F\,\log(m+1)}\right)
&&\text{without pre-decision feedback}
\end{align*}
for every expert $f$ simultaneously, where
\[
\mathsf{Tail}
:=\max_i\left\{\eta_i(1+D_i)+\indicator{\alpha_i<\beta_i}V_i\right\}.
\]
Moreover, if the algorithms are subpacing with respect to the artificial costs and some Kolmogorov bound, then they are also subpacing with respect to the original cost and the same Kolmogorov bound.
\end{theorem}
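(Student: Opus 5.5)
The plan is to reduce the statement to Theorem~\ref{thm:core}, applied to a modified instance, by way of Lemma~\ref{lemma:raising}. The modified instance has the same actions, rewards, baselines $b^i$, budgets $B_i$, and residuals $\varepsilon_t^i$. Each expert $f$'s cost in resource $i$ is replaced by $\bar c_t^{f,i}$.

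The first thing to check is that the $\delta_t^{f,i}$ in the statement equals the $\delta_t^{f,i}$ of Lemma~\ref{lemma:raising}. This holds because $(b_t^i-c_t^{f,i})-\bar S_{t-1}^{f,i}-D_i = -D_i-(\bar S_{t-1}^{f,i}+c_t^{f,i}-b_t^i)$.

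The lemma then gives three facts for the modified instance. First, $\bar c_t^{f,i}\in[0,1]$. Second, $|\bar S_t^{f,i}|\le D_i$, so every expert is $D_i$-pacing. Third, $r_t^f\le\alpha_i\bar c_t^{f,i}+\varepsilon_t^i$.

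There is one subtlety. Assumption~\ref{assumption:soft-conversion} is stated per action, whereas the analysis only uses it at the expert level, in the tail bound Lemma~\ref{lem:tail}. I will argue that this is all that is needed. The proofs of Lemmas~\ref{lem:curvature-lambda}, \ref{lem:progress-with-feedback}, \ref{lem:progress-without-feedback}, \ref{lem:tail} and Theorem~\ref{thm:general-lambda} use the experts' costs only through $c_t^i(w)=\sum_f w_f c_t^{f,i}$, $S_t^{f,i}$ and $E_t^i$. The conversion assumption enters only through $r_t^f\le\alpha_j c_t^{f,j}+\varepsilon_t^j$. The quantity $\beta_j$ depends only on $b$.

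There is a second subtlety, which I expect to be the main obstacle. The modified algorithm is fed the artificial costs $\bar c_t^i(w)$ for the expert mixture, but its own spending $c_t^i(p_t)$ is the true cost of the played distribution. I propose to interpret the algorithm as charging itself the artificial cost $\bar c_t^i(x_t):=\sum_f x_{t,f}\bar c_t^{f,i}$. This cost is used both in $Y_t^i$ and $E_t^i(w)=\sum_{s\le t}(\bar c_s^i(x_s)-\bar c_s^i(w))$, and in the hard-stop condition (the footnote requires this). In the OLRC stability step, $\|\bar c_t^i\|_\infty\le 1$ still holds, so the increment bound $(1+\lambda_{\max})\|w-x_t\|_1$ is unchanged.

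With this interpretation the modified controller is exactly the core algorithm run on the modified instance. Theorem~\ref{thm:core} then yields both regret bounds verbatim, with the same $\mathsf{Tail}$ and the same $D_{\eta,\infty}$, $\eta_{\max}$. This works because the rewards $r_t^f$ and $r_t(p_t)$ are untouched, so $\Reg_T^f$ is the true regret.

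Budget feasibility follows from domination. The true spending satisfies $c_t^i(p_t)\le\bar c_t^i(x_t)$ pointwise, since $\bar c^{f,i}\ge c^{f,i}$. The artificial hard stop guarantees $\sum_t\bar c_t^i(x_t)\le B_i$, so the true spending obeys the budget too.

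For the final claim, suppose the algorithm's artificial spending path $(\bar c_t^i(x_t))_t$ is $D'$-subpacing with respect to $b^i$. Its true path is pointwise dominated by this path and is nonnegative. The downward-closure part of Lemma~\ref{lem:subpace-char} then makes the true path $D'$-subpacing as well.
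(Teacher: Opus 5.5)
Your proposal is correct and follows the same route as the paper. Both arguments reduce to Theorem~\ref{thm:core} via Lemma~\ref{lemma:raising}, obtain budget feasibility from the pointwise domination $c_t^i(p_t)\le \bar c_t^i(x_t)$ under the artificial hard stop, and transfer subpacing through the downward-closure part of Lemma~\ref{lem:subpace-char}. Your extra checks make explicit what the paper's short proof leaves implicit: that the two forms of $\delta_t^{f,i}$ agree, that the analysis needs the conversion inequality only at the expert level, and that the learner charges itself $\bar c_t^i(x_t)$ in $Y_t^i$, $E_t^i$ and the hard stop.
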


\begin{proof}
Fractional budget feasibility and the regret guarantees follow from Lemma~\ref{lemma:raising}: the latter are immediate from the reduction implied by Lemma~\ref{lemma:raising}, while the hard stop condition with respect to the artificial costs implies that $\sum_t \bar c_t^i(x_t) \leq B_i$, so point-wise domination of original costs by artificial costs (from Lemma~\ref{lemma:raising}) implies that
\[
    \sum_t c_t^i(x_t)\le \sum_t\bar c_t^i(x_t) \le B_i
\]
Point-wise domination and \Cref{lem:subpace-char} then imply the final part of the result.
\end{proof}

\subsection{Extension 2: Subpacing Algorithm}
\label{subsec:extension-subpacing-algo}

The spending of the core algorithms may not be $D_i$-subpacing with respect to the baseline spending pattern because the potential sums the regret term and an overspending penalty term: if the core algorithms accumulate large negative regret with respect to an expert mixture $w$, then they may exceed the spending of that mixture by a large margin.

To prevent this from happening, we force the core algorithms to play the null action if their spending exceeds the baseline by too much.
The resulting loss in reward from not following the controller's recommendation is then bounded using the reward-to-cost ratio $\alpha_i$. This is why we choose $\lambda_i = \alpha_i$ instead of $\eta_i$, since $\beta_i$ offers no control over the rewards missed because of overspending too much compared to the baseline, which can occur in rounds before the hard-stop. 

Additionally, for the algorithms to be subpacing, we also need to prevent the algorithms from underspending the baseline by a lot, followed by spending a lot in a short interval, which violates subpacing but does not show up as cumulative overspending. This is achieved by adjusting the baseline so it lowers if the algorithms underspend by too much. 

The subpacing algorithm for the ORA setting is shown in \Cref{alg:sp-pre} and the algorithm for the OLRC setting is shown in \Cref{alg:sp-no-pre}.

Write
\[
D_{\alpha,\infty}:=\max_{i\in[m]}\alpha_iD_i,
\qquad
\alpha_{\max}:=\max_{i\in[m]}\alpha_i.
\]

The core variables $Y_t^i$ and $E_t^i$ from \Cref{sec:near-optimal-algo} retain their original definitions. For this extension, we introduce a clipped cumulative deviation $Y_t^{i,\mathrm{sp}}$ defined as 
\[Y_0^{i,\mathrm{sp}}:=0\qquad Y_t^{i,\mathrm{sp}}=\max\{-W_i,\,Y_{t-1}^{i,\mathrm{sp}}+c_t^i(p_t)-b_t^i\}\]
for some $W_i>0$ chosen later and define
\[
E_t^{i,\mathrm{sp}}(w)
:=Y_t^{i,\mathrm{sp}}-\sum_f w_fS_t^{f,i}.
\]
Analogously, define
\begin{equation}
\Phi_t^{\mathrm{sp}}(w,v)
:=\sum_f w_fR_t^f
 +\sum_{i=1}^m\alpha_i v_iE_t^{i,\mathrm{sp}}(w)
 -\rho_x\Psi_x(w)-\rho_v\Psi_v(v),
\qquad
M_t^{\mathrm{sp}}:=\max_{w,v}\Phi_t^{\mathrm{sp}}(w,v),
\label{eq:sp-potential}
\end{equation}
where, as in \Cref{sec:near-optimal-algo}, all maxima over $(w,v)$ are over $\Delta(\mathcal F)\times\Delta_{m+1}$. For the pre-decision controller, define
\begin{equation}
F_t^{\mathrm{raw}}(x,w,v)
:=\Phi_{t-1}^{\mathrm{sp}}(w,v)
+r_t(w)-r_t(x)
+\sum_{i=1}^m\alpha_iv_i\bigl(c_t^i(x)-c_t^i(w)\bigr).
\label{eq:sp-one-step}
\end{equation}
For common parameters $\rho_x,\rho_v>0$, set
\begin{equation}
W_i:=1+D_i+\frac{2+\rho_v\log(4mT)}{\alpha_i},
\qquad i\in[m].
\label{eq:unified-W}
\end{equation}
We call round $t$ \emph{safe} if $Y_{t-1}^{i,\mathrm{sp}}\le W_i-1$ for every $i\in[m]$, and \emph{unsafe} otherwise. Before the hard stop, the learner plays the controller's recommendation $p_t=\pi_t(x_t)$ on a safe round and the null action $p_t=\perp$ on an unsafe round. Once the hard stop takes effect, the learner plays $p_t=\perp$ forever. $Y_t^{i,\rm sp}$ is always updated using the actual fractional cost $c_t^i(p_t)$.

\begin{algorithm}[ht]
\caption{Subpacing algorithm extension with pre-decision feedback.}
\label{alg:sp-pre}
Initialize $R_0^f=0$, $S_0^{f,i}=0$, and $Y_0^{i,\mathrm{sp}}=0$ for every $f,i$\;
\For{$t=1,\ldots,T$}{
    Receive the expert advice $q_t^f$ and observe $(r_t,c_t^1,\ldots,c_t^m)$\;
    \eIf{the hard stop is active, or $C_{t-1}^i+1>B_i$ for some $i$}{
        Activate the hard stop and set $p_t=\perp$\;
    }{
        \eIf{$Y_{t-1}^{i,\mathrm{sp}}\le W_i-1$ for every $i\in[m]$}{
            Choose $x_t\in\arg\min_{x\in\Delta(\mathcal F)}\max_{w,v}F_t^{\mathrm{raw}}(x,w,v)$\;
            Set $p_t=\pi_t(x_t)$\;
        }{
            Set $p_t=\perp$\;
        }
    }
    Update, for every $f,i$,
    \[
    \begin{aligned}
        R_t^f&=R_{t-1}^f+r_t^f-r_t(p_t),
        &S_t^{f,i}&=S_{t-1}^{f,i}+c_t^{f,i}-b_t^i,\\
        Y_t^{i,\mathrm{sp}}&=\max\{-W_i,\,Y_{t-1}^{i,\mathrm{sp}}+c_t^i(p_t)-b_t^i\}.
    \end{aligned}
    \]
}
\end{algorithm}

\begin{algorithm}[ht]
\caption{Subpacing algorithm extension without pre-decision feedback.}
\label{alg:sp-no-pre}
Initialize $R_0^f=0$, $S_0^{f,i}=0$, and $Y_0^{i,\mathrm{sp}}=0$ for every $f,i$\;
\For{$t=1,\ldots,T$}{
    Receive the expert advice $q_t^f$\;
    \eIf{the hard stop is active, or $C_{t-1}^i+1>B_i$ for some $i$}{
        Activate the hard stop and set $p_t=\perp$\;
    }{
        Choose $(x_t,v_t)\in\arg\max_{w,v}\Phi_{t-1}^{\mathrm{sp}}(w,v)$\;
        \eIf{$Y_{t-1}^{i,\mathrm{sp}}\le W_i-1$ for every $i\in[m]$}{
            Set $p_t=\pi_t(x_t)$\;
        }{
            Set $p_t=\perp$\;
        }
    }
    Observe $(r_t,c_t^1,\ldots,c_t^m)$\;
    Update, for every $f,i$,
    \[
    \begin{aligned}
        R_t^f&=R_{t-1}^f+r_t^f-r_t(p_t),
        &S_t^{f,i}&=S_{t-1}^{f,i}+c_t^{f,i}-b_t^i,\\
        Y_t^{i,\mathrm{sp}}&=\max\{-W_i,\,Y_{t-1}^{i,\mathrm{sp}}+c_t^i(p_t)-b_t^i\}.
    \end{aligned}
    \]
}
\end{algorithm}

\begin{lemma}
\label{lem:kl-variational}
Let $p\in\Delta_d$ have full support, let $s\in\mathbb R^d$, and let $z>0$. Then
\begin{equation}
\sup_{w\in\Delta_d}
\left\{\langle w,s\rangle-zD_{\mathrm{KL}}(w\|p)\right\}
=z\log\left(\sum_{k=1}^d p_k e^{s_k/z}\right).
\label{eq:kl-variational}
\end{equation}
The unique maximizer is
\begin{equation}
w_k^*=\frac{p_k e^{s_k/z}}{\sum_{j=1}^d p_j e^{s_j/z}}.
\label{eq:kl-softmax}
\end{equation}
\end{lemma}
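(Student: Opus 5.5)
The plan is to prove \eqref{eq:kl-variational} and \eqref{eq:kl-softmax} together, using the standard Gibbs variational argument. Rewrite the objective so that it becomes a single KL divergence against the tilted distribution, plus a constant. Let $Z:=\sum_{j=1}^d p_j e^{s_j/z}$, which is strictly positive since $p$ has full support. Define $w^*$ by \eqref{eq:kl-softmax}; it is a valid distribution with full support.

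For any $w\in\Delta_d$, I will use the convention $0\log 0=0$. The objective then expands as
\[
\langle w,s\rangle - z\sum_k w_k\log\frac{w_k}{p_k}
= -z\sum_k w_k\log\frac{w_k}{p_k e^{s_k/z}}.
\]
Since $p_ke^{s_k/z}=Z\,w_k^*$, this equals $-z\sum_k w_k\log\frac{w_k}{w_k^*}+z\log Z$, where I use $\sum_k w_k=1$. In closed form, the objective is therefore
\[
z\log Z - zD_{\mathrm{KL}}(w\|w^*).
\]
This is well defined for every $w$ in the simplex, including boundary points, because $w^*$ has full support and so the divergence is finite.

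The conclusion then follows from Gibbs' inequality. We have $D_{\mathrm{KL}}(w\|w^*)\ge 0$, with equality if and only if $w=w^*$, and $z>0$. Hence the supremum equals $z\log Z$, it is attained, and its unique maximizer is $w^*$. This gives both \eqref{eq:kl-variational} and \eqref{eq:kl-softmax}.

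There is no real obstacle. The only care needed is at boundary points where some $w_k=0$: the $0\log 0$ convention makes the algebraic identity hold termwise there. The equality case of Gibbs' inequality, which gives uniqueness, comes from the strict convexity of $-\log$ (Jensen's inequality).
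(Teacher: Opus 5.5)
Your proposal is correct and follows essentially the same route as the paper. Both rewrite the objective as $z\log Z - zD_{\mathrm{KL}}(w\|w^*)$ for the tilted distribution $w^*$, then conclude from nonnegativity of KL divergence and its equality case. Your explicit handling of boundary points via the $0\log 0$ convention is a small extra that the paper leaves implicit.
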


\begin{proof}
Let $Z:=\sum_{k=1}^d p_k e^{s_k/z}$ and
$p_k^*:=p_k e^{s_k/z}/{Z}$.
For every $w\in\Delta_d$,
\begin{align*}
D_{\mathrm{KL}}(w\|p^*)-D_{\mathrm{KL}}(w\|p)
= \sum_{k=1}^{d} w\log \frac{p_k}{p^*_k}
= -\frac{1}{z}\langle w,s\rangle + \log Z.
\end{align*}
Therefore
\[
\langle w,s\rangle-zD_{\mathrm{KL}}(w\|p)
=z\log Z-zD_{\mathrm{KL}}(w\|p^*).
\]
The result follows from nonnegativity of KL divergence, with equality if and only if $w=p^*$.
\end{proof}

\begin{theorem}
\label{thm:subpacing-algo}
Assume the experts are $D_i$-paced in every resource $i$. Let $\rho_x,\rho_v>0$ satisfy
\begin{equation}
\rho_x\rho_v\ge 4D_{\alpha,\infty}^2
\label{eq:sp-curvature-condition}
\end{equation}
and 
\begin{equation}
W_i=1+D_i+\frac{2+\rho_v\log(4mT)}{\alpha_i},
\qquad i\in[m].
\end{equation}
With pre-decision feedback, run Algorithm~\ref{alg:sp-pre}; without pre-decision feedback, run Algorithm~\ref{alg:sp-no-pre}.

Define $V_{\mathrm{mix}}=\sum_{i=1}^m V_i$. In either regime, the actual fractional cost path satisfies every hard budget and is $W_i$-subpacing in every resource $i$. Moreover, simultaneously for every expert $f$, Algorithm~\ref{alg:sp-pre} satisfies
\begin{equation}
\operatorname{Reg}_T^f
\le
\rho_x\log F+\rho_v\log(m+1)
+\max_i\eta_i(1+D_i)
+V_{\mathrm{mix}}
+\frac{1+\alpha_{\max}}{4},
\label{eq:sp-pre-regret}
\end{equation}
and Algorithm~\ref{alg:sp-no-pre} satisfies
\begin{equation}
\operatorname{Reg}_T^f
\le
\rho_x\log F+\rho_v\log(m+1)
+\frac{T(1+\alpha_{\max})^2}{\rho_x}
+\max_i\eta_i(1+D_i)
+V_{\mathrm{mix}}
+\frac{1+\alpha_{\max}}{4}.
\label{eq:sp-no-pre-regret}
\end{equation}
Optimizing $\rho_x,\rho_v$ for regret bounds gives
\[
\operatorname{Reg}_T^f
=\widetilde O_{m}\!\left(D_{\alpha,\infty}\sqrt{\log F}+\alpha_{\max}+V_{\mathrm{mix}}\right)
\]
and 
\[
W_i
=
\widetilde O_{m,T}\!\left(
1+ D_i
+
\frac{1}{\alpha_i}
+
\frac{D_{\alpha,\infty}\sqrt{\log F}}{\alpha_i}
\right).
\]

with pre-decision feedback, and
\[
\operatorname{Reg}_T^f
=\widetilde O_{m}\!\left(\bigl((1+\alpha_{\max})\sqrt T+D_{\alpha,\infty}\bigr)\sqrt{\log F}+V_{\mathrm{mix}}\right)
\]
and
\[
W_i
=
\widetilde O_{m,T}\!\left(1+
D_i
+
\frac{1}{\alpha_i}
+
\frac{D_{\alpha,\infty}^2\sqrt{\log F}}
{\alpha_i\bigl(D_{\alpha,\infty}+(1+\alpha_{\max})\sqrt{T}\bigr)}
\right).
\]
without pre-decision feedback. The notations $\widetilde O_{m}(\cdot)$ and $\tilde{O}_{T, m}(\cdot)$ suppress polylogarithmic factors in their subscripts.
\end{theorem}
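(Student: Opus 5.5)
The plan is to run the core analysis of \Cref{thm:general-lambda} on the clipped potential $\Phi_t^{\mathrm{sp}}$ and to handle two new effects: clipping of $Y^{i,\mathrm{sp}}$ at $-W_i$, and the unsafe rounds on which $\perp$ is forced. The proof splits into three parts: feasibility and subpacing, potential progress, and the terminal accounting.

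\textbf{Feasibility and subpacing.} Budget feasibility comes from the hard stop exactly as before. For subpacing, I will use \Cref{lem:subpace-char} with radius $W_i$, so two facts are needed.
\begin{itemize}
\item The prefix bound $Y_t^i\le W_i$. Induction gives $Y_t^{i,\mathrm{sp}}\le W_i$: on a safe round it rises by at most $1$ from at most $W_i-1$, and on an unsafe round it cannot increase. Clipping only raises the value, so $Y_t^i\le Y_t^{i,\mathrm{sp}}$.
\item The interval bound $(C_t-C_s)-(B_t-B_s)\le 2W_i$. Since $Y^{\mathrm{sp}}$ increases by exactly $c_t-b_t$ whenever it is not clipped, and otherwise is held at $-W_i$, we get $(C_t-C_s)-(B_t-B_s)\le Y^{\mathrm{sp}}_t-Y^{\mathrm{sp}}_s\le W_i+W_i$.
\end{itemize}

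\textbf{Potential progress.} First check that $\Phi^{\mathrm{sp}}$ still satisfies \Cref{lem:curvature-lambda} with $\lambda=\alpha$ under \eqref{eq:sp-curvature-condition}. This holds because the bilinear term is unchanged: it involves only $S_t^{f,i}$, and the clipped $Y^{\mathrm{sp}}$ enters linearly in $v$.

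On safe rounds, the progress bounds are proved as in \Cref{lem:progress-with-feedback} and \Cref{lem:progress-without-feedback}. The catch is that clipping can only increase $E^{i,\mathrm{sp}}$, and with $v_i\ge0$ that makes the potential larger. So I would instead write $\Phi^{\mathrm{sp}}_t$ as the unclipped increment plus $\alpha_i v_i\cdot(\text{clip amount})$. I expect the clip term can be charged away: clipping occurs only when $Y^{\mathrm{sp}}$ is at $-W_i$. By \Cref{lem:kl-variational} applied to $v$, the mass $v_i^*$ there is at most $(m+1)\exp(-\alpha_i(W_i-D_i-\dots)/\rho_v)\le 1/(4mT)$-ish. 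This is exactly where the $\rho_v\log(4mT)$ in $W_i$ comes from. Summed over $T$ rounds and $m$ resources, this gives an $O(1)$ contribution, plausibly the $\frac{1+\alpha_{\max}}4$ term.

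On unsafe rounds some $Y^{i,\mathrm{sp}}_{t-1}>W_i-1$, so every mixture $w$ has $E^{i,\mathrm{sp}}\ge W_i-1-D_i$. Then the same softmax computation forces $v_i^*$ to be close to $1$, up to mass about $1/(4mT)$. Playing $\perp$ changes $\Phi$ by $r_t(w)-\sum_i\alpha_iv_ic_t^i(w)\le \varepsilon$-terms plus a small error, using \Cref{assumption:soft-conversion}. Summing the residuals over unsafe rounds yields $V_{\mathrm{mix}}$. The main obstacle is making this quantitative with curvature (\eqref{eq:quant-curvature-lambda}) rather than only at the maximizer. I would bound $\max_{w,v}$ of the increment by splitting on whether $v_i$ is near $1$: if it is, the increment is nonpositive up to $\varepsilon$; if it is not, the $D_{\rm KL}(v\|v^*)$ penalty dominates.

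\textbf{Terminal accounting.} This step repeats \Cref{thm:general-lambda}, evaluating the potential at $(e_f,(1-s_j)e_0+s_je_j)$ with $s_j=\eta_j/\alpha_j$ and using \Cref{lem:tail}. Clipping gives $E^{\mathrm{sp}}\ge E$, and that is the direction needed for $\eta_jE^{\mathrm{sp}}_\tau$ to dominate $\eta_j E_\tau$. The final rates then follow by taking $\rho_v=4D_{\alpha,\infty}^2/\rho_x$ and tuning $\rho_x$ as in \Cref{thm:core}, and substituting into $W_i$.
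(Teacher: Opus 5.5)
Your outline follows the paper's proof. It has the invariant $-W_i\le Y_t^{i,\mathrm{sp}}\le W_i$. It has a per-round clipping leakage of at most $\alpha_{\max}/(4T)$, coming from the tiny softmax weight on a coordinate pinned at $-W_i$; this is what the $\rho_v\log(4mT)$ in $W_i$ pays for. It handles unsafe rounds by soft conversion plus curvature. It ends with the terminal evaluation at $(e_f,(1-s_j)e_0+s_je_j)$, $s_j=\eta_j/\alpha_j$, using $E^{j,\mathrm{sp}}_\tau(e_f)\ge C^j_\tau-A^{f,j}_\tau$.

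Two of your steps are harmless variants. For subpacing, you check the two conditions of \Cref{lem:subpace-char} via $Y^{i,\mathrm{sp}}_u-Y^{i,\mathrm{sp}}_{u-1}\ge c_u^i(p_u)-b_u^i$. The paper instead writes the witness $d_t^i=b_t^i-k_t^i$ ($k_t^i$ the clipped amount), so that $Y_t^{i,\mathrm{sp}}=\sum_{s\le t}(c_s^i(p_s)-d_s^i)$ directly; both work. For clipping, bounding $M_t^{\mathrm{sp}}\le M_t^{\mathrm{raw}}+\sum_i\alpha_iv_i^*k_t^i$ at the maximizer of $\Phi_t^{\mathrm{sp}}$ is equivalent to the paper's integration of $\partial V/\partial s_i$ along the clipping segment. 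You do need the exact softmax $v_i^*=e^{s_i/\rho_v}/(1+\sum_j e^{s_j/\rho_v})\le e^{s_i/\rho_v}$; with your extra factor $m+1$ you do not reach $1/(4mT)$.

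The step that fails as written is the unsafe-round argument. An unsafe resource $i$ does not force $v_i^*$ close to $1$ when $m\ge2$. Another resource $k$ with larger $\alpha_kE^{k,\mathrm{sp}}_{t-1}$ can take almost all the mass. Then your case split breaks: at $v=v^*$ the KL penalty vanishes while $v_i$ is far from $1$.

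What the softmax does give is that the slack weight is tiny, $v_0^*=1/(1+\sum_j e^{\alpha_jE^{j,\mathrm{sp}}_{t-1}/\rho_v})\le e^{-2/\rho_v}/(4mT)$. The right statistic is therefore $1-v_0$. Averaging $r_t(w)\le\alpha_ic_t^i(w)+\varepsilon_t^i$ with weights $v_1,\dots,v_m$ bounds the increment by $v_0+\varepsilon_t^{\max}$ for every $(w,v)$. Instead of a case split, apply \Cref{lem:kl-variational} with prior $v^*$:
$\sup_v\{v_0-\frac{\rho_v}{2}D_{\rm KL}(v\|v^*)\}=\frac{\rho_v}{2}\log(1+v_0^*(e^{2/\rho_v}-1))\le\frac{1}{4mT}$.

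The factor $e^{-2/\rho_v}$, i.e.\ the additive $2$ in $W_i$, is indispensable here, and your ``mass about $1/(4mT)$'' drops exactly this factor. Knowing only $v_0^*\le 1/(4mT)$, the supremum can be close to $1-\frac{\rho_v}{2}\log(4mT)$ for small $\rho_v$. That is $\Theta(1)$ per unsafe round, which is useless summed over up to $T$ unsafe rounds.

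These $\frac{1}{4mT}$ terms are also where the $\frac14$ in $\frac{1+\alpha_{\max}}{4}$ comes from; clipping accounts only for $\frac{\alpha_{\max}}{4}$.
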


\begin{proof}
We first prove a subpacing witness, then bound the growth of the potential, and finally convert the prefix potential bound into full-horizon regret.

\paragraph{Subpacing witness.}
We claim that
\begin{equation}
-W_i\le Y_t^{i,\mathrm{sp}}\le W_i
\qquad\text{for every }i,t.
\label{eq:Y-sp-invariant}
\end{equation}
The lower bound follows directly from the definition. For the upper bound, we prove by induction. If round $t$ is safe, then $Y_{t-1}^{i,\mathrm{sp}}\le W_i-1$, and therefore
\[
Y_{t-1}^{i,\mathrm{sp}}+c_t^i(p_t)-b_t^i
\le Y_{t-1}^{i,\mathrm{sp}}+1
\le W_i.
\]
If round $t$ is unsafe, then $c_t^i(p_t)=0$, so
\[
Y_{t-1}^{i,\mathrm{sp}}+c_t^i(p_t)-b_t^i
=Y_{t-1}^{i,\mathrm{sp}}-b_t^i
\le Y_{t-1}^{i,\mathrm{sp}}
\le W_i,
\]
where the last inequality is the induction hypothesis. Thus \eqref{eq:Y-sp-invariant} holds.

For each $i,t$, define
\[
k_t^i:=\Bigl(-W_i-\bigl(Y_{t-1}^{i,\mathrm{sp}}+c_t^i(p_t)-b_t^i\bigr)\Bigr)_+,
\qquad
d_t^i:=b_t^i-k_t^i.
\]
Intuitively, $k_t^i$ is how much is clipped by the lower bound of $-W_i$ during round $t$, and we adjust the baseline downward by this clipped amount. We claim that for every resource $i$, $(d_t^i)_{t}$ is a subpacing witness of the algorithm, i.e. the algorithm is $W_i$-pacing with respect to $d^i$.

Since $Y_{t-1}^{i,\mathrm{sp}}+c_t^i(p_t)-b_t^i\ge -W_i-b_t^i$,  we have $0\le k_t^i\le b_t^i$ and therefore $d_t^i\in[0,b_t^i]$. This definition of $Y_t^{i,\mathrm{sp}}$ implies
\[
Y_t^{i,\mathrm{sp}}
=Y_{t-1}^{i,\mathrm{sp}}+c_t^i(p_t)-d_t^i,
\]
and hence
\begin{equation}
Y_t^{i,\mathrm{sp}}
=\sum_{s\le t}\bigl(c_s^i(p_s)-d_s^i\bigr).
\label{eq:sp-witness}
\end{equation}
Together with \eqref{eq:Y-sp-invariant}, this proves directly that the actual fractional resource-$i$ cost path $(c_t^i(p_t))_t$ is $W_i$-subpacing. The hard stop gives fractional budget feasibility.

\paragraph{Bounding potential difference due to clipping.}
On a round before the hard stop, let $\Phi_t^{\mathrm{raw}}$ denote the potential obtained after updating $R_t^f$ and $S_t^{f,i}$ and replacing $Y_t^{i,\mathrm{sp}}$ by its unclipped value $Y_{t}^{i,\mathrm{raw}} := Y_{t-1}^{i,\mathrm{sp}}+c_t^i(p_t)-b_t^i$. More formally, define
\begin{align*}
    \Phi_t^{\mathrm{raw}}(w,v)
:=\sum_f w_fR_t^f
 +\sum_{i=1}^m\alpha_i v_i E_t^{i,\mathrm{raw}}(w)
 -\rho_x\Psi_x(w)-\rho_v\Psi_v(v),
\end{align*}
where \[E_t^{i,\mathrm{raw}}(w):=Y_t^{i,\mathrm{raw}}-\langle w, S_t^{i} \rangle.\]
Let \[M_t^{\mathrm{raw}}:=\max_{w,v}\Phi_t^{\mathrm{raw}}(w,v).\] 

We want to bound the potential increase caused by clipping in a round before the hard stop. Fix $w$ and suppose $Y_{t}^{i,\mathrm{sp}}$ was clipped from $Y_{t}^{i,\mathrm{raw}}\leq -W_i$ to $Y_t^{i,\mathrm{sp}}=-W_i$ at the end of round $t$. Applying Lemma~\ref{lem:kl-variational} to the $v$-maximization part of the potential function gives
\begin{align*}
\sup_{v\in\Delta_{m+1}}
\left\{\langle v,(0,\alpha\odot E_t^{\mathrm{sp}})\rangle-\rho_v D_{\mathrm{KL}}(v\|U_{m+1})\right\}
=\rho_v\log\left(\frac{1}{m+1}+ \sum_{j=1}^{m} \frac{1}{m+1} e^{\alpha_j E_t^{j,\mathrm{sp}}/\rho_v}\right).
\end{align*}
where $\alpha\odot E_t^{\mathrm{sp}}$ is the pointwise product of $(\alpha_i)_{i=1}^{m}$ and $(E_t^{i,\mathrm{sp}})_{i=1}^{m}$ and we add a 0 coordinate in front. 
To isolate the effect of each coordinate of  $\alpha\odot E_t^{\mathrm{sp}}$, we define
\[V(s)=\sup_{v\in\Delta_{m+1}}
\left\{\langle v,(0,s)\rangle-\rho_v D_{\mathrm{KL}}(v\|U_{m+1})\right\}
=\rho_v\log\left(\frac{1}{m+1}+ \sum_{j=1}^{m} \frac{1}{m+1} e^{s_j/\rho_v}\right)
\]
for $s\in \RR^m$.
Then for the clipped coordinate $i\in [m]$, we have
\begin{align*}
    \frac{\partial V(s)}{\partial s_i} = \frac{e^{s_i/\rho_v}}{1+\sum_{j=1}^{m}e^{s_j/\rho_v}}\leq e^{s_i/\rho_v}.
\end{align*}
Since $Y_{t}^{i,\mathrm{raw}}\leq -W_i$, for any $y\in [Y_{t}^{i,\mathrm{raw}}, -W_i]$, we have 
\[
\alpha_i\bigl(y-\langle w,S_t^i\rangle\bigr)
\le \alpha_i\bigl(-W_i-\langle w,S_t^i\rangle\bigr)
\le \alpha_i(-W_i+D_i)
\le -\rho_v\log(4mT).
\]
Thus 
\[ \frac{\partial V(s)}{\partial s_i}\leq \frac{1}{4mT}\] for $s_i=\alpha_i\bigl(y-\langle w,S_t^i\rangle\bigr)$ and $y\in [Y_{t}^{i,\mathrm{raw}}, -W_i]$.
This implies clipping the coordinate $i$ affects $V(s)$ by at most 
\begin{align*}
\frac{1}{4mT}\alpha_ik_t^i\leq  \frac{1}{4mT}\alpha_ib_t^i \leq  \frac{\alpha_i}{4mT}.
\end{align*}
We apply the same argument for each clipped coordinate one by one, and the resulting change in $V(s)$ is bounded by 
\begin{align*}
   V(\alpha\odot (Y_{t}^{\mathrm{sp}}-\langle w, S_t\rangle)-V(\alpha\odot (Y_{t}^{\mathrm{raw}}-\langle w, S_t\rangle)
   \le  \frac{\alpha_{\max}}{4T}.
\end{align*}
on each round.
Since this is true for any $w$ and 
\begin{align*}
&M_t^{\mathrm{sp}}
=\max_w \sum_f w_fR_t^f-\rho_x\Psi_x(w) +  V(\alpha\odot (Y_{t}^{\mathrm{sp}}-\langle w, S_t\rangle)\\
&M_t^{\mathrm{raw}}
=\max_w \sum_f w_fR_t^f-\rho_x\Psi_x(w) +  V(\alpha\odot (Y_{t}^{\mathrm{raw}}-\langle w, S_t\rangle),
\end{align*}
we conclude
\begin{equation}
M_t^{\rm sp}-M_t^{\rm raw}\leq \frac{\alpha_{\max}}{4T}
\label{eq:reflection-leakage}
\end{equation}

\paragraph{Potential growth without pre-decision feedback.}

Suppose first that round $t$ is safe and the hard stop is inactive. Then $p_t=\pi_t(x_t)$, so $r_t(p_t)=r_t(x_t)$ and $c_t^i(p_t)=c_t^i(x_t)$. Before clipping in round $t$, we have
\[
\Phi_t^{\mathrm{raw}}(w,v)-\Phi_{t-1}^{\mathrm{sp}}(w,v)
=\left\langle r_t-\sum_{i=1}^m\alpha_iv_ic_t^i,\,w-x_t\right\rangle
\le (1+\alpha_{\max})\|w-x_t\|_1.
\]

Since changing from $Y_t^i$ to $Y_t^{i,\mathrm{sp}}$ in the potential function only changes the term $\alpha_iv_iY_t^{i,\mathrm{sp}}$, which is affine in $v$, \Cref{lem:curvature-lambda} still holds for this modified potential function with $\lambda_i=\alpha_i$, provided that the regularization parameters satisfy \eqref{eq:sp-curvature-condition}.  In particular, since $(x_t,v_t)$ maximizes $\Phi_{t-1}^{\rm sp}(w,v)$, then for every feasible $(w,v)$,
\begin{equation}
\Phi_{t-1}^{\mathrm{sp}}(w,v)
\le
M_{t-1}^{\mathrm{sp}}
-\frac{\rho_x}{2}D_{\mathrm{KL}}(w\|x_t)
-\frac{\rho_v}{2}D_{\mathrm{KL}}(v\|v_t).
\label{eq:sp-curvature}
\end{equation}
Thus
\begin{align*}
\Phi_t^{\mathrm{raw}}(w,v)-M_{t-1}^{\mathrm{sp}}
&= \Phi_t^{\mathrm{raw}}(w,v)-\Phi_{t-1}^{\mathrm{sp}}(w,v)+\Phi_{t-1}^{\mathrm{sp}}(w,v)-M_{t-1}^{\mathrm{sp}}\\
&\leq (1+\alpha_{\max})\|w-x_t\|_1
-\frac{\rho_x}{4}\|w-x_t\|_1^2.
\end{align*}
where the last line follows from Pinsker's Inequality and non-negativity of KL divergence.

Optimizing over $\|w-x_t\|_1\ge0$ yields
\begin{equation}
M_t^{\mathrm{raw}}-M_{t-1}^{\mathrm{sp}}
\le
\frac{(1+\alpha_{\max})^2}{\rho_x}.
\label{eq:safe-no-pre}
\end{equation}

Now suppose round $t$ is unsafe. Choose $j$ with $Y_{t-1}^{j,\mathrm{sp}}>W_j-1$. Since $\langle x_t,S_{t-1}^j\rangle\le D_j$,
\begin{equation}
\alpha_jE_{t-1}^{j,\mathrm{sp}}(x_t)
>\alpha_j(W_j-1-D_j)
=2+\rho_v\log(4mT).
\label{eq:unsafe-score-no-pre}
\end{equation}
Because $(x_t,v_t)$ maximizes $\Phi_{t-1}^{\mathrm{sp}}$, fixing $x_t$ shows that $v_t$ maximizes the $v$-dependent terms. Lemma~\ref{lem:kl-variational} therefore gives
\begin{equation}
v_{t,0}
=\frac{1}{1+\sum_{i=1}^m\exp(\alpha_iE_{t-1}^{i,\mathrm{sp}}(x_t)/\rho_v)}
\le
\frac{e^{-2/\rho_v}}{4mT}.
\label{eq:dummy-small}
\end{equation}
On an unsafe round before the hard stop, $p_t=\perp$, so $r_t(p_t)=0$ and $c_t^i(p_t)=0$ for every $i$. In this case,
\begin{align}
\label{eq:sub-unsafe-potential-change}
\Phi_t^{\mathrm{raw}}(w,v)-\Phi_{t-1}^{\mathrm{sp}}(w,v)
=\left\langle r_t-\sum_{i=1}^m\alpha_iv_ic_t^i,\,w\right\rangle.    
\end{align}

Let $\epsilon_t^{\max}:=\max_i\epsilon_t^i$. For every $(w,v)$, \Cref{assumption:soft-conversion} implies $r_t(w)\le\alpha_ic_t^i(w)+\epsilon_t^{\max}$. Then multiplying this by $v_i$ and summing over $i\in[m]$ gives
\[
(1-v_0)r_t(w)
\le
\sum_{i=1}^m\alpha_iv_ic_t^i(w)+(1-v_0)\epsilon_t^{\max}.
\]
Since $r_t(w)\le1$, we have
\begin{align}
r_t(w)-\sum_{i=1}^m\alpha_iv_ic_t^i(w)
\le v_0+\epsilon_t^{\max}\\
\implies \Phi_t^{\mathrm{raw}}(w,v)-\Phi_{t-1}^{\mathrm{sp}}(w,v)\leq v_0+\epsilon_t^{\max}.
\label{eq:null-round-soft}
\end{align}

Combining \eqref{eq:sp-curvature} and \eqref{eq:null-round-soft} gives
\begin{align*}
\Phi_t^{\mathrm{raw}}(w,v)-M_{t-1}^{\mathrm{sp}}
&= \Phi_t^{\mathrm{raw}}(w,v)-\Phi_{t-1}^{\mathrm{sp}}(w,v)+\Phi_{t-1}^{\mathrm{sp}}(w,v)-M_{t-1}^{\mathrm{sp}}\\
&\leq v_0+\epsilon_t^{\max} -\frac{\rho_x}{2}D_{\mathrm{KL}}(w\|x_t)
-\frac{\rho_v}{2}D_{\mathrm{KL}}(v\|v_t)
\end{align*}

Optimizing over $(w,v)$ gives
\[
\begin{aligned}
M_t^{\mathrm{raw}}-M_{t-1}^{\mathrm{sp}}
\le \epsilon_t^{\max}
&+\sup_w\left\{-\frac{\rho_x}{2}D_{\mathrm{KL}}(w\|x_t)\right\}\\
&+\sup_v\left\{v_0-\frac{\rho_v}{2}D_{\mathrm{KL}}(v\|v_t)\right\}.
\end{aligned}
\]
The first supremum is zero. \Cref{lem:curvature-lambda} guarantees that $v_t$ has full support, so Lemma~\ref{lem:kl-variational} applies to the second supremum with $p=v_t$, $s=e_0$, and $z=\rho_v/2$. This gives
\begin{align*}
\sup_v\left\{v_0-\frac{\rho_v}{2}D_{\mathrm{KL}}(v\|v_t)\right\}
&=\frac{\rho_v}{2}\log\left(1+v_{t,0}(e^{2/\rho_v}-1)\right)\\
&\le \frac{\rho_v}{8mT}(1-e^{-2/\rho_v})\tag{by \eqref{eq:dummy-small}}\\
&\le \frac{1}{4mT}.\tag{$1-e^{-x}\le x$}
\end{align*}
Thus every unsafe round satisfies
\begin{equation}
M_t^{\mathrm{raw}}-M_{t-1}^{\mathrm{sp}}
\le
\epsilon_t^{\max}+\frac{1}{4mT}.
\label{eq:unsafe-no-pre}
\end{equation}
Since $M_0^{\mathrm{sp}}=0$, summing \eqref{eq:safe-no-pre}, \eqref{eq:unsafe-no-pre}, and \eqref{eq:reflection-leakage} through any prefix $\tau$ before the hard stop gives
\begin{equation}
M_\tau^{\mathrm{sp}}
\le
\frac{T(1+\alpha_{\max})^2}{\rho_x}
+\sum_{t\leq \tau}\epsilon_t^{\max}
+\frac{1+\alpha_{\max}}{4}.
\label{eq:no-pre-prefix}
\end{equation}

\paragraph{Potential growth with pre-decision feedback.}
Recall that $F_t^{\mathrm{raw}}(x,w,v)=\Phi_{t-1}^{\mathrm{sp}}(w,v)
+r_t(w)-r_t(x)
+\sum_{i=1}^m\alpha_iv_i\bigl(c_t^i(x)-c_t^i(w)\bigr)$.
On a safe round, $F_t^{\mathrm{raw}}$ is affine in $x$ and jointly concave in $(w,v)$ for fixed $x$ by \Cref{lem:curvature-lambda}, since changing from $Y_t^i$ to $Y_t^{i,\mathrm{sp}}$ in the potential function only changes the term $\alpha_iv_iY_t^{i,\mathrm{sp}}$, which is affine in $v$, and the regularization parameters satisfy \eqref{eq:sp-curvature-condition}. Sion's minimax theorem gives
\[
\begin{aligned}
M_t^{\mathrm{raw}}
&=\min_x\max_{w,v}F_t^{\mathrm{raw}}(x,w,v)
 =\max_{w,v}\min_xF_t^{\mathrm{raw}}(x,w,v)\\
&\le\max_{w,v}F_t^{\mathrm{raw}}(w,w,v)
 =M_{t-1}^{\mathrm{sp}}.
\end{aligned}
\]
where the first equality follows from the choice of $x_t$ in \Cref{alg:sp-pre}.
Thus a safe round causes no raw increase.

Now suppose round $t$ is unsafe and choose $j$ with $Y_{t-1}^{j,\mathrm{sp}}>W_j-1$. For every $w$,
\begin{equation}
\alpha_jE_{t-1}^{j,\mathrm{sp}}(w)
>\alpha_j(W_j-1-D_j)
=2+\rho_v\log(4mT).
\label{eq:unsafe-score-pre}
\end{equation}
Since the action chosen is the same in both regimes during an unsafe round, we can reuse the argument from the no-pre-decision feedback regime. Let $(w_t^\star,v_t^\star)$ maximize $\Phi_{t-1}^{\mathrm{sp}}$. In the no-pre-decision regime this is the pair chosen by the controller, while in the pre-decision regime it is used only for the analysis. Then we get 
\begin{align}
M_t^{\mathrm{raw}}-M_{t-1}^{\mathrm{sp}}
&\le \epsilon_t^{\max}
+\sup_w\left\{-\frac{\rho_x}{2}D_{\mathrm{KL}}(w\|w^*_t)\right\}+\sup_v\left\{v_0-\frac{\rho_v}{2}D_{\mathrm{KL}}(v\|v^*_t)\right\}\nonumber\\
&\leq \epsilon_t^{\max}+\frac{1}{4mT}.\label{eq:unsafe-pre}
\end{align}
following the same argument as the other regime.

Summing \eqref{eq:unsafe-pre} and \eqref{eq:reflection-leakage} through any prefix $\tau$ before the hard stop gives
\begin{equation}
M_\tau^{\mathrm{sp}}
\le
\sum_{t\leq \tau}\epsilon_t^{\max}
+\frac{1+\alpha_{\max}}{4}.
\label{eq:pre-prefix}
\end{equation}

\paragraph{From the prefix potential to regret.}
This part of the proof basically repeats the logic in \Cref{sec:complete-analysis} by noting that \Cref{lem:tail}  still holds if we have $E_\tau^{j,\mathrm{sp}}(e_f)\ge C_\tau^j-A_\tau^{f,j}$ (instead of having $E_\tau^{j}(e_f)= C_\tau^j-A_\tau^{f,j}$ as in the core algorithm) and setting $h_j=1$. For completion, we expand out the proofs.

If the hard stop never occurs, then by definition $\operatorname{Reg}_T^f=R_T^f$. Evaluating \eqref{eq:sp-potential} at $(e_f,e_0)$ gives
\[
\operatorname{Reg}_T^f
\le
M_T^{\mathrm{sp}}+\rho_x\log F+\rho_v\log(m+1).
\]

Now suppose the hard stop first takes effect on round $\tau+1$, and let resource $j$ trigger it. Thus $\tau$ is the last round before the hard stop, and by definition
\[
\operatorname{Reg}_\tau^f=R_\tau^f,
\qquad
C_\tau^j=\sum_{t\le\tau}c_t^j(p_t).
\]
Recall
\[
A_t^{f,j}:=\sum_{s\le t}c_s^{f,j},
\qquad
B_t^j:=\sum_{s\le t}b_s^j.
\]
By \eqref{eq:sp-witness},
\begin{align*}
E_\tau^{j,\mathrm{sp}}(e_f)
&=Y_\tau^{j,\mathrm{sp}}-S_\tau^{f,j}\\
&=C_\tau^j-\sum_{t\le\tau}d_t^j-(A_\tau^{f,j}-B_\tau^j)\\
&\ge C_\tau^j-A_\tau^{f,j}.\tag{$d_t^j\le b_t^j$}
\end{align*}

Since resource $j$ triggers the stop on round $\tau+1$, the stopping rule gives $B_j-C_\tau^j<1$, while pacing gives $A_T^{f,j}\le B_j+D_j$. Therefore
\begin{equation}
\begin{aligned}
\sum_{t>\tau}c_t^{f,j}
&=A_T^{f,j}-A_\tau^{f,j}\\
&\le (B_j-C_\tau^j)+(C_\tau^j-A_\tau^{f,j})+D_j\\
&<1+E_\tau^{j,\mathrm{sp}}(e_f)+D_j.
\end{aligned}
\label{eq:sp-suffix-cost}
\end{equation}
By \Cref{assumption:soft-conversion},
\begin{equation}
\sum_{t>\tau}r_t^f
\le
\alpha_j\bigl(E_\tau^{j,\mathrm{sp}}(e_f)+1+D_j\bigr)
+\sum_{t>\tau}\epsilon_t^j.
\label{eq:sp-suffix-alpha}
\end{equation}
If $\beta_j<\infty$, then
\[
\begin{aligned}
\sum_{t>\tau}b_t^j
&=B_j-B_\tau^j\\
&=(B_j-C_\tau^j)+(C_\tau^j-A_\tau^{f,j})+(A_\tau^{f,j}-B_\tau^j)\\
&<1+E_\tau^{j,\mathrm{sp}}(e_f)+D_j.
\end{aligned}
\]
Since $\sum_{t>\tau}r_t^f\le T-\tau\le\beta_j\sum_{t>\tau}b_t^j$, we also have
\begin{equation}
\sum_{t>\tau}r_t^f
\le
\beta_j\bigl(E_\tau^{j,\mathrm{sp}}(e_f)+1+D_j\bigr).
\label{eq:sp-suffix-beta}
\end{equation}
Combining \eqref{eq:sp-suffix-alpha} and \eqref{eq:sp-suffix-beta},
\begin{equation}
\sum_{t>\tau}r_t^f
\le
\eta_j\bigl(E_\tau^{j,\mathrm{sp}}(e_f)+1+D_j\bigr)
+\mathbf 1\{\alpha_j<\beta_j\}\sum_{t>\tau}\epsilon_t^j.
\label{eq:sp-suffix}
\end{equation}

Set $s:=\eta_j/\alpha_j\le1$ and evaluate the prefix potential at
\[
(w,v)=\bigl(e_f,(1-s)e_0+se_j\bigr).
\]
Using $\alpha_js=\eta_j$, $\Psi_x(e_f)=\log F$, and $\Psi_v((1-s)e_0+se_j)\le\log(m+1)$ gives
\begin{equation}
\operatorname{Reg}_\tau^f
\le
M_\tau^{\mathrm{sp}}+\rho_x\log F+\rho_v\log(m+1)
-\eta_jE_\tau^{j,\mathrm{sp}}(e_f).
\label{eq:sp-prefix-regret}
\end{equation}
After the hard stop the learner obtains zero reward, so
\[
\operatorname{Reg}_T^f
=\operatorname{Reg}_\tau^f+\sum_{t>\tau}r_t^f.
\]
Adding \eqref{eq:sp-suffix} and \eqref{eq:sp-prefix-regret} cancels the $E_\tau^{j,\mathrm{sp}}(e_f)$ terms. Moreover,
\[
\sum_{t\leq \tau}\epsilon_t^{\max}
+\mathbf 1\{\alpha_j<\beta_j\}\sum_{t>\tau}\epsilon_t^j
\le
\sum_{t=1}^T\epsilon_t^{\max}
\le V_{\mathrm{mix}}.
\]
Substituting \eqref{eq:pre-prefix} or \eqref{eq:no-pre-prefix} yields \eqref{eq:sp-pre-regret} and \eqref{eq:sp-no-pre-regret}.

For the optimized rates, take $\rho_v=4D_{\alpha,\infty}^2/\rho_x$. With pre-decision feedback, minimizing
\[
\rho_x\log F+\frac{4D_{\alpha,\infty}^2\log(m+1)}{\rho_x}
\]
gives the stated bound. Without pre-decision feedback, minimize
\[
\rho_x\log F
+\frac{4D_{\alpha,\infty}^2\log(m+1)+T(1+\alpha_{\max})^2}{\rho_x},
\]
and use $\sqrt{a+b}\le\sqrt a+\sqrt b$.
\end{proof}

\paragraph{Composing Extension 1 and 2.} We briefly explain why this extension composes with the online-raising reduction of Extension~1. Let $\bar c_t^{f,i}$ denote the raised expert costs, and define
\[
\bar S_t^{f,i}:=\sum_{s\le t}(\bar c_s^{f,i}-b_s^i),
\qquad
\bar c_t^i(w):=\sum_f w_f\bar c_t^{f,i}.
\]
Apply the same controller to the raised instance: replace every occurrence of $c_t^i(w)$ in the analytical controller by $\bar c_t^i(w)$ and every $S_t^{f,i}$ in the comparison state by $\bar S_t^{f,i}$. On a safe round before the hard stop, we update $Y_t^{i,\rm sp}$ using cost $\bar c_t^i(x_t)$; on an unsafe round or after the hard stop, it is updated using zero cost. The reward coordinate remains the true reward $r_t(p_t)$, since raising changes only costs.

For budget accounting, define the raised cumulative cost
\[
\bar C_{t-1}^i
:=\sum_{\substack{s<t:\ s\text{ is safe and}\\ s\text{ precedes the hard stop}}}
\bar c_s^i(x_s),
\]
and trigger the hard stop whenever $\bar C_{t-1}^i+1>B_i$ for some $i$. The physical distribution is still $p_t=\pi_t(x_t)$ on a safe round before the hard stop and $p_t=\perp$ otherwise. Hence, on every round,
\[
0\le c_t^i(p_t)
\le
\begin{cases}
\bar c_t^i(x_t),&\text{if $t$ is safe and precedes the hard stop},\\
0,&\text{otherwise}.
\end{cases}
\]
Since the raised experts are $D_i$-paced, lie in $[0,1]$, and satisfy the same soft reward-to-cost condition, the preceding potential and terminal arguments apply verbatim to the raised accounting path. They show that this raised path is $W_i$-subpacing and respects the raised hard budget. The true fractional cost path is pointwise dominated by the raised accounting path. Hence true budget feasibility follows from the raised hard budget, and \Cref{lem:subpace-char} transfers the $W_i$-subpacing guarantee to the true costs.

\subsection{Extension 3: sampled play with pathwise budgets and high-probability guarantees}
\label{sec:sampled-feasibility}

In this section, we show a straightforward extension to the sampling setting under full-information, where the algorithm gets realized rewards $r_t(a)$ from a draw $a\sim p_t$ and incurs realized cost $c_t^i(a)$. The algorithm hard stops when the realized costs exceed $B_i-1$ for some resource $i$. We use martingale concentration to show a high probability regret bound. If we additionally compose with Extension 2, then we also show a high probability subpacing bound.

\paragraph{Sampling the core fractional algorithm.}
Let $p_t\in\Delta(A)$ denote the round-$t$ distribution produced by the core fractional algorithm, where recall that hard stop was determined by the algorithm's fractional/expected spending. The sampled learner maintains an additional \emph{realized hard stop}. Let $\widehat c_t^i:=c_t^i(a_t)$ and 
\[
\widehat C_{t-1}^i:=\sum_{s<t}\widehat c_s^i
\]
be its realized cumulative cost in resource $i$. Before round $t$, if the realized hard stop is already active, or if $\widehat C_{t-1}^i+1>B_i$ for some $i$, the learner activates the realized hard stop and sets $a_t:=\perp$. Otherwise it samples $a_t\sim p_t$. The underlying fractional controller is still run as in \Cref{sec:near-optimal-algo}. The realized hard stop only truncates the sampled play and does not alter $p_t$ or the fractional controller's state.

Let $\chi_t\in\{0,1\}$ indicate that round $t$ has not entered the realized hard stop yet, so that $\chi_t=1$ exactly when $a_t$ is sampled from $p_t$. Define
\[
\widehat r_t:=r_t(a_t),\qquad
\widehat c_t^i:=c_t^i(a_t),\qquad
\mu_t^r:=\chi_t r_t(p_t),\qquad
\mu_t^i:=\chi_t c_t^i(p_t).
\]
Thus $\mu_t^r$ and $\mu_t^i$ are the expected reward and resource-$i$ cost of the sampled learner conditioned on all the information before its private round-$t$ draw (with both equal to zero when the realized hard stop is active).

For $V,L\ge 0$, define
\[
\operatorname{Fr}(V,L)
:=
\frac{L}{3}+\sqrt{2VL+\frac{L^2}{9}},
\]
and set
\[
Q_r:=\min\left\{T,\min_{i\in[m]}(\alpha_iB_i+V_i)\right\},
\qquad
Q_i:=\min\{B_i,T\},
\]
\[
\Gamma_r(\delta)
:=\operatorname{Fr}\!\left(Q_r,\log\frac{2}{\delta}\right),
\qquad
\Gamma_i(\delta)
:=\operatorname{Fr}\!\left(Q_i,\log\frac{4m}{\delta}\right).
\]
In particular,
\[
\Gamma_r(\delta)
=O\!\left(\sqrt{Q_r\log\frac{2}{\delta}}+\log\frac{2}{\delta}\right),
\qquad
\Gamma_i(\delta)
=O\!\left(\sqrt{Q_i\log\frac{4m}{\delta}}+\log\frac{4m}{\delta}\right).
\]
We use the following Freedman inequality \citep{freedman1975tail}: if $(Z_t)_{t\le T}$ is a martingale-difference sequence with $|Z_t|\le 1$ and
\[
\sum_{t=1}^T \mathbb E[Z_t^2\mid\mathcal H_t]\le V
\]
almost surely, then
\[
\Pr\!\left(\max_{s\le T}\sum_{t\le s}Z_t>\operatorname{Fr}(V,L)\right)
\le e^{-L}.
\]
The same statement applies to the lower tail by replacing $Z_t$ with $-Z_t$.

\begin{lemma}
\label{lem:sampling-deviations}
For every $\delta\in(0,1)$, with probability at least $1-\delta$, simultaneously for every prefix $s\le T$ and every resource $i\in[m]$,
\begin{equation}
\label{eq:concentration-event}
\sum_{t\le s}(\mu_t^r-\widehat r_t)\le \Gamma_r(\delta),
\qquad
\left|\sum_{t\le s}(\widehat c_t^i-\mu_t^i)\right|
\le \Gamma_i(\delta).
\end{equation}

\end{lemma}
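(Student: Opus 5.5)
The plan is to apply Freedman's inequality three times and finish with a union bound. The three applications are: once to the reward sequence (upper tail only), and once each to the upper and lower tails of the cost sequence of every resource $i\in[m]$. Let $\mathcal H_t$ be the history before the learner's private round-$t$ draw. This includes the adversary's choice of $(r_t,c_t^1,\ldots,c_t^m)$ in either feedback regime, since the adversary does not see the private draw. It also includes $p_t$ and $\chi_t$, both of which are determined by past realized costs and the fractional controller's state.

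Define $Z_t^r:=\mu_t^r-\widehat r_t$ and $Z_t^{i}:=\widehat c_t^i-\mu_t^i$. When $\chi_t=0$ we have $a_t=\perp$, so $\widehat r_t=\widehat c_t^i=0=\mu_t^r=\mu_t^i$. When $\chi_t=1$ we have $\mathbb E[\widehat r_t\mid\mathcal H_t]=r_t(p_t)$, and similarly for costs. Hence both sequences are martingale differences with respect to $\mathcal H_t$, and since rewards and costs lie in $[0,1]$, they satisfy $|Z_t|\le1$.

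The main step is the almost-sure bound on the predictable quadratic variation, with $V=Q_r$ and $V=Q_i$ respectively. Because the variables lie in $[0,1]$,
\[
\mathbb E[(Z_t^r)^2\mid\mathcal H_t]\le \mathbb E[\widehat r_t^2\mid\mathcal H_t]\le \mathbb E[\widehat r_t\mid \mathcal H_t]=\mu_t^r,
\]
and likewise $\mathbb E[(Z_t^i)^2\mid\mathcal H_t]\le\mu_t^i$. So it suffices to show $\sum_t\mu_t^i\le Q_i$ and $\sum_t\mu_t^r\le Q_r$ almost surely.

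\textbf{Cost total.} Trivially $\sum_t\mu_t^i\le T$. To get the bound by $B_i$, I would use the fractional hard stop. Indeed $\mu_t^i\le c_t^i(p_t)$, and the core fractional algorithm guarantees $\sum_t c_t^i(p_t)\le B_i$ pathwise.

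\textbf{Reward total.} Trivially $\sum_t \mu_t^r\le T$. For the other bound, apply \Cref{assumption:soft-conversion} pointwise in expectation: $r_t(p_t)\le \alpha_i c_t^i(p_t)+\varepsilon_t^i$. Summing over $t$ gives $\sum_t\mu_t^r\le\alpha_iB_i+V_i$ for each $i$. Taking the minimum over $i$ yields $Q_r$.

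This step is the one I expect to need the most care. The bound has to hold almost surely rather than in expectation, and this works because the fractional path $p_t$ is budget-feasible deterministically given the history. If instead only the realized stop were available, one would need $\sum_t\mu_t^i\le B_i$ via the realized budget, which fails at the level of conditional means. Relying on the fractional stop avoids this problem.

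\textbf{Conclusion.} Freedman's inequality with $L=\log(2/\delta)$ gives the reward bound simultaneously over all prefixes $s$, failing with probability at most $\delta/2$. For each $i$ and each tail, $L=\log(4m/\delta)$ gives failure probability at most $\delta/(4m)$; over $2m$ events this totals $\delta/2$. Since Freedman's statement controls $\max_{s\le T}$, the bounds hold uniformly in prefixes, and a union bound over all events gives total failure probability at most $\delta$.
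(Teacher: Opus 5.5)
Your proposal is correct and follows essentially the same route as the paper. It uses the same filtration (including the round-$t$ vectors fixed before the private draw), the same bound $\mathbb E[Z_t^2\mid\mathcal H_t]\le\mu_t$ from $[0,1]$-valuedness, and the same almost-sure totals: $Q_i$ via the fractional hard stop, and $Q_r$ via the soft-conversion assumption. It then applies Freedman to $-Z^r$ and to $\pm Z^i$ with $L=\log(2/\delta)$ and $L=\log(4m/\delta)$, with a union bound over the $1+2m$ events. Your side remark, that the almost-sure variance bound must come from the fractional stop rather than the realized one, is accurate; the paper relies on this implicitly.
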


\begin{proof}
Let $\mathcal H_t$ contain the history, the fractional proposal $p_t$, the indicator $\chi_t$, and the complete round-$t$ reward and cost vectors immediately before the private draw. In both feedback regimes, the adversary has fixed the round-$t$ reward and cost vectors before this draw. Hence
\[
\mathbb E[\widehat r_t\mid\mathcal H_t]=\mu_t^r,
\qquad
\mathbb E[\widehat c_t^i\mid\mathcal H_t]=\mu_t^i.
\]
Therefore
\[
Z_t^r:=\widehat r_t-\mu_t^r,
\qquad
Z_t^i:=\widehat c_t^i-\mu_t^i
\]
are martingale differences with absolute value at most one. Since rewards and costs lie in $[0,1]$, we have
\[
\mathbb E[(Z_t^r)^2\mid\mathcal H_t]=\operatorname{Var}(\widehat r_t\mid\mathcal H_t)\leq E[(\widehat r_t)^2\mid\mathcal H_t] \leq E[(\widehat r_t)\mid\mathcal H_t] = \mu_t^r
\]
and similarly, $\mathbb E[(Z_t^i)^2\mid\mathcal H_t]\le \mu_t^i$. The fractional hard stop gives $\sum_t c_t^i(p_t)\le B_i$, so
\[
\sum_t\mu_t^i\le \min\{B_i,T\}=Q_i.
\]
Moreover, by \Cref{assumption:soft-conversion}, for every resource $i$,
\[
\sum_t\mu_t^r
\le
\sum_t\chi_t\bigl(\alpha_i c_t^i(p_t)+\epsilon_t^i\bigr)
\le \alpha_iB_i+V_i,
\]
and trivially $\sum_t\mu_t^r\le T$. Thus the reward predictable variation is at most $Q_r$.

Apply Freedman's Inequality to $-Z_t^r$ with $L=\log(2/\delta)$ and to each of $Z_t^i$ and $-Z_t^i$ with $L=\log(4m/\delta)$. A union bound over the resulting $1+2m$ events proves the claim.
\end{proof}

\begin{lemma}
\label{lem:sampling-terminal-core}
Condition on \eqref{eq:concentration-event}. Let $\tau$ be the last round before either the fractional hard stop or the realized hard stop first takes effect. If $\tau<T$, then some resource $j$ satisfies
\begin{equation}
\label{eq:sampled-tail-budget}
B_j-C_\tau^j<1+\Gamma_j(\delta).
\end{equation}
Consequently, for every expert $f$,
\begin{equation}
\label{eq:sampled-tail-reward}
\sum_{t>\tau}r_t^f
\le
\eta_j\bigl(E_\tau^j(e_f)+1+D_j+\Gamma_j(\delta)\bigr)
+
\mathbf 1\{\alpha_j<\beta_j\}\sum_{t>\tau}\epsilon_t^j.
\end{equation}
\end{lemma}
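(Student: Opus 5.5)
We split according to which hard stop fires first, then reuse Lemma~\ref{lem:tail} with an enlarged slack $h_j$.

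\textbf{Step 1 (budget gap).} Since $\tau<T$, the round $\tau+1$ is the first round at which some hard stop takes effect, and it is triggered by some resource $j$. There are two cases.

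\emph{Fractional stop.} The fractional controller's rule gives $C_\tau^j+1>B_j$. Hence $B_j-C_\tau^j<1\le 1+\Gamma_j(\delta)$.

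\emph{Realized stop.} The realized rule gives $\widehat C_\tau^j+1>B_j$. No stop was active on rounds $t\le\tau$, so $\chi_t=1$ there and $\mu_t^j=c_t^j(p_t)$. Therefore $\sum_{t\le\tau}\mu_t^j=C_\tau^j$. The concentration event \eqref{eq:concentration-event} at prefix $s=\tau$ gives $\widehat C_\tau^j\le C_\tau^j+\Gamma_j(\delta)$. Combining,
\[
B_j<\widehat C_\tau^j+1\le C_\tau^j+1+\Gamma_j(\delta),
\]
which is \eqref{eq:sampled-tail-budget}.

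The only subtle point is the bookkeeping in this second case. We must check that the fractional controller's state $C_\tau^j$ coincides with $\sum_{t\le\tau}\mu_t^j$, i.e.\ that neither stop has truncated anything before $\tau$. This holds by the definition of $\tau$.

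\textbf{Step 2 (tail reward).} Apply Lemma~\ref{lem:tail} at time $\tau$ with $h_j:=1+\Gamma_j(\delta)\ge0$. Its hypothesis $C_\tau^j\ge B_j-h_j$ is exactly Step 1. The lemma only uses $D_j$-pacing of expert $f$, the identity $E_\tau^j(e_f)=C_\tau^j-A_\tau^{f,j}$ (which holds because the fractional controller's state is unaltered by sampling), and Assumption~\ref{assumption:soft-conversion}.

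\emph{Case $\alpha_j<\beta_j$.} Then $\eta_j=\alpha_j$, and \eqref{eq:soft-tail} gives
\[
\sum_{t>\tau}r_t^f\le\alpha_j\bigl(E_\tau^j(e_f)+h_j+D_j\bigr)+\sum_{t>\tau}\epsilon_t^j .
\]

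\emph{Case $\alpha_j\ge\beta_j$.} Then $\beta_j<\infty$ and $\eta_j=\beta_j$, and \eqref{eq:temporal-tail} gives the same bound with $\beta_j$ in place of $\alpha_j$ and no residual term.

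Substituting $h_j=1+\Gamma_j(\delta)$ in both cases yields \eqref{eq:sampled-tail-reward}.
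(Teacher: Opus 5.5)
Your proposal is correct and follows essentially the same route as the paper's proof. You split on which stop fires first, and in the realized-stop case you use $\chi_t=1$ on the prefix together with the concentration event to obtain \eqref{eq:sampled-tail-budget}; you then invoke Lemma~\ref{lem:tail} with $h_j=1+\Gamma_j(\delta)$. The one cosmetic difference is that you use $h_j=1+\Gamma_j(\delta)$ uniformly in both cases, which is valid since $\Gamma_j(\delta)\ge 0$, whereas the paper takes $h_j=1$ when the fractional stop fires first.
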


\begin{proof}
Recall $C_s^i=\sum_{t\le s}c_t^i(p_t)$.
If the fractional hard stop is first, then its stopping rule gives $C_\tau^j+1>B_j$ for some resource $j$. Then \eqref{eq:sampled-tail-reward} holds even without the concentration term $\Gamma_j(\delta)$ by applying \Cref{lem:tail} with $h_j=1$.

If the realized hard stop is first, then for some resource $j$, $\sum_{t\le\tau}\widehat c_t^j+1>B_j$.
Both stops are inactive through round $\tau$, so $\chi_t=1$ for all $t\le\tau$ and therefore $\mu_t^j=c_t^j(p_t)$ on this prefix. Thus we have
\[
C_\tau^j
=
\sum_{t\le\tau}\mu_t^j
\ge
\sum_{t\le\tau}\widehat c_t^j-\Gamma_j(\delta)
>
B_j-1-\Gamma_j(\delta),
\]
which proves \eqref{eq:sampled-tail-budget}. Applying \Cref{lem:tail} with $h_j=1+\Gamma_j(\delta)$ gives \eqref{eq:sampled-tail-reward}.
\end{proof}

\begin{theorem}
\label{thm:sampling-core}
Let $\operatorname{Reg}_{\mathrm{frac}}$ denote the deterministic fractional regret bound from \Cref{thm:core} for the chosen feedback regime. The sampled learner satisfies every hard budget on every realization. Moreover, with probability at least $1-\delta$, simultaneously for every expert $f$,
\begin{equation}
\label{eq:sampled-regret}
\widehat{\operatorname{Reg}}_T^f
:=
\sum_{t=1}^T r_t^f-\sum_{t=1}^T\widehat r_t
\le
\operatorname{Reg}_{\mathrm{frac}}
+
\Gamma_r(\delta)
+
\max_{i\in[m]}\eta_i\Gamma_i(\delta).
\end{equation}
\end{theorem}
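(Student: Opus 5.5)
Pathwise budget feasibility is immediate from the realized hard stop. Before each round $t$ the learner checks whether $\widehat C_{t-1}^i+1\le B_i$ for all $i$. Since every realized one-round cost is at most $1$, any round in which it samples satisfies $\widehat C_t^i\le B_i$. After the stop it plays $\perp$, which costs zero. Hence $\sum_t\widehat c_t^i\le B_i$ on every realization. For the regret bound I work on the event \eqref{eq:concentration-event} of \Cref{lem:sampling-deviations}, which has probability at least $1-\delta$. Let $\tau$ be the last round before either hard stop takes effect, as in \Cref{lem:sampling-terminal-core}. Through round $\tau$ we have $\chi_t=1$, so $\mu_t^r=r_t(p_t)$, and after $\tau$ the sampled learner earns $\widehat r_t=0$. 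I decompose
\[
\widehat{\operatorname{Reg}}_T^f=\Bigl(\sum_{t\le\tau}r_t^f-\sum_{t\le\tau}r_t(p_t)\Bigr)+\sum_{t\le\tau}(\mu_t^r-\widehat r_t)+\sum_{t>\tau}r_t^f .
\]
By \Cref{lem:sampling-deviations}, the middle term is at most $\Gamma_r(\delta)$.

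The first term is the fractional regret $R_\tau^f$ on a prefix where the fractional controller has not stopped. I will reuse the potential argument from the proofs of \Cref{thm:general-lambda} and \Cref{thm:core} verbatim. \Cref{lem:progress-with-feedback} and \Cref{lem:progress-without-feedback} give $M_\tau^\lambda\le\mathsf{Stab}_\tau$. This is legitimate because the fractional controller's state is never altered by sampling, and its rounds up to $\tau$ all precede the fractional stop.

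There are two cases. If $\tau=T$ with no stop at all, I evaluate the potential at $(e_f,e_0)$ and obtain $R_T^f\le\operatorname{Reg}_{\mathrm{frac}}$ directly. If $\tau<T$, then \Cref{lem:sampling-terminal-core} supplies a resource $j$ satisfying \eqref{eq:sampled-tail-reward}. I then evaluate the prefix potential at $(e_f,(1-s_j)e_0+s_je_j)$ with $s_j=\eta_j/\lambda_j$, exactly as in \Cref{thm:general-lambda}. This gives
\[
R_\tau^f+\eta_jE_\tau^j(e_f)\le\mathsf{Stab}_\tau+\rho_x\log F+\rho_v\log(m+1).
\]
Adding \eqref{eq:sampled-tail-reward} cancels the $\eta_jE_\tau^j(e_f)$ terms. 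What remains is $\mathsf{Stab}_\tau+\rho_x\log F+\rho_v\log(m+1)+\eta_j(1+D_j)+\mathbf 1\{\alpha_j<\beta_j\}V_j+\eta_j\Gamma_j(\delta)$. This is at most $\operatorname{Reg}_{\mathrm{frac}}+\max_i\eta_i\Gamma_i(\delta)$, since $\mathsf{Stab}_\tau\le\mathsf{Stab}_T$ and the remaining pieces are bounded by $\mathsf{Tail}$. The optimized constants in \Cref{thm:core} absorb these pieces in the same way as before. Summing the three terms gives \eqref{eq:sampled-regret} simultaneously for all $f$, because the event does not depend on $f$.

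The main subtlety is the case where the realized stop fires before the fractional one. Then the fractional controller is still live, yet the learner earns nothing. The shortfall has to be charged to the expert's tail through a budget gap that is larger by $\Gamma_j(\delta)$, and this is exactly what \Cref{lem:sampling-terminal-core} provides. I also need to check two points. First, $\tau$ must be defined as the minimum of the two stopping times, so that the potential bound from the un-stopped fractional run applies on $[1,\tau]$. Second, the concentration in \Cref{lem:sampling-deviations} must hold uniformly over prefixes, so that it can be applied at the random time $\tau$. Freedman's inequality as stated, with the $\max_{s\le T}$, gives this uniformity.
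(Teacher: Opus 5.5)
Your proposal is correct and follows the paper's proof essentially step for step. You take $\tau$ to be the last round before the earlier of the two hard stops, bound the prefix sampling gap by $\Gamma_r(\delta)$ via \Cref{lem:sampling-deviations}, evaluate the prefix potential at $(e_f,(1-s_j)e_0+s_je_j)$, and cancel the $\eta_jE_\tau^j(e_f)$ term against the tail bound of \Cref{lem:sampling-terminal-core}, which is larger by $\eta_j\Gamma_j(\delta)$.
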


\begin{proof}
The realized hard stop guarantees pathwise budget feasibility. Condition on \eqref{eq:concentration-event}, and let $\tau$ be the last round before either stop first takes effect, with $\tau=T$ if neither stop occurs. Since both stops are inactive through $\tau$, we have $\mu_t^r=r_t(p_t)$ for every $t\le\tau$. Hence
\begin{equation}
\label{eq:sampled-tau-regret}    
\sum_{t\le\tau}(r_t^f-\widehat r_t) = \sum_{t\le\tau}(r_t^f-r_t(p_t))+\sum_{t\le\tau}(r_t(p_t)-\widehat r_t)
\le
R_\tau^f+\Gamma_r(\delta).
\end{equation}
If $\tau=T$, combining \eqref{eq:sampled-tau-regret} with \Cref{thm:core} proves \eqref{eq:sampled-regret}.

Now suppose $\tau<T$, and let $j$ be the resource from Lemma~\ref{lem:sampling-terminal-core}. The prefix-potential argument in the proof of \Cref{thm:general-lambda} is unchanged through round $\tau$. In particular, evaluating the prefix potential at $\bigl(e_f,(1-s_j)e_0+s_je_j\bigr)$ with $s_j=\eta_j/\lambda_j$ gives
\begin{equation}
\label{eq:sampled-tau-potential}
R_\tau^f+\eta_jE_\tau^j(e_f)
\le
\operatorname{Stab}_\tau+\rho_x\log F+\rho_v\log(m+1),
\end{equation}
where $\operatorname{Stab}_\tau=0$ with pre-decision feedback and
\[
\operatorname{Stab}_\tau
\le
\frac{T(1+\eta_{\max})^2}{\rho_x}
\]
without pre-decision feedback. Combining \eqref{eq:sampled-tail-reward},\eqref{eq:sampled-tau-regret} and \eqref{eq:sampled-tau-potential} yields
\[
\widehat{\operatorname{Reg}}_T^f
\le
\operatorname{Reg}_{\mathrm{frac}}
+
\Gamma_r(\delta)
+
\eta_j\Gamma_j(\delta),
\]
Since $\eta_j\Gamma_j(\delta)\le\max_i\eta_i\Gamma_i(\delta)$, \eqref{eq:sampled-regret} follows.
\end{proof}

We now state a corollary of composing three extensions.
\begin{corollary}
\label{thm:sampling-composable}
Let $\operatorname{Reg}_{\mathrm{frac}}$ denote the deterministic fractional regret bound of the chosen base algorithm: \Cref{thm:core} for the core algorithm, or \Cref{thm:subpacing-algo} when Extension~2 is used. Applying Extension~1 does not change this deterministic bound. Then the sampled learner satisfies every hard budget on every realization and, with probability at least $1-\delta$, simultaneously for every expert $f$,
\[
\widehat{\operatorname{Reg}}_T^f
\le
\operatorname{Reg}_{\mathrm{frac}}
+
\Gamma_r(\delta)
+
\max_{i\in[m]}\eta_i\Gamma_i(\delta).
\]
If Extension~2 is used, then on the same event the realized resource-$i$ cost path is $(W_i+\Gamma_i(\delta))$-subpacing for every $i\in[m]$. 
\end{corollary}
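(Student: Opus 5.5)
The plan is to reuse the proof of \Cref{thm:sampling-core} nearly verbatim. The only new step is to check that each ingredient it relies on survives replacing the core controller by the Extension~2 controller, the raised costs of Extension~1, or both. Pathwise budget feasibility is immediate: the realized hard stop checks $\widehat C_{t-1}^i+1>B_i$ on the true realized costs regardless of which fractional controller generates $p_t$. Next I would observe that \Cref{lem:sampling-deviations} only uses two facts: the martingale structure of the private draw, and the bounds $\sum_t\mu_t^i\le B_i$ and $\sum_t\mu_t^r\le\alpha_iB_i+V_i$. The first is unchanged. The second follows from fractional budget feasibility of the base algorithm, guaranteed by \Cref{thm:raising} and \Cref{thm:subpacing-algo}; with Extension~1 the true fractional cost is dominated by the raised one, so the bound still holds. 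Hence the concentration event \eqref{eq:concentration-event} holds with probability at least $1-\delta$.

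On this event, let $\tau$ be the last round before either stop. The prefix inequality \eqref{eq:sampled-tau-regret} uses only that $\chi_t=1$ for $t\le\tau$, so it carries over unchanged. For the tail, I would redo \Cref{lem:sampling-terminal-core}. If the fractional stop fires first, the terminal argument of the base theorem applies directly. For Extension~2 this is the suffix bound \eqref{eq:sp-suffix} with $E_\tau^{j,\mathrm{sp}}$; for Extension~1 it is the same bound with raised quantities, which is valid by \Cref{lemma:raising}. If the realized stop fires first, then $B_j-C_\tau^j<1+\Gamma_j(\delta)$ as before. For Extension~1 I also need $\bar C_\tau^j\ge C_\tau^j$, which holds by pointwise domination, so the same gap bound holds for the raised accounting. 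I would then apply the tail lemma with $h_j=1+\Gamma_j(\delta)$. In the Extension~2 case this uses the inequality $E_\tau^{j,\mathrm{sp}}(e_f)\ge C_\tau^j-A_\tau^{f,j}$, which the proof of \Cref{thm:subpacing-algo} notes suffices. The terminal term then picks up at most $\eta_j\Gamma_j(\delta)$.

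Combining the pieces is the same cancellation as before. I evaluate the prefix potential at $(e_f,(1-s)e_0+se_j)$ with $s=\eta_j/\lambda_j$, where $\lambda_j=\alpha_j$ in Extension~2. This yields the prefix bound \eqref{eq:sp-prefix-regret} or \eqref{eq:sampled-tau-potential}. Adding the tail bound cancels the $\eta_jE_\tau^j$ terms and leaves $\operatorname{Reg}_{\mathrm{frac}}+\Gamma_r(\delta)+\eta_j\Gamma_j(\delta)$. One point needs care here: in Extension~2 the residual sums $\sum_{t\le\tau}\epsilon^{\max}_t$ and $\sum_{t>\tau}\epsilon_t^j$ must still be bounded jointly by $V_{\mathrm{mix}}$, exactly as in that proof.

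For the subpacing claim, I would use the witness $d^i$ from \eqref{eq:sp-witness}, for which $|\sum_{s\le t}(c_s^i(p_s)-d_s^i)|\le W_i$ by \eqref{eq:Y-sp-invariant}. The realized path is $\widehat c^i$ with expectations $\mu^i$. Since the realized stop only zeros out rounds, $\mu_t^i\le c_t^i(p_t)$, with equality before the stop. Taking the same witness $d^i$, the triangle inequality with \eqref{eq:concentration-event} gives $W_i+\Gamma_i(\delta)$ on the prefix before the realized stop. The main obstacle is the rounds after the realized stop but before the fractional stop. There $\mu^i_t=0$ while $c_t^i(p_t)$ may be positive, so the fractional witness no longer tracks the realized path. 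I would handle this through \Cref{lem:subpace-char}. I would verify its two conditions for $\widehat c^i$ directly: the prefix condition $\widehat A_t-B_t\le W_i+\Gamma_i$, and the interval condition with $2(W_i+\Gamma_i)$. Realized spending after the stop is zero, so any interval extending past the stop has increment no larger than its truncation at the stop time. This gives the conditions, and the characterization then yields the $(W_i+\Gamma_i(\delta))$-subpacing claim. With Extension~1 composed, the same argument runs on the raised accounting, and domination transfers it to true costs.
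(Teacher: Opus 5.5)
Your argument is correct. The regret half is essentially the paper's proof, which just says the argument of \Cref{thm:sampling-core} goes through once \Cref{lem:tail} is known to hold under Extension~2. You are more explicit than the paper: you note that $\bar C_\tau^j\ge C_\tau^j$ is needed for the raised accounting, and that the residuals must be bounded jointly by $V_{\mathrm{mix}}$.

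The subpacing half takes a different route. The paper does not reuse a witness for the fractional path $c^i(p)$ on the realized path. Since $0\le\mu_t^i\le c_t^i(p_t)$, downward closure in \Cref{lem:subpace-char} gives a witness $d^i$ for $\mu^i$ itself. After the realized stop $\widehat c_t^i-\mu_t^i=0$, and \eqref{eq:concentration-event} controls every prefix. So a single triangle inequality $|\sum_{t\le s}(\widehat c_t^i-d_t^i)|\le W_i+\Gamma_i(\delta)$ holds for all $s$, and the obstacle you identify (rounds after the realized stop but before the fractional stop) never arises.

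Your route keeps the witness of \eqref{eq:sp-witness} and uses it only up to the realized stop. You then check the two conditions of \Cref{lem:subpace-char} by hand, using that $\widehat A_t$ freezes while $B_t$ is nondecreasing. This is valid but longer.

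One small imprecision is in the Extension~1 case. When Extension~1 is composed, \eqref{eq:sp-witness} is a witness for the raised accounting path $\bar c^i(x_t)$, not for $c^i(p_t)$. So your phrase ``the same argument runs on the raised accounting'' is loose, since the realized path consists of true costs. The fix is to start from any witness of the true fractional path, which is $W_i$-subpacing by domination; your argument then goes through verbatim. The paper's downward-closure step avoids this issue entirely, because it only needs the true fractional path to be $W_i$-subpacing.
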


\begin{proof}[Proof sketch]
Extension~1 raises each $D_i$-subpacing expert to a virtual $D_i$-paced expert without changing any parameters entering the regret bound. Run Extension~2 on this raised instance. The algorithm is $W_i$-subpacing and budget-feasible, and since the true costs are pointwise dominated by the raised costs, downward closure transfers both properties to the true fractional path. Finally, Extension~3 samples this fractional path with a realized hard stop. The realized hard stop guarantees budget feasibility pathwise. Conditioned on \eqref{eq:concentration-event}, sampling increases the fractional regret bound by at most $
\Gamma_r(\delta)+\max_{i\in[m]}\eta_i\Gamma_i(\delta)$ by the same argument as in the proof of \Cref{thm:sampling-core} and observing that the claims of \Cref{lem:tail} still hold with Extension 2 (See \Cref{subsec:extension-subpacing-algo}). Sampling also preserves learner subpacing up to the concentration radius $\Gamma_i(\delta)$:  since $0\le\mu_t^i\le c_t^i(p_t)$ pointwise and $(c_t^i(p_t))_{t\le T}$ is $W_i$-subpacing, \Cref{thm:subpacing-algo} implies that $(\mu_t^i)_{t\le T}$ is $W_i$-subpacing. Let $d_t^i\in[0,b_t^i]$ be a corresponding witness. On the event \eqref{eq:concentration-event}, for every prefix $s$,
\[
\left|\sum_{t\le s}(\widehat c_t^i-d_t^i)\right|
\le
\left|\sum_{t\le s}(\mu_t^i-d_t^i)\right|
+
\left|\sum_{t\le s}(\widehat c_t^i-\mu_t^i)\right|
\le
W_i+\Gamma_i(\delta).
\]
Thus the realized sampled resource-$i$ path is $(W_i+\Gamma_i(\delta))$-subpacing.
\end{proof}

\section{Lower Bounds}
\label{sec:lower-bound}

In this section, we show that the regret of any algorithm against a set of experts must depend on the Kolmogorov distance between their spending patterns, regardless of whether the algorithm has pre-decision feedback.
Theorem~\ref{theorem:lower-bound-any-spending-pattern} states that for any spending patterns that are $D$ apart and for any algorithm, there exists an instance involving only two actions and two experts on which the algorithm incurs $\Omega(D)$ regret.
Theorem~\ref{theorem:lower-bound-number-experts-dependence} states that for any $D$ and for any number of experts $F$, there exists a single instance on which any algorithm incurs $\Omega(D\sqrt{\log F})$ regret.
Both results hold even in the ORA regime.
The difference between the two results, outside of the number of experts, is that the first result holds for any choice of spending patterns, while the construction in the second result chooses a specific baseline spending pattern.
The constructions for both results can be found in \cite{braverman2025newbenchmark}, but we outline them here to be self-contained.

\begin{theorem}[\cite{braverman2025newbenchmark}]\label{theorem:lower-bound-any-spending-pattern}
For any two spending patterns $\bm{s}, \bm{s}' \in [0,1]^T$ such that $\sum_t s_t = \sum_t s'_t = B$ and for any algorithm that spends at most $B$ in expectation, there exists a set of experts $\mathcal{F}$ and a sequence of rewards and costs such that the algorithm incurs expected regret at least $\kol(\bm{s},\bm{s}')/4$.
This lower bound holds with $\abs{\mathcal{F}} = 2$ and even if the algorithm can observe the rewards and costs of the current round before making its decision.
\end{theorem}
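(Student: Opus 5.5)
Let $t^\ast\in\arg\max_t\bigl|\sum_{s\le t}(s_s-s'_s)\bigr|$. By symmetry we may assume $\sum_{s\le t^\ast}(s_s-s'_s)=K:=\kol(\bm s,\bm s')\ge 0$, so that $\bm s$ front-loads its spending relative to $\bm s'$ up to time $t^\ast$. We use two actions (``spend'', with cost $1$, and $\perp$) and two experts. Expert $1$ plays ``spend'' with probability $s_t$ in round $t$, and expert $2$ plays it with probability $s'_t$. Both experts therefore spend exactly $B$ in expectation. We then exhibit two reward sequences that agree on rounds $1,\dots,t^\ast$. In instance $I_1$ the reward of ``spend'' is $r_t=1$ for $t\le t^\ast$ and $r_t=0$ afterwards. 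In instance $I_2$, $r_t=1$ for $t\le t^\ast$, then $r_t=2$ for $t>t^\ast$; alternatively we rescale so that rewards lie in $[0,1]$, taking $r_t=1/2$ on the prefix and $r_t=1$ on the suffix. Because the instances agree through $t^\ast$ and the learner sees only current and past rounds, any algorithm has the same expected prefix spending $X:=\mathbb E\sum_{t\le t^\ast}x_t$ in both instances.

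The key computation uses the rescaled rewards $1/2$ then $1$. Let $P=\sum_{t\le t^\ast}s_t$ and $P'=\sum_{t\le t^\ast}s'_t=P-K$. In $I_1$ the suffix rewards are $0$, so $\opt\ge P/2$ (from expert $1$) while the learner earns at most $X/2$, giving regret at least $(P-X)/2$. In $I_2$, the learner earns $X/2$ on the prefix and at most $B-X$ on the suffix, because its total expected spending is at most $B$ and each unit of suffix reward costs one unit. Expert $2$ earns $P'/2+(B-P')=B-P'/2$, so the regret is at least $(B-P'/2)-(B-X/2)=(X-P')/2$. Adding the two bounds gives $(P-P')/2=K/2$, hence $\max\{\text{regret in }I_1,\text{regret in }I_2\}\ge K/4$. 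This is exactly the claimed constant, and it also holds if we mix over the two instances with probability $1/2$ each (expected regret $\ge K/4$). We also check Assumption~\ref{assumption:soft-conversion} with $\alpha=1$: rewards never exceed costs for ``spend'', and $\perp$ gives zero reward.

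The main obstacle is making the suffix-reward bound rigorous. The learner's suffix reward is $\sum_{t>t^\ast}x_t$, and we need this to be at most $B-\sum_{t\le t^\ast}x_t$ in expectation. That requires the budget to be stated in expectation, which is exactly what the hypothesis "spends at most $B$ in expectation" provides, with linearity of expectation handling randomized or adaptive algorithms. Pre-decision feedback does not help, since the distinguishing information lies strictly after $t^\ast$. A smaller point is the sign convention: if the maximizing prefix difference is negative, we swap the roles of the two experts.
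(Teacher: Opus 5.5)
Your proposal is correct and is essentially the paper's proof. Both use the same two-action, two-expert construction, with rewards $1/2$ on the shared prefix up to the maximizing round and then $0$ or $1$. Both note that expected prefix spending is identical in the two worlds, and both add the two regret lower bounds to get $\kol(\bm s,\bm s')/2$, so the regret is at least $\kol(\bm s,\bm s')/4$ in one of the two instances.
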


The main idea behind Theorem~\ref{theorem:lower-bound-any-spending-pattern} is the spend-or-save dilemma: if there exists a time $\tau$ when the cumulative spending of two spending patterns differ by $D$, then the expert that spent more before $\tau$ does better in a world where the most valuable rewards come before $\tau$, while the expert that spent less before $\tau$ does better in a world where the most valuable reward come after $\tau$.
If the rewards and costs up to time $\tau$ are identical between the two worlds (so an algorithm cannot distinguish between the two worlds), then the algorithm once again faces a spend-or-save dilemma proportional to the difference in spending between the two experts at time $\tau$.

\begin{proof}
Let $D = \kol(\bm{s}, \bm{s}')$ and $S_\tau = \sum_{t=1}^\tau s_t$.
Define $S'_\tau$ similarly for $\bm{s}'$.
The construction consists of two actions $\mathcal{A} = \{\top, \perp\}$ and two experts $\mathcal{F} = \{\bm{x}, \bm{x}'\}$ where
\begin{align*}
    x_t(\top) = s_t \quad&\mathrm{and}\quad x_t(\perp) = 1-s_t \\
    x'_t(\top) = s'_t \quad&\mathrm{and}\quad x'_t(\perp) = 1-s'_t.
\end{align*}
Let $\tau$ denote a round where $\abs{S_\tau - S'_\tau} = D$, and assume without loss of generality that $S_\tau - S'_\tau = D$.
Note that $\tau \leq T - D$ since 
\[
    \textstyle D = \sum_{t=1}^\tau (s_t - s'_t) \leq B - \sum_{t=1}^\tau s'_t = \sum_{t=\tau+1}^T s'_t \leq T - \tau.
\]
Define two sequences of rewards $\bm{r}, \bm{r}'$ and costs $\bm{c}, \bm{c}'$.
\begin{align*}
    r_t(\perp) &= r'_t(\perp) = c_t(\perp) = c'_t(\perp) = 0 \\
    r_t(\top) &= \begin{cases}
        1/2 & t \leq \tau \\
        0 & t > \tau
    \end{cases} \\
    r'_t(\top) &= \begin{cases}
        1/2 & t \leq \tau \\
        1 & t > \tau
    \end{cases} \\
    c_t(\top) &= c'_t(\top) = 1
\end{align*}
Note that because the first $\tau$ rounds are identical between the two sequences, the expected spending of the algorithm is the same against both sequences by round $\tau$.
Let $B_\tau$ denote this quantity.
Since the algorithm spends at most $B$ in expectation, we have that the expected reward of the algorithm against the first sequence is exactly $B_\tau / 2$, while its expected reward against the second sequence is at most $B_\tau/2 + (B - B_\tau) = B - B_\tau / 2$.
Meanwhile, the two experts obtain exactly $S_\tau/2$ and $S'_\tau/2$ against the first sequence and exactly $B - S_\tau/2$ and $B - S'_\tau/2$ against the second sequence.
If the algorithm has expected regret at most $R$ on both sequences, then
\[
    \textstyle 2R \geq (S_\tau/2 - B_\tau/2) + ((B - S'_\tau/2) - (B - B_\tau / 2)) = (S_\tau - S'_\tau) / 2 = D/2.
\]
\end{proof}

\begin{theorem}[\cite{braverman2025newbenchmark}]\label{theorem:lower-bound-number-experts-dependence}
Let $B = T/2$.
For all $D \geq 1$ and $F \geq 2$ satisfying $T\geq 2D\log F$, there exists a baseline spending pattern, a set of $F$ experts that are $D$-pacing with respect to this baseline spending pattern, and a sequence of (random) rewards and costs such that the expected regret of any budget-feasible fractional-play algorithm is at least $\Omega(D \sqrt{\log F})$.
This lower bound holds even if the algorithm can observe the rewards and costs of the current round before making its decision.
\end{theorem}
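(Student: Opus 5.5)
The plan is to reduce to the classical $\Omega(\sqrt{T'\log F})$ lower bound for prediction with $F$ experts. We partition the horizon into $K$ blocks of length $2D$, with $K\approx \log F$ (the condition $T\ge 2D\log F$ leaves room for this). Within each block we build a two-phase spend-or-save gadget, and across blocks we make experts encode independent binary choices.

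\textbf{Construction.} Take the baseline $b_t=1/2$ throughout, so $B=T/2$. Each block $k$ has a first half and a second half, each of length $D$. In block $k$, an expert with bit $\sigma_k\in\{0,1\}$ behaves as follows: if $\sigma_k=1$ it plays $\top$ on the first half and $\perp$ on the second; if $\sigma_k=0$ it does the reverse. Its cumulative spending then deviates from the baseline by at most $D/2$ inside each block and returns to the baseline at block ends, so it is $D$-pacing. The experts are indexed by $F$ distinct bit strings $\sigma\in\{0,1\}^K$ with $K=\lfloor\log_2 F\rfloor$. Using roughly $2^K$ strings suffices up to constants, and the remaining rounds are padded with zero rewards.

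\textbf{Rewards.} In block $k$, all first-half rounds have reward $1/2$ for $\top$. Second-half rounds have reward $1/2+\epsilon_k$, where $\epsilon_k$ is drawn independently and uniformly from $\{\pm\epsilon\}$. Costs are $1$ for $\top$, and rewards are revealed round by round, which is the ORA regime. The point of the gadget is that during the first half of block $k$ the algorithm has no information about $\epsilon_k$, since earlier blocks are independent. Exactly as in the proof of Theorem~\ref{theorem:lower-bound-any-spending-pattern}, the algorithm must commit its first-half spending before $\epsilon_k$ is revealed.

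\textbf{Accounting.} The key step is to account for the budget across blocks. I would first relax the global budget $T/2$ to a Lagrangian/averaged form: charge a price of $1/2$ per unit of cost. Against this price the first-half rewards are exactly neutral, and second-half rewards have net value $\epsilon_k$. The algorithm's block-$k$ value is then at most $\epsilon_k\cdot(\text{second-half spend})$, up to budget-shifting effects controlled by the total budget. The expert $\sigma$ earns net $D\epsilon_k$ in block $k$ if $\sigma_k=0$, and $0$ otherwise. Hence the best expert earns $\sum_k D(\epsilon_k)_+\approx KD\epsilon/2$. For the algorithm, the expected value of $\epsilon_k$ times its spend is zero if it does not know the sign. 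But it \emph{does} see $\epsilon_k$ during the second half, so it can simply spend on positive blocks.

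This shows the naive gadget fails. To fix it, I would instead make the sign revealed only through cumulative rewards. Each second-half round gets an independent $\pm\epsilon$ perturbation, or better, the block value is decided by a coin that the algorithm cannot exploit because spending in the second half must be pre-funded by saving in the first half. The budget coupling is what forces commitment: whatever the algorithm saved in the first half bounds what it can spend in the second. The best expert then gains $\Theta(D\sqrt{K})$ over $1/2$-pricing by the standard maximum-of-$2^K$-random-walks argument, via an anti-concentration bound for $\max_\sigma\sum_k \pm D\epsilon_k$. Choosing $\epsilon$ constant gives $\Omega(D\sqrt{\log F})$.

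\textbf{Main obstacle.} The hard part is designing the reward randomness so that (i) the information that determines which expert is best is unavailable when the saving decision is made, and (ii) the global budget prevents the algorithm from carrying slack across blocks to exploit revealed information later. I would try to resolve (ii) with the $1/2$ Lagrangian price: since $\top$-rewards are at most $1/2+\epsilon$, excess carried budget yields only $O(\epsilon)$ per unit. The max-of-random-signs step is then routine.
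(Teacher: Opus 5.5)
There is a genuine gap. The proposal never reaches a reward construction that works, and the mechanism you rely on at the end is not the right one. The announced reduction to the classical $\Omega(\sqrt{T'\log F})$ experts bound cannot work as such: that bound relies on not seeing the current reward, and the statement explicitly allows seeing it. All the hardness must therefore come from the intertemporal value of budget.

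Your diagnosis of the naive gadget is correct, and it is fatal for any version in which the first-half value is frozen at $1/2$, the conditional mean of the second-half value. The first half then acts as a free buffer. Write $\Delta_{k-1}$ for the budget used minus $D(k-1)$ before block $k$. The algorithm can spend $D/2-\Delta_{k-1}$ in the first half and then fund the whole second half exactly when $\epsilon_k>0$. This keeps $\Delta_k\in\{\pm D/2\}$ for every $k$, so the algorithm loses only $O(D)$ in total, independently of $K$; "pre-funding" does not force commitment. Per-round $\pm\epsilon$ perturbations are worse still, since in the ORA regime the algorithm can cherry-pick good rounds.

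The "max of $2^K$ random walks" step is also mistaken. Your experts are all of $\{0,1\}^K$, so $\max_\sigma\sum_k(\cdot)=\sum_k\max(\cdot)$, which is linear in $K\epsilon$ rather than $\Theta(D\sqrt K)$ by anti-concentration. With $\epsilon$ constant, no instance of this shape can make the algorithm forfeit a constant fraction of that gap. That would give $\Omega(D\log F)$, contradicting the $O(D\sqrt{\log F})$ upper bound of Theorem~\ref{thm:core} with pre-decision feedback. So the per-window gap must be of order $1/\sqrt{\log F}$.

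The missing idea is to make the value of budget itself a martingale. In the paper, the costly action pays $a_i$ per unit in the first half of window $i$ and $b_i$ in the second half. Here $a_1=1/2,b_1,a_2,b_2,\dots$ is an unbiased random walk with step $\varepsilon=\Theta(1/\sqrt{\log F})$, absorbed at $0$ or $1$. Consider (remaining window budget)$\times$(current price)$+$(reward so far). Its increment is (budget remaining before the round)$\times$(price increment), so it is a martingale even when the current price is seen before acting. Hence the algorithm's expected window reward is at most $\tfrac12 D\log F$ plus its overspend beyond $D\log F$. Meanwhile, the expert that picks the better half in every window earns $D\sum_i\max\{a_i,b_i\}$. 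This exceeds $D\sum_i\frac{a_i+b_i}{2}$, whose mean is $\tfrac12 D\log F$, by $\tfrac{D\varepsilon}{2}$ in every window where the walk is still moving. Since the walk survives $2\log F$ steps with constant probability, the regret is $\Omega(D\varepsilon\log F)=\Omega(D\sqrt{\log F})$. The $\sqrt{\log F}$ comes from the step size needed to stay in $[0,1]$, not from a maximum over experts.

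Your padding must also change. With $b_t\equiv 1/2$ and $B=T/2$ fixed, zero-reward padding hands the algorithm $(T-2D\log F)/2$ of spare budget to pour into the windows. For $T\ge 4D\log F$ it can buy every $\top$ and beat every expert. The paper instead gives padded rounds reward $1$ at cost $1/2$, so every unit diverted from the padding costs more than it can earn in the windows; this is also what neutralizes the overspend term above.
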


\begin{proof}[Proof Sketch]
We outline the construction and provide a sketch for why it witnesses the desired lower bound.
See \cite{braverman2025newbenchmark} for the full proof.\footnote{We modify their construction slightly by adding padded rounds at the end when $T> 2D\log F$.}

For simplicity, assume $F$ is a power of 2 and $D\in \mathbb{Z}^+$. This is without loss up to constant factors. The construction consists of two non-null actions (so three actions total) and repeats a spend-or-save dilemma in each of $\log F$ windows of length $2D$.
Each window $i \in \{1, \dots, \log F\}$ is split into two halves.
The first action yields reward $a_i \in [0,1]$ in each round in the first half and 0 reward in each round in the second half.
The second action does the opposite: it yields 0 reward in each round in the first half but $b_i \in [0,1]$ reward in each round in the second half.
The first action incurs cost $2B/T=1$ when it yields reward $a_i$ and incurs 0 cost otherwise. The second action incurs cost $2B/T=1$ when it yields reward $b_i$ and incurs 0 cost otherwise. The set of experts is indexed by $\{0,1\}^{\log F}$ where a 0 in coordinate $i$ means that the expert chooses the first action in the first half (and the null action in the second half) and a 1 means that the expert chooses the second action in the second half (and the null action in the first half). If there are any rounds remaining after $\log F$ windows, the reward and cost of each non-null action is 1 and $B/T=1/2$, respectively, and each expert gets reward 1 in each remaining round. This means that any money saved during the $\log F$ windows by the algorithm cannot be spent to catch up to any expert during this time period. Let the baseline spending pattern be the uniform pacing spending pattern, i.e. spending $B/T$ per round.
Then each expert is indeed $D$-pacing against this baseline spending pattern: the cumulative deviation is at most $D$ at any timestep. Let $B_0 = D\log F$ be the cumulative spending of the uniformly pacing spending pattern during the $\log F$ windows.

The rewards $a_1, b_1, \dots, a_{\log F}, b_{\log F}$ are given by an unbiased random walk with step size $\varepsilon = 1 / \sqrt{\log F}$ that starts at $a_1 = 1/2$ and terminates when it hits either 0 or 1.
The expected reward of the best expert is
\[
    \textstyle \opt = D \cdot \EE[\sum_{i=1}^{\log F} \max\{a_i, b_i\}] + (T-2D\log F).
\]
Meanwhile, let $C$ denote the actual cumulative spending of the algorithm within the $\log F$ windows. Then its expected total reward at most 
\begin{align*}
&\quad\ a_1 B_0 + \EE[(C-B_0)_+] + \EE[\min\{T-2D\log F, 2(B-C)\}]\\
&= D \cdot \EE\left [\sum_{i=1}^{\log F} \frac{a_i + b_i}{2}\right] + \EE[(C-B_0)_+] + \EE[\min\{T-2D\log F, 2(B-C)\}]\\
&\leq D \cdot \EE\left[\sum_{i=1}^{\log F} \frac{a_i + b_i}{2}\right]+ (T-2D\log F).
\end{align*}
The key observation behind this fact is that for $\tau\leq 2D\log F$,
\[
    \textstyle Q_\tau = (B_0 - \sum_{t=1}^\tau c_t(x_t)) \cdot (a_{\lceil\tau/2D\rceil} \cdot \1(\text{$\tau$ first half}) + b_{\lceil\tau/2D\rceil} \cdot \1(\text{$\tau$ second half}))) + \sum_{t=1}^\tau r_t(x_t)
\]
remains a martingale, even when $x_t$ is allowed to depend on $r_t$: the future value of money remains equal in expectation to its current value, even if the algorithm knows the current round's reward.

The difference in reward is then
\[
    \textstyle \Omega(D) \cdot \EE[\sum_{i=1}^{\log F} \abs{a_i - b_i}]
\]
Note that with step size $\varepsilon = \Theta(1 / \sqrt{\log F})$, the random walk does not terminate after $2\log F$ steps with constant probability.
It follows that any algorithm's regret is $\Omega(D \sqrt{\log F})$.
\end{proof}

Together with the standard no-regret lower bound, we get that in the OLRC setting, no learner can guarantee regret $o(\max\{D, T^{1/2}\} \cdot \sqrt{\log F})$.

\begin{theorem}[\cite{braverman2025newbenchmark}]
For all $D \geq 1$ and $F \geq 2$ satisfying $T\geq 2D\log F$, there exists a budget $B$, a baseline spending pattern, a set of $F$ experts that are $D$-pacing with respect to this baseline spending pattern, and a sequence of (random) rewards and costs such that the expected regret of any budget-feasible fractional-play learning algorithm is at least $\Omega(\max\{D, T^{1/2}\} \cdot \sqrt{\log F})$.
\end{theorem}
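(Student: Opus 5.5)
The bound is a maximum of two terms, so I would build one instance family for each term and then glue them in time. The $\Omega(D\sqrt{\log F})$ part comes directly from Theorem~\ref{theorem:lower-bound-number-experts-dependence}. That result holds even with pre-decision feedback, so it holds a fortiori in the OLRC regime: an OLRC learner can be simulated by an ORA learner that ignores the current round's vectors. It uses $B=T/2$, the uniform baseline, and $F$ experts that are $D$-pacing, and it needs $T\ge 2D\log F$. For the $\Omega(\sqrt{T\log F})$ part I would embed the classical experts lower bound into a budget-constrained instance in which the budget is never binding and every expert is $0$-pacing, hence $D$-pacing.

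\textbf{The experts-style instance.} Take two actions $\top,\perp$ plus $F$ experts, each recommending a deterministic action from a fixed pool in every round. Give every non-null action cost exactly $1/2$ in every round, so each expert's spending pattern is identically $1/2$. Choose the baseline $b_t=1/2$ and $B=T/2$; each expert then has $\kol=0$ to $b$. Rewards are i.i.d.\ Bernoulli$(1/2)$ across experts' recommended actions, as in the standard construction. Concretely, I would use $F$ non-null actions, one per expert, each with cost $1/2$, and independent Bernoulli$(1/2)$ rewards. The learner cannot see round-$t$ rewards before choosing, so its expected reward is at most $T/2\cdot\frac12$ times its mass on non-null actions, which is at most $T/4$. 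The best of $F$ independent $\mathrm{Bin}(T,1/2)$ sums exceeds $T/2$ by $\Omega(\sqrt{T\log F})$, which gives regret $\Omega(\sqrt{T\log F})$ for $T\gtrsim\log F$.

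The spending-to-reward accounting needs care. A learner playing non-null actions always pays $1/2$ per unit of mass. Its budget $T/2$ is therefore never binding, and its reward is at most $\frac12\sum_t(\text{mass}_t)\le T/2$ in expectation. The experts' best reward is about $T/2+\Theta(\sqrt{T\log F})$. So the learner's expected regret is $\Omega(\sqrt{T\log F})$, exactly as in the classical bound. The reward-to-cost ratio assumption holds with $\alpha=2$.

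\textbf{Combining.} Since $\max\{x,y\}\le x+y$, it suffices to show that regret $\Omega(D\sqrt{\log F})$ and regret $\Omega(\sqrt{T\log F})$ are each unavoidable. To realize a single instance, as the statement's ``there exists a budget $B$'' suggests, I would pick whichever construction gives the larger bound. Since both use $B=T/2$ and the uniform baseline, either choice is admissible. If one insists on one instance achieving the sum, I would split the horizon into two halves, run each construction on a half with budget $T/4$ each, and reveal the half-boundary. The main obstacle in that variant is preventing the learner from transferring budget between halves. I would handle it by using the padding idea from Theorem~\ref{theorem:lower-bound-number-experts-dependence}, which makes saved money worthless, and by noting that in the experts half the learner gains nothing from extra budget. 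Simply taking the maximum avoids this issue, and I would present that as the proof.
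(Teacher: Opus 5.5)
Your proposal is correct and follows the paper's route: the paper gets this theorem by combining Theorem~\ref{theorem:lower-bound-number-experts-dependence} (which already holds with pre-decision feedback, hence for OLRC learners) with the classical $\Omega(\sqrt{T\log F})$ experts lower bound, and your constant-cost-$1/2$ instance with $0$-pacing experts and a non-binding budget $B=T/2$ simply makes that second ingredient explicit. One small fix: in that instance the learner's expected reward is bounded by $T/2$, not $T/4$ as your first paragraph says (your second paragraph has it right), and what actually delivers the maximum is picking the instance according to whether $D\ge\sqrt T$, not the inequality $\max\{x,y\}\le x+y$.
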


\textbf{AI Disclosure}: 
The authors used OpenAI's ChatGPT extensively in the development of this work. The authors independently came up with an equivalent algorithm for the warm-up example in \Cref{sec:warm-up-example}, and ChatGPT found an equivalent formulation of the algorithm which is generalizable to broader settings. In particular, the main algorithms and analysis in \Cref{sec:core-algorithms} and \Cref{sec:extensions} presented in the paper were generated and written through interactions with ChatGPT. ChatGPT was also used to investigate related literature and improve the writing. The authors subsequently checked and verified all the mathematical claims, algorithms, proofs, and references appearing in the submitted manuscript and take full responsibility for their correctness and originality.

\bibliographystyle{plainnat}
\bibliography{references} 


\appendix

\section{Summary of Guarantees for One Resource}
\label{appendix:summary-of-guarantees}
Suppose there is a single resource, and suppose the parameters 
$\alpha$ and $V$ in \Cref{assumption:soft-conversion} are fixed constants. Let $D$ be the pacing or subpacing width of the experts and assume $D\ge1$, $F\ge2$, and $T\ge1$. Let $B$ be the budget. Write $\Reg_T^f$ for regret against expert $f$ and
$\Reg_T:=\max_f\Reg_T^f$.  The notation $\widetilde O_T$ only suppresses polylogarithmic factors in $T$, so all dependence on $F$ remains explicit.  Under
these simplifications, the main guarantees are:
\begin{itemize}
    \item \textbf{Fractional play, $D$-pacing/subpacing experts.} \Cref{algorithm:core-ORA} and \Cref{algorithm:core-OLRC} (and optionally with Extension 1 in \Cref{subsec:subpacing-experts}) satisfy fractional budget feasibility, and simultaneously for every expert
    $f$,
    \[
      \Reg_T^f=O\!\left(D\sqrt{\log F}\right)
      \qquad\text{with pre-decision feedback},
    \]
    and
    \[
      \Reg_T^f=O\!\left((\sqrt T+D)\sqrt{\log F}\right)
      \qquad\text{without pre-decision feedback}.
    \]
    Note that the same rates hold for $D$-pacing and $D$-subpacing experts, by \Cref{thm:raising}.

    \item \textbf{Fractional play, subpacing learning algorithm.}
   \Cref{algorithm:core-ORA} and \Cref{algorithm:core-OLRC} with Extension 2 in \Cref{subsec:extension-subpacing-algo} satisfy fractional budget feasibility and
    achieve
    \[
      \Reg_T^f= O\!\left(D\sqrt{\log F}\right)
      \qquad\text{with pre-decision feedback},
    \]
    and
    \[
      \Reg_T^f= O\!\left((\sqrt T+D)\sqrt{\log F}\right)
      \qquad\text{without pre-decision feedback}.
    \]
    Their fractional spending path can be made $W$-subpacing, with
    \[
      W=\widetilde O_T\!\left(D\sqrt{\log F}\right)
      \qquad\text{with pre-decision feedback},
    \]
    and 
    \[
      W=\widetilde O_T\!\left(
      D+\frac{D^2\sqrt{\log F}}{\sqrt T+D}
      \right)
      \qquad\text{without pre-decision feedback}
    \]

    \item \textbf{Sampled play under full-information feedback.}
    \Cref{algorithm:core-ORA} and \Cref{algorithm:core-OLRC} with Extension 3 in \Cref{sec:sampled-feasibility} satisfy the budget constraint with sampled actions, and  with probability at least
    $1-\delta$, it adds at most
    \[
      O\!\left(
      \sqrt{\bigl(1+\min\{B,T\}\bigr)\log\frac1\delta}
      +\log\frac1\delta
      \right)
    \]
    to the fractional regret bounds.  If the fractional path is
    $W$-subpacing, the realized sampled path has the same $W$ plus a
    deviation of this order.


\end{itemize}

\end{document}